\newtheorem{theorem}{Theorem}
\newtheorem{lemma}{Lemma}
\newtheorem{remark}{Remark}
\newtheorem{corollary}{Corollary}
\newtheorem*{rep@theorem}{\rep@title}
\newcommand{\newreptheorem}[2]{%
\newenvironment{rep#1}[1]{%
 \def\rep@title{#2 \ref{##1}}%
 \begin{rep@theorem}}%
 {\end{rep@theorem}}}
\DeclareMathOperator*{\argmax}{arg\,max}
\let\save@mathaccent\mathaccent
\newcommand*\if@single[3]{%
  \setbox0\hbox{${\mathaccent"0362{#1}}^H$}%
  \setbox2\hbox{${\mathaccent"0362{\kern0pt#1}}^H$}%
  \ifdim\ht0=\ht2 #3\else #2\fi
  }
\newcommand*\rel@kern[1]{\kern#1\dimexpr\macc@kerna}
\newcommand*\widebar[1]{\@ifnextchar^{{\wide@bar{#1}{0}}}{\wide@bar{#1}{1}}}
\newcommand*\wide@bar[2]{\if@single{#1}{\wide@bar@{#1}{#2}{1}}{\wide@bar@{#1}{#2}{2}}}
\newcommand*\wide@bar@[3]{%
  \begingroup
  \def\mathaccent##1##2{%
    \let\mathaccent\save@mathaccent
    \if#32 \let\macc@nucleus\first@char \fi
    \setbox\z@\hbox{$\macc@style{\macc@nucleus}_{}$}%
    \setbox\tw@\hbox{$\macc@style{\macc@nucleus}{}_{}$}%
    \dimen@\wd\tw@
    \advance\dimen@-\wd\z@
    \divide\dimen@ 3
    \@tempdima\wd\tw@
    \advance\@tempdima-\scriptspace
    \divide\@tempdima 10
    \advance\dimen@-\@tempdima
    \ifdim\dimen@>\z@ \dimen@0pt\fi
    \rel@kern{0.6}\kern-\dimen@
    \if#31
      \overline{\rel@kern{-0.6}\kern\dimen@\macc@nucleus\rel@kern{0.4}\kern\dimen@}%
      \advance\dimen@0.4\dimexpr\macc@kerna
      \let\final@kern#2%
      \ifdim\dimen@<\z@ \let\final@kern1\fi
      \if\final@kern1 \kern-\dimen@\fi
    \else
      \overline{\rel@kern{-0.6}\kern\dimen@#1}%
    \fi
  }%
  \macc@depth\@ne
  \let\math@bgroup\@empty \let\math@egroup\macc@set@skewchar
  \mathsurround\z@ \frozen@everymath{\mathgroup\macc@group\relax}%
  \macc@set@skewchar\relax
  \let\mathaccentV\macc@nested@a
  \if#31
    \macc@nested@a\relax111{#1}%
  \else
    \def\gobble@till@marker##1\endmarker{}%
    \futurelet\first@char\gobble@till@marker#1\endmarker
    \ifcat\noexpand\first@char A\else
      \def\first@char{}%
    \fi
    \macc@nested@a\relax111{\first@char}%
  \fi
  \endgroup
}
\newcommand{\bottomrulec}{%
  \arrayrulecolor{gray!10}\specialrule{\belowrulesep}{0pt}{0pt}
  \arrayrulecolor{black}\specialrule{\heavyrulewidth}{0pt}{0pt}
  \arrayrulecolor{black}
}
\newcommand{\toprulec}{%
  \arrayrulecolor{black}\specialrule{\heavyrulewidth}{\aboverulesep}{0pt}
  \arrayrulecolor{white}\specialrule{\belowrulesep}{0pt}{0pt}
  \arrayrulecolor{black}
}
\newcommand{\midrulec}{%
  \arrayrulecolor{white}\specialrule{\aboverulesep}{0pt}{0pt}
  \arrayrulecolor{black}\specialrule{\lightrulewidth}{0pt}{\belowrulesep}
}
\newcommand{\opt}{\mathsf{\normalfont{OPT}}}
\newcommand{\eqdef}{{\triangleq}}
\def\makecalletters#1{%
\expandafter\newcommand\csname cal#1\endcsname{\mathcal{#1}}}
\def\makehatbfletters#1{%
\expandafter\newcommand\csname #1hat\endcsname{\widehat{\mathbf{#1}}}}
\def\maketildebfletters#1{%
\expandafter\newcommand\csname #1tilde\endcsname{\widetilde{\mathbf{#1}}}}
\edef\y{\@Alph\count@}%
\newcommand{\Tsvd}{T_{\texttt{SVD}}}
\newcommand{\Tsketch}{T_{\texttt{SKETCH}}}
\newcommand{\X}{\mathbf{X}}
\newcommand{\Y}{\mathbf{X}}
\newcommand{\A}{\mathbf{A}}
\newcommand{\Xsharp}{\X_{\sharp}}
\newcommand{\Xstar}{\X_{\star}}
\newcommand{\Xset}{\mathcal{X}}
\def\x{\mathbf{x}}
\def\R{\mathbb{R}}
\def\Z{\mathbb{Z}}
\def\L{\mathbf{\Lambda}}
\def\C{\mathcal{C}}
\def\bC{\mathbf{C}}
\def\N{\mathcal{N}}
\def\W{\mathbf{W}}
\def\A{\mathbf{A}}
\def\Y{\mathbf{Y}}
\def\y{\mathbf{y}}
\def\U{\mathbf{U}}
\newcommand{\dimension}{d}
\newcommand{\transpose}{{\top}} 
\newcommand{\T}{{\transpose}} 
\newcommand{\frob}{{\textnormal{F}}}
\newcommand{\supp}{\text{supp}}
\newcommand{\Tr}{\text{\textsc{Tr}}\mathopen{}}
\newcommand{\trace}{\Tr}
\newcommand{\matV}{{\bf V}}
\newcommand{\OPT}{{\normalfont{\textsf{{\mbox{OPT}}}}}\xspace}
\newcommand{\range}[2][{1}]{\lbrace{#1,\hdots,#2}\rbrace\xspace}
\title{\huge Sparse PCA via Bipartite Matchings}
\author{
Megasthenis~Asteris$^{\alpha}$, Dimitris Papailiopoulos$^{\beta}$, Anastasios~Kyrillidis$^{\alpha}$
 Alexandros~G.~Dimakis$^{\alpha}$\\
$^{\alpha}$UT Austin, $^{\beta}$UC Berkeley
} 
\date{August 2015}
\newcommand{\PTAS}{\textsf{PTAS}}
\begin{document}

\maketitle

\begin{abstract}
We consider the following multi-component sparse PCA problem:
given a set of data points, we seek to extract a small number of sparse components with \emph{disjoint} supports that jointly capture the maximum possible variance.
These components can be computed one by one, repeatedly solving the single-component problem and deflating the input data matrix, but as we show this greedy procedure is suboptimal.
We present a novel algorithm for sparse PCA that jointly optimizes multiple disjoint components. 
The extracted features capture variance that lies within a multiplicative factor arbitrarily close to $1$ from the optimal.
Our algorithm is combinatorial and computes the desired components by solving multiple instances of the bipartite maximum weight matching problem.
Its complexity grows as a low order polynomial in the ambient dimension of the input data matrix, but exponentially in its rank.
However, it can be effectively applied on a low-dimensional sketch of the data; this allows us to obtain polynomial-time approximation guarantees via spectral bounds.
We evaluate our algorithm on real data-sets and empirically demonstrate that 
in many cases it outperforms existing, deflation-based approaches.
\end{abstract}

\section{Introduction}

Principal Component Analysis (PCA) reduces the dimensionality of a data set by projecting it onto principal subspaces spanned by the leading eigenvectors of the sample covariance matrix. 
Sparse PCA is a useful variant that offers higher data interpretability~\cite{kaiser1958varimax,jolliffe1995rotation,jolliffe2003modified}, a 
property that is sometimes desired even at the cost of statistical fidelity~\cite{zou2006sparse}. Furthermore, when 
the obtained features are used in subsequent learning tasks, sparsity potentially leads to better generalization error~\cite{boutsidis2011sparse}.

Given a real ${{n} \times \dimension}$ data matrix~$\mathbf{S}$ representing~${n}$ 
centered data points supported on $\dimension$ features, 
the leading sparse principal component of the data set is the sparse vector that maximizes the explained variance:
\begin{align}
	\x_\star\;
	\eqdef
	\argmax_{
		\substack{
			\|\x\|_2 = 1, \|\x\|_0 = {s}}
	} \x^{\T} \A \x,
\label{spca-single-def}
\end{align}
where $\mathbf{A} = \sfrac{1}{{n}} \cdot \mathbf{S}^{\T}\mathbf{S}$ is the $\dimension \times \dimension$ empirical covariance matrix.
The sparsity constraint makes the problem NP-hard and hence computationally intractable in general, and hard to approximate within some small constant \cite{siuon2015on}. 
A significant volume of prior work has focused on algorithms that approximately solve the optimization problem 
\cite{jolliffe1995rotation,
jolliffe2003modified,
zou2006sparse,
journee2010generalized,
moghaddam2006spectral,
d2008optimal, zhang2012sparse, d2007direct,
yuan2013truncated,
sigg:2008,
papailiopoulos2013sparse,
spanspca:IT2014},
while a large volume of theoretical results  has been established under planted statistical models
\cite{amini2008high,
ma2013sparse,
d2012approximation,
cai2012sparse,
deshpande2013sparse,
berthet2013optimal,
berthet2013complexity,
WBS14-more_SPCA_planted-hardness,
KNV15-SDP-Sparse_PCA}.

In most practical settings, we tend to go beyond computing a single sparse PC.
Contrary to the single-component problem, there has been limited work on computing multiple components.
The scarcity is partially attributed to conventional PCA wisdom:
multiple components can be computed one-by-one, repeatedly, by solving the single-component sparse PCA problem~\eqref{spca-single-def} and \emph{deflating} the input data to remove information captured by previously extracted components~\cite{mackey2009deflation}. 
In fact, the multi-component version of sparse PCA is not uniquely defined in the literature.
Different deflation-based approaches can lead to different outputs: extracted components may or may not be orthogonal, while they may have disjoint or overlapping supports~\cite{mackey2009deflation}.
In the statistics literature, 
where the objective is typically to recover a ``true" principal subspace,
a branch of work has focused on the ``subspace row sparsity"~\cite{vu2012minimax},
an assumption that leads to sparse components all supported on the same set of variables. 
While in~\cite{magdon:linearencoders}, the authors discuss an alternative perspective on the fundamental objective of the sparse PCA problem.

In this work, 
we develop a novel algorithm for the multi-component sparse PCA problem with disjoint supports. 
Formally, we are interested in finding $k$ components that are $s$-sparse, have disjoint supports,
and jointly maximize the explained variance:	
\begin{align}
	\Xstar
	\;\eqdef\;
	\argmax_{\X \in \Xset_{k}} 
	\trace\bigl(\X^\top \A \X \bigr),
	\label{spca-multi-def}
\end{align}
where the feasible set is
\begin{align}
	\Xset_{k} 
	\;\eqdef\;
	\bigl\lbrace 
		\X \in \mathbb{R}^{\dimension \times {k}}: 
		\|\X^j\|_{2} = 1, 
		\|\X^j\|_{0} = {s},
		{\supp(\X^i) \cap \supp(\X^j) = \emptyset},
		\;\forall\,j \in [k], i < j
	\bigr\rbrace,
	\nonumber
\end{align}
with $\X^{j}$ denoting the $j$th column of $\X$.
The number $k$ of the desired components is a user defined parameter and we consider it to be a small constant.

Contrary to the greedy sequential approach that repeatedly uses deflation, our algorithm  \textit{jointly} computes all the vectors in $\X$,
and comes with theoretical approximation guarantees.
We note that even if one could solve each single-component sparse PCA problem~\eqref{spca-single-def} {\it exactly}, greedy deflation can be highly suboptimal. 
We show this through a simple example in Section~\ref{sec:deflation-suboptimality}.

\paragraph{Our Contributions}
\begin{enumerate}[leftmargin=1.5em]
\item
We develop an algorithm that provably approximates the solution to the sparse PCA problem~\eqref{spca-multi-def} within a multiplicative factor arbitrarily close to $1$.
To the best of our knowledge, this is the first algorithm that jointly optimizes multiple components with disjoint supports, provably.
Our algorithm is combinatorial; it recasts sparse PCA as multiple instances of {\it bipartite maximum weight matching} on graphs determined by the input data.
\item
	The computational complexity of our algorithm grows as a low order polynomial in the ambient dimension $\dimension$,
	but is exponential in the intrinsic dimension of the input data, 
	\textit{i.e.}, the rank of~$\A$. 
To alleviate the impact of this dependence, 
 our algorithm can be applied on a low-dimensional sketch of the input data to obtain an approximate solution to~\eqref{spca-multi-def}.
 This extra level of approximation introduces an additional penalty in our theoretical approximation guarantees,
 which naturally depends on the quality of the sketch and, in turn, the spectral decay of~$\mathbf{A}$.
We show how these bounds further translate to an additive \PTAS{} (polynomial-time approximation scheme) for sparse PCA.
Our additive PTAS outputs an approximate solution with explained variance of at least $\OPT-\epsilon 	\cdot s$, for any sparsity $s \in\{1,\ldots, n\}$, any constant error $\epsilon>0$ and any $k = O(1)$ number of orthogonal components.\footnote{Here, $\OPT$ is the explained variance captured by the optimal set of $k$ components that are $s$ sparse and have disjoint supports.}
\item
	We empirically evaluate our algorithm on real datasets,
	and compare it against state-of-the-art methods for the single-component sparse PCA problem~\eqref{spca-single-def} in conjunction with the appropriate deflation step.
	In many cases, our algorithm---as a result of jointly optimizing over multiple components---leads to significantly improved results,  and outperforms deflation-based approaches. 
\end{enumerate}

\section{Sparse PCA through Bipartite Matchings}
Our algorithm approximately solves the constrained maximization~\eqref{spca-multi-def} on a $\dimension \times \dimension$ rank-$r$ PSD matrix~${\A}$ within a multiplicative factor arbitrarily close to $1$.
It operates by recasting the maximization into multiple instances of the bipartite maximum weight matching problem.
Each instance ultimately yields a feasible solution:
a set of $k$ components that are $s$-sparse and have disjoint supports.
The algorithm examines these solutions, and outputs the one that maximizes the explained variance, \textit{i.e.}, the quadratic objective in~\eqref{spca-multi-def}.

The computational complexity of our algorithm grows as a low order polynomial in the ambient dimension~$d$ of the input, but exponentially in its rank~$r$.
Despite the unfavorable dependence on the rank, it is unlikely that a substantial improvement can be achieved in general~\cite{siuon2015on}.
However, decoupling the dependence on the ambient and the intrinsic dimension of the input has an interesting ramification; 
instead of the original input ${\A}$, 
our algorithm can be applied on a low-rank surrogate to obtain an approximate solution, alleviating the dependence on~$r$.
We discuss this in Section~\ref{sec:spca-on-sketch}, and present the approximation bound that this allows us to obtain.

Let ${\A} = {\mathbf{U}}{\mathbf{\Lambda}}{\mathbf{U}}^{\transpose}$
denote the truncated eigenvalue decomposition of ${\A}$;
${\mathbf{\Lambda}}$ is a diagonal $r \times r$ whose $i$th diagonal entry is equal to the $i$th largest eigenvalue of ${\A}$,
while the columns of ${\mathbf{U}}$ are the corresponding eigenvectors. 
By the Cauchy-Schwartz inequality, 
for any $\mathbf{x} \in \mathbb{R}^{\dimension}$,  
   \begin{align}
	  \mathbf{x}^{\transpose}{\A}\mathbf{x}
	  =
	  \bigl\|
	  	{\mathbf{\Lambda}}^{1/2}{\mathbf{U}}^{\transpose} \mathbf{x}
	  \bigr\|_{2}^{2}
	  \ge
	  \bigl\langle
		 {\mathbf{\Lambda}}^{1/2}{\mathbf{U}}^{\transpose} \mathbf{x},\,
		 \mathbf{c}
	  \bigr\rangle^{2},
	  \quad
	  \forall\; 
	  \mathbf{c} \in \mathbb{R}^{r}: \|\mathbf{c}\|_{2}=1.
	  \label{cauchy-single}
   \end{align}
   In fact, equality in~\eqref{cauchy-single} can always be achieved for $\mathbf{c}$ colinear to ${\mathbf{\Lambda}}^{1/2}{\mathbf{U}}\mathbf{x} \in \mathbb{R}^{r}$ and in turn
   \begin{align}
   		\mathbf{x}^{\transpose}{\A}\mathbf{x}
	  	=
		\max_{\mathbf{c} \in {\mathbb{S}_{2}^{r-1}}}
		\bigl\langle
		 \mathbf{x},\,
		 {\mathbf{U}}{\mathbf{\Lambda}}^{1/2}\mathbf{c}
	  \bigr\rangle^{2},
	  \nonumber
   \end{align}
   where $\mathbb{S}_{2}^{r-1}$ denotes the $\ell_{2}$-unit sphere in $r$ dimensions. 
   More generally, for any $\X \in \mathbb{R}^{\dimension \times {k}}$,
      \begin{align}
   \trace\left(
		\X^{\transpose}{\A} \X
   \right)
   =
   \sum_{j=1}^{{k}}
   	{\X^{j}}^{\transpose}{\A} \X^{j}
	=
		\max_{\mathbf{C}:{\mathbf{C}^{j}} \in {{\mathbb{S}_{2}^{r-1}}} \forall j}
	\sum_{j=1}^{{k}}
	\bigl\langle
		\X^{j},\,
		{\mathbf{U}}{\mathbf{\Lambda}}^{1/2}\mathbf{C}^{j}
	\bigr\rangle^{2}.
	\label{mt-trace-alt-def}
	\end{align}
Under the variational characterization of the trace objective in~\eqref{mt-trace-alt-def},
the sparse PCA problem~\eqref{spca-multi-def} can be re-written as a joint maximization over the variables $\X$ and $\mathbf{C}$ as follows:
\begin{align}
	\max_{\X \in \Xset_{k}} 
	\trace\bigl(\X^\top \A \X \bigr)
	\;=\;
	\max_{\X \in \mathcal{X}_{k}}
	\max_{\mathbf{C}:{\mathbf{C}^{j}} \in {{\mathbb{S}_{2}^{r-1}}} \forall j}
	\sum_{j=1}^{{k}}
	\bigl\langle
		\X^{j},\,
		{\mathbf{U}} {\mathbf{\Lambda}}^{1/2}\mathbf{C}^{j}
	\bigr\rangle^{2}.
	\label{double-maximization}
\end{align}
The alternative formulation of the sparse PCA problem in~\eqref{double-maximization}
takes a step towards decoupling the dependence of the optimization on the ambient and intrinsic dimensions $\dimension$ and ${r}$, respectively.
The motivation behind the introduction of the auxiliary variable~$\mathbf{C}$ will become clear in the sequel.

For a given~$\mathbf{C}$,
the value of~$\X \in \Xset_{k}$ that maximizes the objective in~\eqref{double-maximization} for that $\mathbf{C}$ is
\begin{align}
	\widehat{\X}
	\;\eqdef\; 
	\argmax_{
		\substack{\X \in \Xset_{{k}}}
	}
	\sum_{j=1}^{{k}}
		\left\langle \X^j, \W^j \right\rangle^2,
	\label{local-problem}
\end{align}
where $\W \eqdef {\U} {\L}^{1/2} \mathbf{C}$ is a real $\dimension \times {k}$ matrix.
The constrained, non-convex maximization~\eqref{local-problem} plays a central role in our developments.
We will later describe a combinatorial ${O\mathopen{}(\dimension \cdot ({s}\cdot {k})^{2})}$ procedure to efficiently compute \scalebox{0.90}{$\widehat{\X}$},
reducing the maximization to an instance of the bipartite maximum weight matching problem.
For now, however, let us assume that such a procedure exists.

Let ${\X}_{\star}$, ${\mathbf{C}}_{\star}$ be the pair that attains the maximum in~\eqref{double-maximization};
in other words,~${\X}_{\star}$ is the desired solution to the sparse PCA problem.
If the optimal auxiliary variable ${\mathbf{C}}_{\star}$ was known,
then we would be able to recover~${\X}_{\star}$ by solving the maximization~\eqref{local-problem} for~$\mathbf{C}={\mathbf{C}}_{\star}$.
Of course, ${\mathbf{C}}_{\star}$ is not known, and it is not possible to exhaustively consider all possible values in the domain of $\mathbf{C}$.
Instead, we examine only a finite number of possible values of $\mathbf{C}$ over a  fine discretization of its domain.
In particular, let $\mathcal{N}_{\mathsmaller{\sfrac{\epsilon}{2}}}(\scalebox{0.85}{$\mathbb{S}_{2}^{r-1}$})$ denote a finite \mbox{$\sfrac{\epsilon}{2}$-net} of the $r$-dimensional $\ell_{2}$-unit sphere;
for any point in $\mathbb{S}_{2}^{r-1}$, the net contains a point within an $\sfrac{\epsilon}{2}$ radius from the former. 
There are several ways to construct such a net~\cite{matouvsek2002lectures}.
Further, let 
$[\mathcal{N}_{\mathsmaller{\sfrac{\epsilon}{2}}}(\scalebox{0.85}{$\mathbb{S}_{2}^{r-1}$})]^{\otimes {k}} \subset \mathbb{R}^{\dimension \times k}$ 
denote the $k$th Cartesian power of the aforementioned $\sfrac{\epsilon}{2}$-net.
By construction, this collection of points contains a matrix $\mathbf{C}$ that is column-wise close to ${\mathbf{C}}_{\star}$.
In turn, 
it can be shown using the properties of the net, 
that the candidate solution $\X \in \Xset_{k}$ obtained through~\eqref{local-problem} at that point $\mathbf{C}$ will be approximately as good as the optimal ${\X}_{\star}$ in terms of the quadratic objective in~\eqref{spca-multi-def}.

\setlength{\columnsep}{1.5em}
\begin{wrapfigure}[14]{R}[0pt]{0.6\textwidth}
\begin{minipage}{0.6\textwidth}
\vspace{-1.0em}
\small
\begin{algorithm}[H] 
   \caption{Sparse PCA (Multiple disjoint components)}
   \label{algo:lowrank-symmetric-multi-component}
   \begin{algorithmic}[1]
   		\INPUT: PSD $\dimension \times \dimension$ rank-$r$ matrix ${\A}$, ${\epsilon \in (0,1)}$, ${k} \in \Z_{+}$.
		\OUTPUT: $\widebar{\X} \in \Xset_{k}$
		\hfill\COMMENT{Theorem~\ref{thm:spca-multi-rank-r-guarantees}}
   		\STATE $\C \gets \lbrace \rbrace$ 
   		\STATE $\left[\U, \L \right] \leftarrow \texttt{EIG}({\A})$
   		\FOR {\textbf{each} $\mathbf{C} \in [\N_{\mathsmaller{\sfrac{\epsilon}{2}}}(\scalebox{0.85}{$\mathbb{S}_{2}^{r-1}$})]^{\otimes {k}}$}
      		\STATE $\W \gets \U\L^{1/2}\mathbf{C}$  
		\hfill \COMMENT{$\W \in \R^{\dimension \times {k}}$}
      		\STATE 
			\scalebox{0.95}{
				$\widehat{\X} \gets \argmax_{\substack{\X \in \Xset_{k} }} \sum_{j=1}^{{k}} \left\langle \X^j, \W^j \right\rangle^2$%
			}
		\hfill \COMMENT{Alg.~\ref{algo:local-candidate}}
      		\STATE $\C \gets \C \cup \bigl\lbrace{\widehat{\X}}\bigr\rbrace$
   		\ENDFOR
		\STATE $\widebar{\X} \gets \argmax_{\substack{ \X \in \C } } \trace\left(\X^{\T} {\A} \X \right)$
   \end{algorithmic}
\end{algorithm}
\end{minipage}
\end{wrapfigure}
All above observations yield a procedure for approximately solving the sparse PCA problem~\eqref{spca-multi-def}.
The steps are outlined in Algorithm~\ref{algo:lowrank-symmetric-multi-component}.
Given the desired number of components $k$ and an accuracy parameter $\epsilon\in (0,1)$,
the algorithm generates a net $[\mathcal{N}_{\mathsmaller{\sfrac{\epsilon}{2}}}(\scalebox{0.85}{$\mathbb{S}_{2}^{r-1}$})]^{\otimes {k}}$ and iterates over its points. 
At each point $\mathbf{C}$,
it computes a feasible solution for the sparse PCA problem --
a set of $k$ $s$-sparse components --
by solving the maximization in~\eqref{local-problem} via a procedure (Alg.~\ref{algo:local-candidate}) that will be described in the sequel.
The algorithm collects the candidate solutions identified at the points of the net.
The best among them achieves an objective in~\eqref{spca-multi-def} that provably lies close to optimal. 
More formally,
\begin{theorem}
   \label{thm:spca-multi-rank-r-guarantees}
   For any  real $\dimension \times \dimension$ rank-$r$ PSD matrix ${\A}$,
   desired number of components ${k}$,
   number ${s}$ of nonzero entries per component, 
   and accuracy parameter $\epsilon \in (0,1)$,
   Algorithm~\ref{algo:lowrank-symmetric-multi-component}
   outputs $\widebar{\X} \in \Xset_{k}$ such that
   \begin{align}
   	\trace\bigl(
		\widebar{\X}^{\transpose} {\A} \widebar{\X}
	 \bigr)
	  \;\ge\;
	  \left(1-\epsilon \right) \cdot
	  \trace\bigl(
	  	{\X}_{\star}^{\transpose} {\A} {\X}_{\star}
	  \bigr),
	  \nonumber
   \end{align}
   where
      $
   {\X}_{\star} \eqdef \argmax_{\X \in \Xset_{k}}
   	\trace\left(\X^{\transpose}{{\A}}\X\right),
   $
   in time 
   $\Tsvd(r) + O\mathopen{}\bigl(
   		\bigl(\tfrac{4}{\epsilon}\bigr)^{r \cdot {{k}}} \cdot \dimension \cdot ({s} \cdot {k})^{2} \bigr)$.
\end{theorem}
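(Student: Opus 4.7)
The plan is to exploit the joint maximization reformulation in~\eqref{double-maximization} and show that, over a sufficiently fine net, one of the candidates $\widehat{\X}$ produced by Algorithm~\ref{algo:lowrank-symmetric-multi-component} is provably close to $\X_\star$ in objective value. Let $(\X_\star, \mathbf{C}_\star)$ be the pair attaining the max in~\eqref{double-maximization}. By the Cauchy--Schwarz equality condition, the optimal $\mathbf{C}_\star$ has columns $\mathbf{C}_\star^{j} = \L^{1/2}\U^\transpose \X_\star^j / \|\L^{1/2}\U^\transpose \X_\star^j\|_2$, so that $\langle \X_\star^j, \U\L^{1/2}\mathbf{C}_\star^j\rangle^2 = {\X_\star^j}^\transpose \A \X_\star^j$ for each $j$. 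By construction of the net $\N_{\sfrac{\epsilon}{2}}(\S_2^{r-1})$, there exists a matrix $\mathbf{C}$ in $[\N_{\sfrac{\epsilon}{2}}(\S_2^{r-1})]^{\otimes k}$ whose columns satisfy $\|\mathbf{C}^j - \mathbf{C}_\star^j\|_2 \le \sfrac{\epsilon}{2}$ for all $j \in [k]$. Algorithm~\ref{algo:lowrank-symmetric-multi-component} visits this $\mathbf{C}$ at some iteration.

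The key step is a perturbation bound at this $\mathbf{C}$. Writing $\mathbf{v}^j \eqdef \L^{1/2}\U^\transpose \X_\star^j$, I have $\langle \mathbf{v}^j, \mathbf{C}_\star^j\rangle = \|\mathbf{v}^j\|_2$ and, via Cauchy--Schwarz on the difference,
\begin{align}
\langle \mathbf{v}^j, \mathbf{C}^j\rangle \;=\; \langle \mathbf{v}^j, \mathbf{C}_\star^j\rangle + \langle \mathbf{v}^j, \mathbf{C}^j - \mathbf{C}_\star^j\rangle \;\ge\; \|\mathbf{v}^j\|_2 \bigl(1 - \tfrac{\epsilon}{2}\bigr). \nonumber
\end{align}
Squaring and summing over $j$ yields $\sum_{j=1}^k \langle \X_\star^j, \W^j\rangle^2 \ge (1-\sfrac{\epsilon}{2})^2 \cdot \trace(\X_\star^\transpose \A \X_\star) \ge (1-\epsilon)\cdot\trace(\X_\star^\transpose \A \X_\star)$, where $\W = \U\L^{1/2}\mathbf{C}$. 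Since $\X_\star$ is feasible for the inner maximization~\eqref{local-problem}, the candidate $\widehat{\X}$ found at this $\mathbf{C}$ satisfies $\sum_j \langle \widehat{\X}^j, \W^j\rangle^2 \ge \sum_j \langle \X_\star^j, \W^j\rangle^2$. Invoking the variational identity~\eqref{mt-trace-alt-def} in the opposite direction (with $\widehat{\X}$ fixed, the trace dominates the inner product value at any particular $\mathbf{C}$), I get
\begin{align}
\trace\bigl(\widehat{\X}^\transpose \A \widehat{\X}\bigr) \;\ge\; \sum_{j=1}^k \bigl\langle \widehat{\X}^j, \W^j\bigr\rangle^2 \;\ge\; (1-\epsilon) \cdot \trace\bigl(\X_\star^\transpose \A \X_\star\bigr). \nonumber
\end{align}
Since $\widebar{\X}$ is chosen as the best candidate across the net, it inherits at least this guarantee, yielding the claimed multiplicative bound.

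For the runtime, the eigendecomposition of~$\A$ costs $\Tsvd(r)$. A standard volumetric argument produces an $\sfrac{\epsilon}{2}$-net of $\S_2^{r-1}$ of size at most $(\sfrac{4}{\epsilon})^r$, and the $k$-fold Cartesian product has size $(\sfrac{4}{\epsilon})^{rk}$. At each net point, forming $\W$ is negligible, and solving the inner maximization~\eqref{local-problem} via Algorithm~\ref{algo:local-candidate} costs $O(\dimension \cdot (sk)^2)$. Multiplying gives the stated complexity.

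The only nontrivial step is the perturbation inequality tying the net resolution to the multiplicative approximation factor; everything else is bookkeeping, net cardinality, and invocation of the inner algorithm whose guarantee is presumed from Algorithm~\ref{algo:local-candidate}. The main obstacle I anticipate is making sure the Cauchy--Schwarz--type bound is carried out with the right constants so that an $\sfrac{\epsilon}{2}$-net suffices to guarantee a $(1-\epsilon)$ factor rather than, say, $(1-\sfrac{\epsilon}{2})^2$ only, but this is resolved cleanly by the simple inequality $(1-\sfrac{\epsilon}{2})^2 \ge 1 - \epsilon$ for $\epsilon \in (0,1)$.
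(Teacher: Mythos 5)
Your proof is correct and follows essentially the same approach as the paper's: approximate $\mathbf{C}_\star$ by a nearby net point, apply a Cauchy--Schwarz perturbation bound to get the $(1-\epsilon/2)^2 \ge 1-\epsilon$ factor, use the optimality of $\widehat{\X}$ in the inner problem against the feasible point $\X_\star$, and lift back to the trace objective via the variational identity. The only cosmetic difference is that you instantiate $\mathbf{C}_\star^j$ explicitly as the normalized $\L^{1/2}\U^\transpose\X_\star^j$ (making the inner product nonnegative from the start), whereas the paper defines $\mathbf{C}_\star$ implicitly as the maximizer and routes through an absolute-value/triangle-inequality rearrangement; the content is the same.
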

Algorithm~\ref{algo:lowrank-symmetric-multi-component} is the first nontrivial algorithm that provably approximates the solution of the sparse PCA problem~\eqref{spca-multi-def}.
According to Theorem~\ref{thm:spca-multi-rank-r-guarantees},
it achieves an objective value that lies within a multiplicative factor from the optimal, arbitrarily close to~$1$.
Its complexity grows as a low-order polynomial in the dimension $d$ of the input, but exponentially in the intrinsic dimension $r$.
Note, however, that it can be exponentially faster compared to the ${O(d^{s\cdot k})}$ brute force approach that exhaustively considers all candidate supports for the $k$ sparse components. 
The complexity of our algorithm follows from the cardinality of the net
and the complexity of Algorithm~\ref{algo:local-candidate},
the subroutine that solves the constrained maximization~\eqref{local-problem}. 
The latter is a key ingredient of our algorithm, and is discussed in detail in the next subsection.
A formal 
proof of Theorem~\ref{thm:spca-multi-rank-r-guarantees} is provided in Section~\ref{sec:proof-algo-rank-r-spca}.

\subsection{Sparse Components via Bipartite Matchings}
\vspace{-0.2em}
In the core of Algorithm~\ref{algo:lowrank-symmetric-multi-component} lies 
Algorithm~\ref{algo:local-candidate}, a procedure that solves the constrained maximization in~\eqref{local-problem}.
The algorithm breaks down the maximization into two stages.
First, it identifies the support of the optimal solution \scalebox{0.9}{$\widehat{\X}$}.
Determining the support reduces to an instance of the maximum matching problem on a weighted bipartite graph $G$. 
Then, it recovers the exact values of the nonzero entries in \scalebox{0.92}{$\widehat{\X}$} based on the Cauchy-Schwarz inequality. 
In the sequel, we provide a brief description of Algorithm~\ref{algo:local-candidate}, leading up to its guarantees in Lemma~\ref{lemma:algo-local-candidate-guarantees}.

Let $\mathcal{I}_j \eqdef \supp(\widehat{\X}^j)$
be the support of the $j$th column of $\widehat{\X}$, ${j = 1,\hdots, {k}}$. 
The objective in \eqref{local-problem} becomes
\begin{align}
	\sum_{j=1}^{{k}}
		\bigl\langle 
			\widehat{\X}^j, \W^j 
		\bigr\rangle^{2} 
	=
	\sum_{j=1}^{{k}}
	\Bigl( 
		\sum_{i \in \mathcal{I}_j} 
		\widehat{X}_{ij} \cdot W_{ij} 
	\Bigr)^{2} 
	\le
	\sum_{j=1}^{{k}} 
		\sum_{i \in \mathcal{I}_{j}}W_{ij}^{2}.
	\label{local-objective-upper-bound}
\end{align} 
The last inequality is an application of the Cauchy-Schwarz Inequality and the constraint 
${\|\X^j \|_{2} = 1}$ $\forall\, j \in \range{{k}}$.
In fact, if an oracle reveals the supports $\mathcal{I}_j$, ${j=1,\hdots,{k}}$, the upper bound in~\eqref{local-objective-upper-bound} can always be achieved by setting the nonzero entries of \scalebox{0.9}{$\widehat{\X}$} as in Algorithm~\ref{algo:local-candidate} (Line $6$). 
Therefore, 
the key in solving~\eqref{local-problem} is determining the collection of supports
to maximize the right-hand side of~\eqref{local-objective-upper-bound}.

\setlength{\columnsep}{1.5em}
\begin{wrapfigure}[16]{R}[0pt]{0.42\textwidth}
	\vspace{-0.9em}
	\centering
	   \includegraphics[width=0.38\textwidth]{./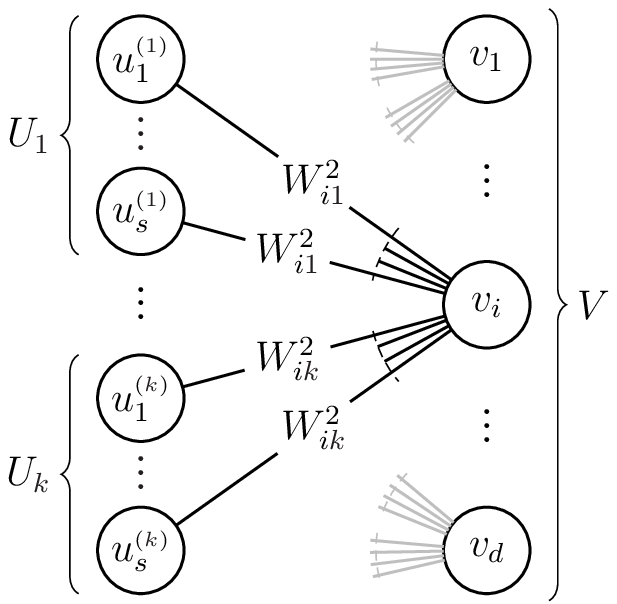} 
	   \caption{
	   	The graph~$G$ generated by Alg.~\ref{algo:local-candidate}.
		It is used to determine the support of the solution 
		\protect\scalebox{0.9}{$\widehat{\X}$} in~\eqref{local-problem}. 
	}
	\label{fig:spca-bipartite}
\end{wrapfigure}
By constraint,
the sets $\mathcal{I}_{j}$ must be pairwise disjoint, each with cardinality~${s}$.
Consider a weighted bipartite graph 
${G=\bigl({U = \lbrace U_{1}, \hdots, U_{k}\rbrace},V,E \bigr)}$ 
constructed as follows%
\footnote{%
The construction is formally outlined in Algorithm~\ref{algo:graph-gen} in Section~\ref{sec:gen-bipart-graph}.
}
 (Fig.~\ref{fig:spca-bipartite}):
\begin{itemize}[leftmargin=*]
	\item $V$ is a set of $\dimension$ vertices $v_{1}, \hdots, v_{\dimension}$, corresponding to the $\dimension$ variables, 
	\textit{i.e.}, the $\dimension$ rows of \scalebox{0.9}{$\widehat{\X}$}.
	\item $U$ is a set of ${k}\cdot {s}$ vertices, conceptually partitioned into ${k}$ disjoint subsets $U_{1}, \hdots, U_{{k}}$, each of cardinality~${s}$.
	The $j$th subset,~$U_{j}$, is associated with the support~$\mathcal{I}_{j}$;
	the~${s}$ vertices $u^{\mathsmaller{(j)}}_{\alpha}$, ${\alpha=1,\hdots,s}$ in~$U_{j}$ serve as placeholders for the variables/indices in~$\mathcal{I}_{j}$.
	\item Finally, the edge set is ${E = U \times V}$. 
	The edge weights are determined by the $\dimension \times {k}$ matrix $\W$ in~\eqref{local-problem}.
	In particular, the weight of edge $(u^{\mathsmaller{(j)}}_{\alpha}, v_{i})$ is equal to~$W_{ij}^{2}$.
	Note that all vertices in $U_{j}$ are effectively identical; they all share a common neighborhood and edge weights.
\end{itemize}

\begin{algorithm}[!ht]
	\caption{Compute Candidate Solution} 
	\label{algo:local-candidate}
	\begin{algorithmic}[1]
		\INPUT Real $\dimension \times {k}$ matrix $\W$
		\OUTPUT $ \widehat{\X}  = \argmax_{\X \in \Xset_{k}} \sum_{j=1}^{{k}} \bigl\langle \X^j, \W^j \bigr\rangle^{2}$
		\STATE $G\bigl(\lbrace{U_{j}}\rbrace_{j=1}^{{k}}, V, E\bigr) \gets \textsc{GenBiGraph}(\W)$
			\hfill \COMMENT{Alg.~\ref{algo:graph-gen}}
		\STATE $\mathcal{M} \gets \textsc{MaxWeightMatch}(G)$
			\hfill \COMMENT{$\subset E$}
		\STATE $\widehat{\X} \gets \mathbf{0}_{\dimension \times {k}}$
		\FOR{$j=1,\hdots, {k}$} 
			\STATE $\mathcal{I}_{j} \gets \left\lbrace i \in \range{\dimension}: (u, v_{i}) \in \mathcal{M}, u \in {U}_{j} \right\rbrace$ 
			\STATE $[\widehat{\X}^{j}]_{\mathcal{I}_{j}} \gets [\W^{j}]_{\mathcal{I}_{j}} / \|[\W^{j}]_{\mathcal{I}_{j}} \|_{2}$		\ENDFOR
	\end{algorithmic}
\end{algorithm}

Any feasible support 
\scalebox{0.9}{$\lbrace\mathcal{I}_{j}\rbrace_{j=1}^{k}$}
 corresponds to a \emph{perfect matching} in~$G$ and vice-versa.
Recall that a {matching} is a subset of the edges containing no two edges incident to the same vertex,
while a {perfect matching}, in the case of an unbalanced bipartite graph  ${G=(U,V,E)}$ with $|U| \le |V|$, is a matching that contains at least one incident edge for each vertex in $U$. 
Given a perfect matching $\mathcal{M} \subseteq E$,
the disjoint neighborhoods of $U_{j}$s under $\mathcal{M}$
yield a support \scalebox{0.9}{$\lbrace\mathcal{I}_{j}\rbrace_{j=1}^{k}$}.
Conversely, any valid support yields a unique perfect matching in~$G$ (taking into account that all vertices in $U_{j}$ are isomorphic). 
Moreover, due to the choice of weights in~$G$,
the right-hand side of~\eqref{local-objective-upper-bound} for a given support \scalebox{0.9}{$\lbrace\mathcal{I}_{j}\rbrace_{j=1}^{k}$} is equal to the weight of the matching~$\mathcal{M}$ in~$G$ induced by the former,
\textit{i.e.},
$
	\scalebox{0.9}{$\sum_{j=1}^{{k}}\sum_{i \in \mathcal{I}_{j}} W_{ij}^{2}$}
	\scalebox{0.9}{$=$}
	\scalebox{0.9}{$\sum_{(u,v) \in \mathcal{M}} w(u,v)$}
$.
It follows that
\it 
determining the support of the solution in~\eqref{local-problem},
reduces to solving the maximum weight matching problem on the bipartite graph~$G$.
\normalfont
 
Algorithm~\ref{algo:local-candidate} readily follows.
Given $\W \in \mathbb{R}^{\dimension \times {k}}$, the algorithm generates a weighted bipartite graph~$G$ as described, and computes its maximum weight matching.
Based on the latter, it first recovers the desired support of \scalebox{0.9}{$\widehat{\X}$} ({Line 5}), and subsequently the exact values of its nonzero entries ({Line 6}).
The running time is dominated by the computation of the matching, which can be done in 
$O\mathopen{}\bigl(|E||U|+|U|^{2}\log{|U|}\bigr)$
using a variant of the Hungarian algorithm~\cite{ramshaw2012minimum}.
Hence,
\begin{lemma} 
	\label{lemma:algo-local-candidate-guarantees}
	For any ${\W \in \mathbb{R}^{\dimension \times {k}}}$, 
	Algorithm~\ref{algo:local-candidate} computes the solution to~\eqref{local-problem}, in time~${O\mathopen{}\bigl(\dimension \cdot ({s}\cdot {k})^{2}\bigr)}$.
\end{lemma}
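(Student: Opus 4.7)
}
The plan is to decompose the argument into (i) a correctness step for the two-stage structure of Algorithm~\ref{algo:local-candidate} (first fix the support, then fix the values), and (ii) a running-time step bounding the matching computation. Correctness will be established by showing that the bipartite matching computed in Line~2 returns an optimal support collection, and that Line~6 then attains the Cauchy--Schwarz upper bound already recorded in~\eqref{local-objective-upper-bound}.

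First, I will freeze an arbitrary feasible collection of disjoint supports $\lbrace \mathcal{I}_j \rbrace_{j=1}^{k}$ with $|\mathcal{I}_j| = s$, and show that the best $\X \in \Xset_k$ with these supports attains objective value exactly $\sum_{j=1}^{k} \sum_{i \in \mathcal{I}_j} W_{ij}^{2}$. The $\le$ direction is the Cauchy--Schwarz inequality in~\eqref{local-objective-upper-bound}; the $\ge$ direction is achieved by setting $[\X^j]_{\mathcal{I}_j} = [\W^j]_{\mathcal{I}_j} / \|[\W^j]_{\mathcal{I}_j}\|_{2}$, which is precisely Line~6. Consequently, solving~\eqref{local-problem} reduces to choosing the disjoint supports that maximize $F\bigl(\lbrace \mathcal{I}_j \rbrace\bigr) \eqdef \sum_{j=1}^{k} \sum_{i \in \mathcal{I}_j} W_{ij}^{2}$.

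Second, I will argue that this combinatorial support-selection problem is equivalent to maximum weight bipartite matching on $G$. The key observation is a correspondence between feasible support collections and perfect matchings of $U$ into $V$ in $G$: given a perfect matching $\mathcal{M}$, the set $\mathcal{I}_j \eqdef \lbrace i : (u, v_i) \in \mathcal{M},\, u \in U_j \rbrace$ has size $s$ (since every vertex in $U_j$ is matched exactly once) and the $\mathcal{I}_j$'s are pairwise disjoint (since no $v_i$ can be matched twice); conversely, because all vertices in $U_j$ are isomorphic (common neighborhood, identical edge weights), any feasible $\lbrace \mathcal{I}_j \rbrace$ can be realized as the neighborhood structure of some perfect matching. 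Since the weight $w(u^{(j)}_{\alpha}, v_i) = W_{ij}^{2}$ depends only on $j$ and $i$, the weight of the matching is exactly $F\bigl(\lbrace \mathcal{I}_j \rbrace\bigr)$. Combining these, the support extracted in Line~5 from a maximum weight matching is an optimal support collection, and Line~6 attains the corresponding upper bound, certifying global optimality for~\eqref{local-problem}.

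Finally, for the runtime, the bipartite graph $G$ has $|U| = sk$ and $|V| = d$ vertices and $|E| = d \cdot sk$ edges, and can be generated in $O(dsk)$ time by reading off the entries of $\W$. The dominant cost is the maximum weight bipartite matching, for which the variant of the Hungarian algorithm cited in~\cite{ramshaw2012minimum} runs in $O\bigl(|E|\,|U| + |U|^{2}\log|U|\bigr) = O\bigl(d(sk)^{2} + (sk)^{2}\log(sk)\bigr)$; both terms are absorbed into $O(d(sk)^{2})$ in the regime of interest. Line~5 reads off the support in $O(sk)$ time, and Line~6 normalizes $k$ vectors of length $s$ in $O(sk)$ time. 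The only delicate step is the bijection argument of the previous paragraph, where care is needed because the $U_j$ vertices are isomorphic; this is what makes distinct perfect matchings possibly correspond to the same support collection, but crucially every feasible support is realized by at least one matching of the same weight, which is all that is required for the maximization.
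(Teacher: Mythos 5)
Your proposal is correct and follows essentially the same route as the paper's proof: Cauchy--Schwarz to reduce the problem to selecting disjoint supports maximizing $\sum_{j}\sum_{i\in\mathcal{I}_j}W_{ij}^{2}$, the correspondence between feasible support collections and perfect matchings in $G$ (handling the isomorphic placeholder vertices in each $U_j$ exactly as the paper does), and the same $O\bigl(|E||U|+|U|^{2}\log|U|\bigr)$ bound from the unbalanced Hungarian variant. No gaps to report.
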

A more formal analysis and proof of Lemma~\ref{lemma:algo-local-candidate-guarantees} is available in Section~\ref{proof:algo-local-candidate-guarantees}.
With Algorithm~\ref{algo:local-candidate} and Lemma~\ref{lemma:algo-local-candidate-guarantees} in place,
we complete the description of our sparse PCA algorithm (Algorithm~\ref{algo:lowrank-symmetric-multi-component})
and the proof sketch of Theorem~\ref{thm:spca-multi-rank-r-guarantees}.

\section{Sparse PCA on Low-Dimensional Sketches}
\label{sec:spca-on-sketch}

\setlength{\columnsep}{1.5em}
\begin{wrapfigure}[9]{R}[0pt]{0.52\textwidth}
\begin{minipage}{0.52\textwidth}
   \vspace{-2.0em}
\begin{algorithm}[H]
   \caption{Sparse PCA on Low Dim. Sketch}
   \label{algo:symmetric-multi-component}
   \begin{algorithmic}[1]
   		\INPUT: Real $n \times \dimension$ $\mathbf{S}$, $r \in \Z_{+}$, ${\epsilon \in (0,1)}$, ${k} \in \Z_{+}$.
		\OUTPUT $\widebar{\X}_{\mathsmaller{(r)}} \in \Xset_{k}$.
		\hfill \COMMENT{Thm.~\ref{thm:generic-sketch-solution}}
		\STATE $\widebar{\mathbf{S}} \gets \text{\textsc{Sketch}}(\mathbf{S}, r)$
   		\STATE $\widebar{\A} \gets 
		\widebar{\mathbf{S}}^{\T}\widebar{\mathbf{S}}$
		\STATE $\widebar{\X}_{\mathsmaller{(r)}} \gets \text{\textsc{Algorithm}~\ref{algo:lowrank-symmetric-multi-component}}$ $(\widebar{\A}, \epsilon, {k})$.
   \end{algorithmic}
\end{algorithm}
\end{minipage}
\end{wrapfigure}
Algorithm~\ref{algo:lowrank-symmetric-multi-component}
approximately solves the sparse PCA problem~\eqref{spca-multi-def}
on a $d \times d$ \mbox{rank-$r$} PSD matrix $\A$,
in time that grows as a low-order polynomial in the ambient dimension~$d$,
but depends exponentially on~$r$.
This dependence can be prohibitive in practice.
To mitigate its effect,
instead of the original input,
we can apply our sparse PCA algorithm on a low-rank approximation of $\A$. 
Intuitively, the quality of the extracted components should depend on how well that low-rank surrogate approximates the original input.

More formally, let $\mathbf{S}$ be the real ${n} \times \dimension$ data matrix representing~$n$ (potentially centered) datapoints in~$\dimension$ variables,
and~$\mathbf{A}$ the corresponding $\dimension \times \dimension$ covariance matrix. 
Further, let $\widebar{\mathbf{S}}$ be a low-dimensional sketch of the original data;
an ${n} \times \dimension$ matrix whose rows lie in an $r$-dimensional subspace, with $r$ being an accuracy parameter.
Such a sketch can be obtained in several ways,
including for example exact or approximate SVD, 
or online sketching methods \cite{halko2011finding}.
Finally, let $\widebar{\A} = \scalebox{0.85}{$\sfrac{1}{n}\cdot \widebar{\mathbf{S}}^{\T}\widebar{\mathbf{S}}$}$ be the covariance matrix of the sketched data.
Then, instead of $\A$,
we can approximately solve the sparse PCA problem by applying Algorithm~\ref{algo:lowrank-symmetric-multi-component} on the low-rank surrogate $\widebar{\A}$.
The above are formally outlined in Algorithm~\ref{algo:symmetric-multi-component}.
We note that the covariance matrix $\widebar{\A}$ does not need to be explicitly computed; Algorithm~\ref{algo:lowrank-symmetric-multi-component} can operate directly on the (sketched) input data matrix.

\begin{theorem} 
	\label{thm:generic-sketch-solution}
	For any $n \times \dimension$ input data matrix $\mathbf{S}$,
	 with corresponding empirical covariance matrix $\A=\sfrac{1}{n}\cdot \mathbf{S}^{\T}\mathbf{S}$,
	  any desired number of components~$k$, 
	and accuracy parameters~$\epsilon \in (0,1)$ and~$r$,
	Algorithm~\ref{algo:symmetric-multi-component} outputs ${\X_{\mathsmaller{(r)}} \in \Xset_{k}}$ such that
\begin{align}
	\trace\bigl( {\X}_{\mathsmaller{(r)}}^{\T} \A {\X}_{\mathsmaller{(r)}} \bigr) 
	\;\ge\;
	(1 - \epsilon) \cdot 
	\trace\bigl( \X_{\star}^{\T} \A \X_{\star} \bigr)
	-
	2 \cdot {k}\cdot \lambda_{1,s}(\A - \widebar{\A}),
   \nonumber
\end{align}
in time
$\Tsketch(r) + \Tsvd(r) + O\mathopen{}\bigl(\bigl(\tfrac{4}{\epsilon}\bigr)^{r \cdot {{k}}} \cdot {\dimension} \cdot ({s} \cdot {k})^{2}\bigr)$.
Here, $\Xstar \eqdef \argmax_{\X \in \Xset_{k}} \trace\left(\X^{\T} \A \X \right)$,
and $\lambda_{1,s}(\A)$ 
denotes the {\emph sparse eigenvalue}, \textit{i.e.}, the eigenvalue that corresponds to the principal $s$-sparse eigenvector of~$\A$.
\end{theorem}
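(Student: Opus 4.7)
The plan is to derive the guarantee by combining the approximation guarantee of Algorithm~\ref{algo:lowrank-symmetric-multi-component} applied to the rank-$r$ surrogate~$\widebar{\A}$ with a perturbation argument that transfers the objective from~$\widebar{\A}$ back to the original covariance~$\A$. The key observation is that for any $\X \in \Xset_k$, the quadratic objective can be decomposed as $\trace(\X^\T \A \X) = \trace(\X^\T \widebar{\A} \X) + \trace\bigl(\X^\T (\A - \widebar{\A}) \X\bigr)$, so understanding what the $s$-sparse, disjoint-support structure of $\X$ does to a quadratic form in $\A - \widebar{\A}$ is the main reduction.

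First, I would apply Theorem~\ref{thm:spca-multi-rank-r-guarantees} to the rank-$r$ input~$\widebar{\A}$ on which Line~3 of Algorithm~\ref{algo:symmetric-multi-component} is run. This yields
\begin{align}
\trace\bigl(\widebar{\X}_{(r)}^{\T}\widebar{\A}\widebar{\X}_{(r)}\bigr)
\;\geq\;
(1-\epsilon)\cdot \trace\bigl(\widebar{\X}_{\star}^{\T}\widebar{\A}\widebar{\X}_{\star}\bigr),
\nonumber
\end{align}
where $\widebar{\X}_{\star} \eqdef \argmax_{\X \in \Xset_k} \trace(\X^\T \widebar{\A} \X)$. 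Since $\widebar{\X}_{\star}$ is optimal for $\widebar{\A}$ over $\Xset_k$ while $\X_\star$ is merely feasible, $\trace(\widebar{\X}_{\star}^\T \widebar{\A} \widebar{\X}_{\star}) \geq \trace(\X_{\star}^\T \widebar{\A} \X_\star)$.

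Next, I would invoke the following perturbation lemma: for any $\X \in \Xset_k$ and any symmetric matrix $\mathbf{M}$,
\begin{align}
\bigl|\trace(\X^\T \mathbf{M} \X)\bigr|
\;=\;
\Bigl|\sum_{j=1}^{k} (\X^j)^\T \mathbf{M} \X^j\Bigr|
\;\leq\;
k \cdot \lambda_{1,s}(\mathbf{M}),
\nonumber
\end{align}
because each column $\X^j$ is a unit $s$-sparse vector and by definition $\lambda_{1,s}(\mathbf{M}) = \max_{\|\x\|_2=1,\|\x\|_0=s} \x^\T \mathbf{M} \x$ (with $\A - \widebar{\A}$ being PSD when $\widebar{\mathbf{S}}$ is the truncated SVD sketch, and in general the magnitude bound holds since the sparse eigenvalue controls the spectral norm of $\mathbf{M}$ restricted to $s$-sparse supports). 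Applied with $\mathbf{M} = \A - \widebar{\A}$, this lets me write on the one hand $\trace(\widebar{\X}_{(r)}^\T \A \widebar{\X}_{(r)}) \geq \trace(\widebar{\X}_{(r)}^\T \widebar{\A} \widebar{\X}_{(r)}) - k\,\lambda_{1,s}(\A - \widebar{\A})$, and on the other $\trace(\X_\star^\T \widebar{\A} \X_\star) \geq \trace(\X_\star^\T \A \X_\star) - k\,\lambda_{1,s}(\A - \widebar{\A})$.

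Finally I would chain these three inequalities:
\begin{align}
\trace\bigl(\widebar{\X}_{(r)}^{\T}\A\widebar{\X}_{(r)}\bigr)
&\;\geq\; \trace\bigl(\widebar{\X}_{(r)}^{\T}\widebar{\A}\widebar{\X}_{(r)}\bigr) - k\,\lambda_{1,s}(\A-\widebar{\A}) \nonumber \\
&\;\geq\; (1-\epsilon)\,\trace\bigl(\X_\star^{\T}\widebar{\A}\X_\star\bigr) - k\,\lambda_{1,s}(\A-\widebar{\A}) \nonumber \\
&\;\geq\; (1-\epsilon)\,\trace\bigl(\X_\star^{\T}\A\X_\star\bigr) - (1-\epsilon)\,k\,\lambda_{1,s}(\A-\widebar{\A}) - k\,\lambda_{1,s}(\A-\widebar{\A}) \nonumber \\
&\;\geq\; (1-\epsilon)\,\trace\bigl(\X_\star^{\T}\A\X_\star\bigr) - 2k\,\lambda_{1,s}(\A-\widebar{\A}), \nonumber
\end{align}
which is exactly the claim. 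The runtime bound follows by adding the sketching cost $\Tsketch(r)$ to the cost of running Algorithm~\ref{algo:lowrank-symmetric-multi-component} on the $d \times d$ rank-$r$ matrix $\widebar{\A}$ as given by Theorem~\ref{thm:spca-multi-rank-r-guarantees}. The only delicate point is the perturbation lemma, which requires that the sparse eigenvalue indeed upper-bounds $|\x^\T (\A - \widebar{\A})\x|$ for each sparse unit $\x$; this is immediate when $\A - \widebar{\A} \succeq 0$ (the standard case when $\widebar{\mathbf{S}}$ comes from truncated SVD) and otherwise can be obtained by noting that $\lambda_{1,s}$ controls the $s$-sparse restricted spectral norm of a symmetric matrix.
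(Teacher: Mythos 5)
Your proof is correct and follows the same overall route as the paper: the paper's own argument (Lemma~\ref{lemma:generic-sketch-solution}) uses exactly your chain --- the guarantee of Theorem~\ref{thm:spca-multi-rank-r-guarantees} on $\widebar{\A}$, the optimality of $\widebar{\X}_{\star}$ for $\widebar{\A}$ against the merely-feasible $\X_{\star}$, and two perturbation inequalities whose errors combine as $(1+\gamma)\le 2$. The one substantive difference is how the error term is bounded. The paper bounds $\bigl|\trace\bigl(\X^{\T}(\A-\widebar{\A})\X\bigr)\bigr| \le \|\X\|_{\frob}^{2}\,\|\A-\widebar{\A}\|_{2} = k\,\|\A-\widebar{\A}\|_{2}$ (via Lemma~\ref{lemma:abs-trace-XAY-ub}), so its appendix restatement of the theorem actually carries $2k\|\A-\widebar{\A}\|_{2}$, and the $\lambda_{1,s}$ form in the main text is only asserted to follow. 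You instead bound the error column-by-column, exploiting that each $\X^{j}$ is a unit-norm $s$-sparse vector, which yields $k\,\lambda_{1,s}(\A-\widebar{\A})$ directly and thus delivers the main-text statement (a strictly tighter bound, since $\lambda_{1,s}(\A-\widebar{\A})\le\|\A-\widebar{\A}\|_{2}$). The delicate point you flag is real and is shared by the paper's own statement: for the step $\trace\bigl(\X_{(r)}^{\T}\A\X_{(r)}\bigr)\ge\trace\bigl(\X_{(r)}^{\T}\widebar{\A}\X_{(r)}\bigr)-k\lambda_{1,s}(\A-\widebar{\A})$ one really needs a bound on the quadratic form of $\widebar{\A}-\A$, so for a general (non-PSD) difference the correct quantity is $\lambda_{1,s}(\widebar{\A}-\A)$ or the two-sided $s$-sparse restricted norm, not $\lambda_{1,s}(\A-\widebar{\A})$ itself; your claim that ``$\lambda_{1,s}$ controls the restricted spectral norm'' is false in general (take $\A-\widebar{\A}=-\mathbf{I}$), but the bound is valid in the intended setting $\A-\widebar{\A}\succeq 0$ (e.g., truncated-SVD sketches), exactly as you note, and the paper's spectral-norm version covers the general case.
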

The error $\lambda_{1,s}(\A - \widebar{\A})$ and in turn the tightness of the approximation guarantees hinges on the quality of the sketch $\widebar{\A}$.
Higher values of the parameter~$r$ (the rank of the sketch) can allow for a more accurate solution and tighter guarantees.
That is the case, for example, when the sketch is obtained through exact SVD.
In that sense, Theorem~\ref{thm:generic-sketch-solution} establishes a natural trade-off between the running time of Algorithm~\ref{algo:symmetric-multi-component} and the quality of the approximation guarantees. A formal proof of Theorem~\ref{thm:generic-sketch-solution}  is provided in Section~\ref{sec:proof-algo-general-matrix}.
 Observe that the error term itself is a sparse eigenvalue that is hard to approximate, however even loose bounds provide tight conditional approximation results, as we see next.

Using the main matrix approximation result of \cite{alon2013approximate}, the next theorem establishes that Algorithm~\ref{algo:symmetric-multi-component} can be turned into an additive PTAS.

\begin{theorem}{\label{thm:opt-epsk}}
Let $\A$ be a $d \times d$ positive semidefinite matrix with entries in $[-1,1]$,
$\mathbf{V}$ be a $d \times d$ matrix such that $\A = \matV\matV^\top$.
Further, let $\mathbf{R}$ be a random $d \times r$ matrix with entries drawn \textit{i.i.d.} according to $\mathcal{N}(0,1/r)$, and define 
$$
	\widebar{\A} \,\eqdef\, \matV{\bf R}{\bf R}^\top\matV^\top.
$$
For any constant $\epsilon \in (0,1]$,
let $r = O(\epsilon^{-2}\log d)$.
Then, 
for any desired sparsity $s$, and number of components $k = {O}(1)$,
Algorithm~\ref{algo:lowrank-symmetric-multi-component} with input argument $\widebar{\A}$
and accuracy parameter $\epsilon$, 
outputs ${\X_{\mathsmaller{(r)}} \in \Xset_{k}}$ such that
\begin{align}
	\trace\bigl( {\X}_{\mathsmaller{(r)}}^{\T} \A {\X}_{\mathsmaller{(r)}} \bigr) 
	\;\ge\;	
	\trace\bigl( \X_{\star}^{\T} \A \X_{\star} \bigr)
	-\epsilon \cdot s
   \nonumber
\end{align}
with probability at least $1-1/\text{poly}(d)$, in time $n^{O(\log(1/\epsilon)/\epsilon^2))}$.
\end{theorem}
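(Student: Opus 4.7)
The plan is to chain the spectral-error bound of Theorem~\ref{thm:generic-sketch-solution} with an entrywise Johnson--Lindenstrauss guarantee for the random Gaussian sketch $\widebar{\A}$, and then exploit the entrywise boundedness assumption $\A_{ij} \in [-1,1]$ to trade the resulting multiplicative-plus-spectral error for a purely additive~$\epsilon s$ error.

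First I would apply Theorem~\ref{thm:generic-sketch-solution} to the rank-$r$ surrogate $\widebar{\A} = \mathbf{V}\mathbf{R}\mathbf{R}^\top \mathbf{V}^\top$, run with an internal accuracy parameter $\eta$ to be tuned, obtaining
\begin{align*}
\trace\bigl(\X_{(r)}^\top \A \X_{(r)}\bigr)
\;\geq\; (1-\eta)\,\trace\bigl(\Xstar^\top \A \Xstar\bigr)
- 2k \cdot \lambda_{1,s}\bigl(\A - \widebar{\A}\bigr).
\end{align*}
To control the sparse-eigenvalue term I would pass through the entrywise norm. Denoting the rows of $\mathbf{V}$ by $v_1,\ldots,v_d$, the hypotheses $\A \succeq 0$ and $\A_{ij} \in [-1,1]$ give $\|v_i\|_2^2 = \A_{ii} \leq 1$, and one may write $\A_{ij} = \langle v_i, v_j\rangle$ and $\widebar{\A}_{ij} = \langle \mathbf{R}^\top v_i, \mathbf{R}^\top v_j\rangle$. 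Gaussian JL then preserves each such inner product within $\eta$ with probability $\geq 1 - e^{-c\eta^2 r}$; a union bound over the $d^2$ entries yields $\|\A - \widebar{\A}\|_{\max} \leq \eta$ with probability $1-1/\text{poly}(d)$ at sketch dimension $r = \Theta(\eta^{-2}\log d)$, which is the form of~\cite{alon2013approximate} that I would invoke. A Cauchy--Schwarz step then lifts the entrywise bound to $\lambda_{1,s}(\A - \widebar{\A}) \leq s\eta$, since any $s$-sparse unit vector $\x$ has $(\sum_{i}|x_i|)^2 \leq s$ and hence $|\x^\top (\A-\widebar{\A})\x| \leq s\eta$.

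The final step is bookkeeping. Using $\A_{ij} \in [-1,1]$ once more, every column $\x^j_\star$ of the optimum is $s$-sparse and unit-norm, so $(\x^j_\star)^\top \A\,\x^j_\star \leq s$ and therefore $\OPT \leq ks$. Plugging in,
\begin{align*}
\trace\bigl(\X_{(r)}^\top \A \X_{(r)}\bigr) \;\geq\; \OPT - \eta k s - 2ks\eta \;=\; \OPT - 3ks\eta.
\end{align*}
Choosing $\eta = \epsilon/(3k)$ and using $k = O(1)$ yields the claimed additive-$\epsilon s$ bound at $r = \Theta(\epsilon^{-2}\log d)$; substituting these into the runtime in Theorem~\ref{thm:spca-multi-rank-r-guarantees}, the net enumeration cost $(4/\eta)^{rk}$ evaluates to $d^{O(\log(1/\epsilon)/\epsilon^2)}$, matching the advertised runtime.

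The hard part, I expect, is pinning down the precise quantitative form of~\cite{alon2013approximate} --- verifying that entrywise (rather than merely spectral) approximation is delivered at $r = O(\epsilon^{-2}\log d)$ with $1/\text{poly}(d)$ failure probability, and that the hidden constants are compatible with the $k = O(1)$ rescaling. Once that entrywise bound is in hand the rest is routine: the rescaling $\eta \mapsto \epsilon/(3k)$ absorbs the constant $k$, and the $\|\A\|_{\max} \leq 1$ hypothesis is invoked twice, once to cap $\OPT$ (so that the multiplicative slack becomes additive) and once to ensure $\|v_i\|_2 \leq 1$ (so that JL applies).
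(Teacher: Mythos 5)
Your proposal is correct, and it shares the paper's two key ingredients --- the entrywise Johnson--Lindenstrauss guarantee of \cite{alon2013approximate} for $\widebar{\A}=\matV\mathbf{R}\mathbf{R}^\top\matV^\top$ at $r=O(\epsilon^{-2}\log d)$, and the Cauchy--Schwarz lifting $|\x^\top(\A-\widebar{\A})\x|\le \|\A-\widebar{\A}\|_{\max}\,\|\x\|_1^2\le \eta s$ for $s$-sparse unit vectors --- but the bookkeeping is routed differently. The paper does not invoke Theorem~\ref{thm:generic-sketch-solution} at all: it runs Algorithm~\ref{algo:lowrank-symmetric-multi-component} on $\widebar{\A}$ with accuracy $\delta/6$, applies Theorem~\ref{thm:spca-multi-rank-r-guarantees} directly, and carries out an explicit triangle-inequality comparison of $\x_\star^\top\A\x_\star$, $\x_d^\top\widebar{\A}\x_d$, $\widehat{\x}^\top\widebar{\A}\widehat{\x}$, $\widehat{\x}^\top\A\widehat{\x}$ (using that the entries of $\widebar{\A}$ lie in $[-1-\delta/6,\,1+\delta/6]$ to cap the sketch's optimum by roughly $s$), doing so only for $k=1$ and asserting that the multi-component case ``easily generalizes.'' Your route instead plugs the entrywise bound into the transfer guarantee of Theorem~\ref{thm:generic-sketch-solution}, bounds $\OPT\le ks$ to turn the multiplicative slack into an additive one, and rescales $\eta=\epsilon/(3k)$; this handles all $k$ components in one stroke and effectively supplies the generalization the paper leaves implicit, at the cost of cleanliness about constants (the paper's $\delta/6$ splits are explicit). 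One caveat worth flagging: you cite the main-text form of Theorem~\ref{thm:generic-sketch-solution} with error $2k\lambda_{1,s}(\A-\widebar{\A})$, whereas the appendix proof (Lemma~\ref{lemma:generic-sketch-solution}) is written with $2k\|\A-\widebar{\A}\|_2$, which would be useless here since the spectral norm of the residual can be as large as $d\eta$. This is not a real gap --- restricting that lemma's error terms to $\X\in\Xset_k$, each column is $s$-sparse and unit-norm, so $|\trace(\X^\top(\A-\widebar{\A})\X)|\le k\cdot\max_{\|\x\|_0\le s,\|\x\|_2=1}|\x^\top(\A-\widebar{\A})\x|\le k s\eta$, which is exactly the quantity your Cauchy--Schwarz step controls --- but a fully self-contained write-up should state that refinement rather than lean on the $\lambda_{1,s}$ phrasing as a black box. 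Your runtime accounting ($(4/\eta)^{rk}=d^{O(\log(1/\epsilon)/\epsilon^2)}$ for $k=O(1)$) and the $1-1/\mathrm{poly}(d)$ union bound match the paper's, modulo its own conflation of $n$ and $d$.
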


\begin{remark}
Note that $\lambda_{1}(\A - \widebar{\A})$ serves as another elementary upper bound on $\lambda_{1,s}(\A - \widebar{\A})$. If $\widebar{\A}$ is a the rank-$d$ SVD approximation of $\A$, then---similar to~\cite{asteris2014nonnegative}---we can obtain a multiplicative PTAS for sparse PCA, under the assumption of a decaying spectrum (e.g., under a power-law decay), and for $s = \Omega(n)$.
\end{remark}
\section{Related Work}
\label{sec:relatedwork}

We are not aware of any algorithm with provable guarantees for sparse PCA with {\it disjoint supports}.
Multiple components can be extracted by repeatedly solving~\eqref{spca-single-def} using one of the aforementioned methods.
To ensure disjoint supports, variables ``selected" by a component are removed from the dataset. 
This greedy approach, however, can result in highly suboptimal objective value (See example in Sec.~\ref{sec:deflation-suboptimality}).

A significant volume of work has focused on the single-component sparse PCA problem~\eqref{spca-single-def}; we scratch the surface and refer the reader to citations therein. 
Representative examples range from early heuristics in \cite{jolliffe1995rotation}, 
to the LASSO based techniques in \cite{jolliffe2003modified},  
the elastic net $\ell_1$-regression in \cite{zou2006sparse},
$\ell_1$ and $\ell_0$ regularized optimization methods such as GPower in \cite{journee2010generalized},
a greedy branch-and-bound technique in \cite{moghaddam2006spectral}, 
or semidefinite programming approaches \cite{d2008optimal, zhang2012sparse, d2007direct}.
The authors of \cite{sigg:2008} present an approach that uses ideas from an expectation-maximization (EM) formulation of the problem.
More recently,~\cite{yuan2013truncated} presents a simple and very efficient truncated version of the power iteration (TPower).
Finally,~\cite{spanspca:IT2014} introduces an exact solver for the low-rank case of the problem; this solver was then used on low-rank sketches in the work of \cite{papailiopoulos2013sparse} (SpanSPCA), that provides conditional approximation guarantees under spectral assumptions on the input data. Several ideas in this work are inspired by the aforementioned low-rank solvers. 
In our experiments, we compare against EM, TPower, and SpanSPCA, which all are experimentally achieving state-of-the-art performance.

Parallel to the algorithmic and optimization perspective, there is large line of statistical analysis for sparse PCA that focuses on guarantees pertaining to planted models and the recovery of a ``true" sparse component
\cite{amini2008high,
ma2013sparse,
d2012approximation,
cai2012sparse,
deshpande2013sparse,
berthet2013optimal,
berthet2013complexity,
WBS14-more_SPCA_planted-hardness,
KNV15-SDP-Sparse_PCA}.

There has been some work on the explicit estimation of principal subspaces or multiple components under sparsity constraints.
Non-deflation-based algorithms include extensions of the diagonal thresholding algorithm \cite{johnstone2009consistency} and iterative thresholding approaches \cite{ma2013sparse},
while \cite{vu2013fantope} and 
~\cite{wang2014nonconvex} propose methods that rely on the ``row sparsity for subspaces" assumption of \cite{vu2012minimax}.
These methods yield components supported on a common set of variables, and hence solve a problem different from~\eqref{spca-multi-def}.
Magdon-Ismail and Boutsidis~\cite{magdon:linearencoders} discuss the multiple component Sparse PCA problem, propose an alternative objective function and for that problem obtain interesting theoretical guarantees. 
Finally, \cite{richard2014tight} develops a framework for sparse matrix factorizaiton problems, based on a novel atomic norm. 
That framework captures sparse PCA -- although not explicitly the constraint of disjoint supports -- but the resulting optimization problem, albeit convex, is NP-hard.

\renewcommand{\bottomrulec}{%
  \arrayrulecolor{black}\specialrule{\heavyrulewidth}{0pt}{0pt}
  \arrayrulecolor{black}
}
\renewcommand{\toprulec}{%
  \arrayrulecolor{black}\specialrule{\heavyrulewidth}{\aboverulesep}{0pt}
  \arrayrulecolor{black}
}
\renewcommand{\midrulec}{%
  \arrayrulecolor{white}\specialrule{\aboverulesep}{0pt}{0pt}
  \arrayrulecolor{black}\specialrule{\lightrulewidth}{0pt}{0pt}
}

\newcounter{magicrownumbers}
\newcommand{\rownumber}{%
\ifnum\themagicrownumbers>0%
	{\color{gray}\scriptsize\arabic{magicrownumbers}:}%
\else{\relax}%
\fi%
\stepcounter{magicrownumbers}%
}
\definecolor{mustard}{RGB}{218, 160, 41}

\section{Experiments}
\label{sec:experiments}

We evaluate our algorithm on a series of real datasets, and compare it to deflation-based approaches for sparse PCA using
TPower~\cite{yuan2013truncated},
EM~\cite{sigg:2008}, and 
SpanSPCA~\cite{papailiopoulos2013sparse}.
The latter are representative of the state of the art for the single-component sparse PCA problem~\eqref{spca-single-def}.
Multiple components are computed one by one.
To ensure disjoint supports, 
the deflation step effectively amounts to removing from the dataset all variables used by previously extracted components.
For algorithms that are randomly initialized, we depict best results over multiple random restarts.
Additional experimental results are listed in Section~\ref{sec:apndx-experiments} of the appendix.

Our experiments are conducted in a Matlab environment. 
Due to its nature, our algorithm is easily parallelizable;
its prototypical implementation utilizes the Parallel Pool Matlab feature to exploit multicore (or distributed cluster) capabilities. 
Recall that our algorithm operates on a low-rank approximation of the input data.
Unless otherwise specified, it is configured for a rank-$4$ approximation obtained via truncated SVD. 
Finally, we put a time barrier in the execution of our algorithm, at the cost of the theoretical approximation guarantees;
the algorithm returns best results at the time of termination.
This ``{early termination}" can only hurt the performance of our algorithm.

\textbf{Leukemia Dataset.}\;
We evaluate our algorithm on the Leukemia dataset~\cite{Lichman:2013}.
The dataset comprises $72$ samples, each consisting of expression values for $12582$ probe sets.
We extract ${k=5}$ sparse components, each active on $s=50$ features. 
In Fig.~\ref{leukemia:cvar-line:s50}, we plot the cumulative explained variance versus the number of components. 
Deflation-based approaches are greedy: 
the leading components capture high values of variance, but subsequent ones contribute less.
On the contrary, our algorithm jointly optimizes the ${k=5}$ components and achieves higher \emph{total} cumulative variance;
one cannot identify a top component.
We repeat the experiment for multiple values of~$k$.
Fig.~\ref{leukemia:cvar-barplot:s50} depicts the total cumulative variance capture by each method, for each value of~$k$. 
\begin{figure}[tb!]
	\centering
   \subfigure[tight][]{
	  \includegraphics[height=0.37\textwidth, trim=0cm 0pt 0cm 0cm, clip=true]{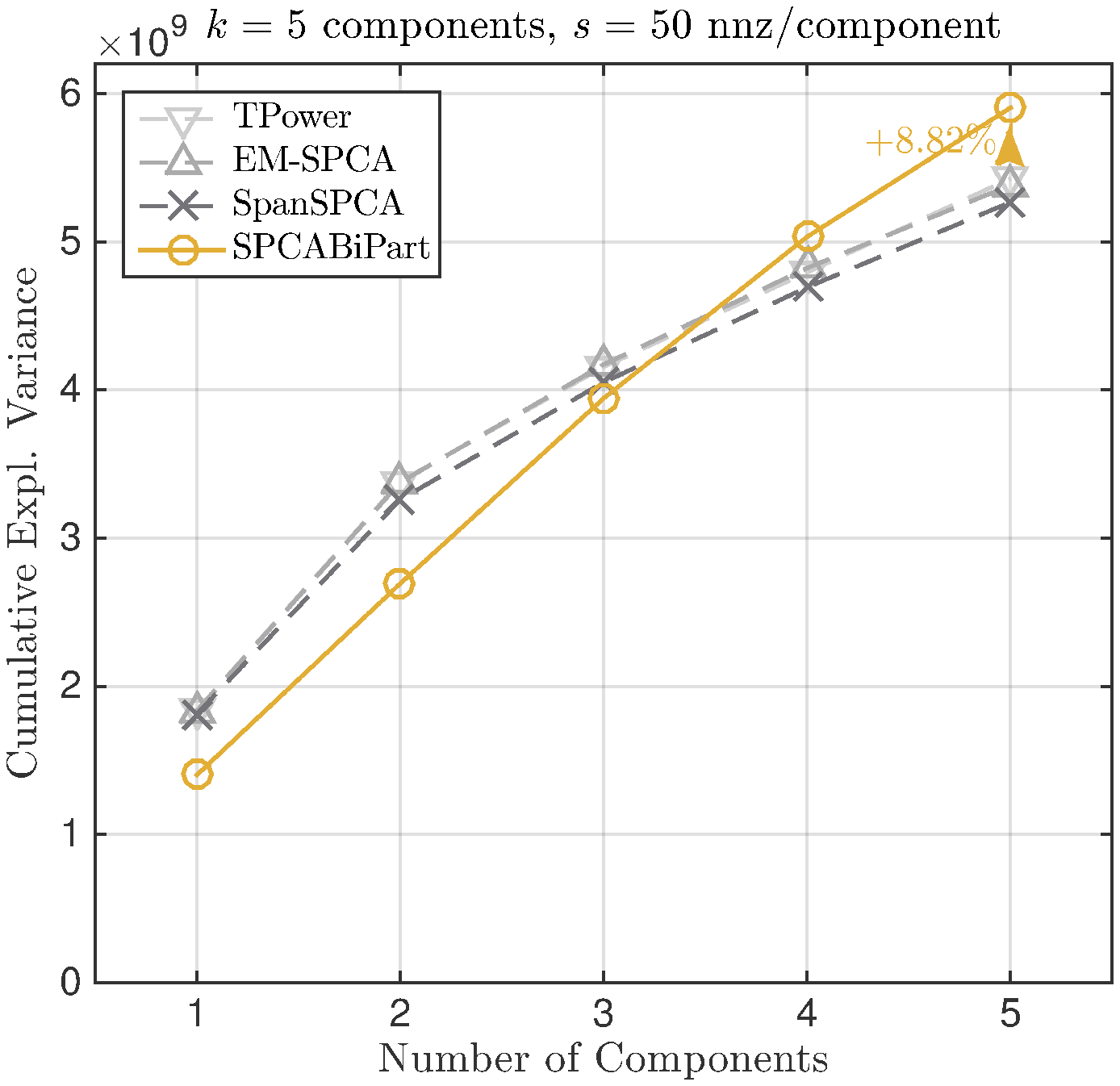}
	  \label{leukemia:cvar-line:s50}
   }
   \subfigure[tight][]{
	  \includegraphics[height=0.37\textwidth, trim=0cm 0pt 0cm 0cm, clip=true]{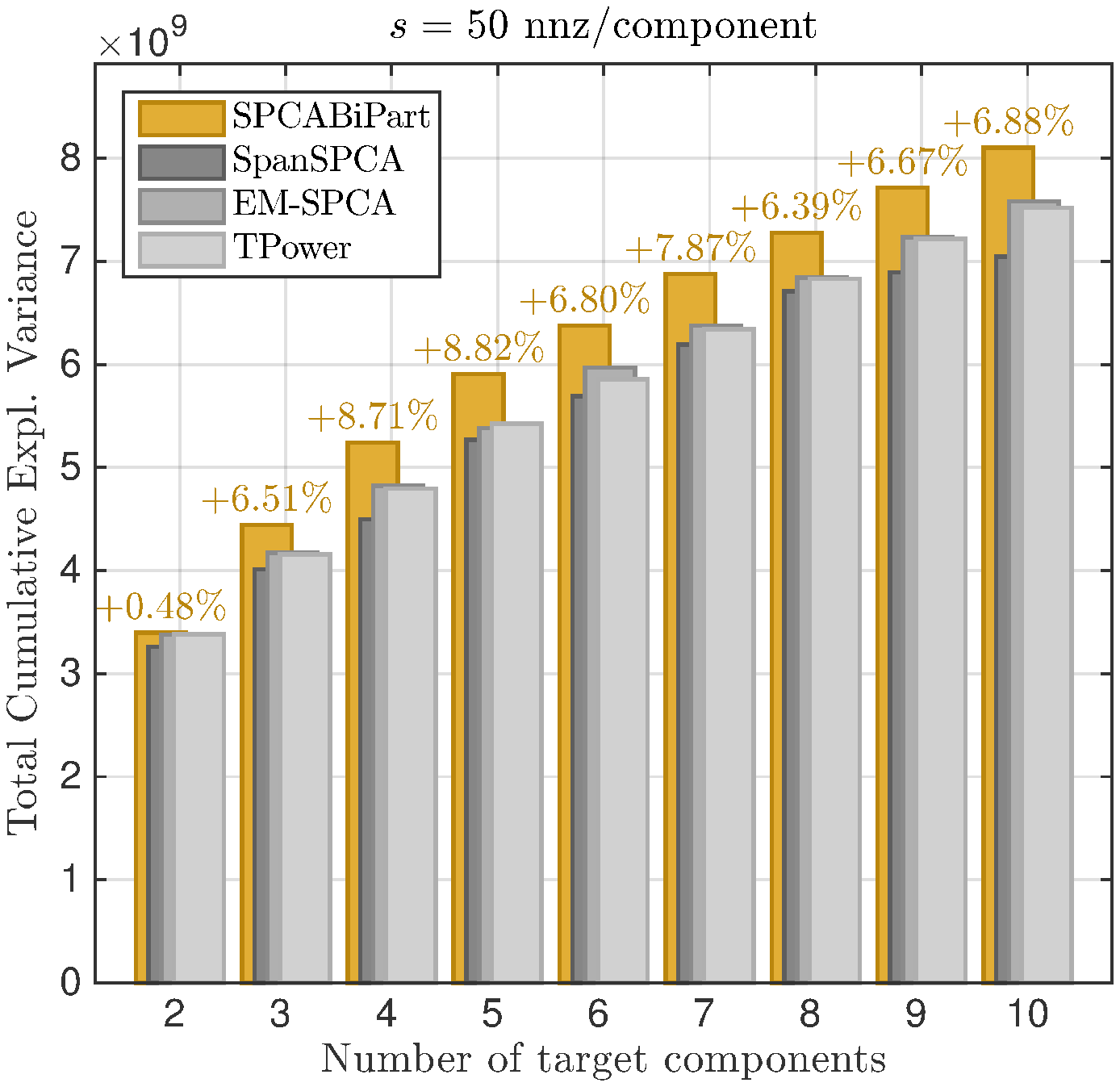}
	  \label{leukemia:cvar-barplot:s50}
   }
   \vspace{-1.0em}
	\caption{
	   	Cumulative variance captured by the $k$ $s$-sparse extracted components -- Leukemia dataset~\cite{Lichman:2013}.
	Sparsity is arbitrarily set to ${s=50}$ nonzero entries per component. 
		Fig.~\ref{leukemia:cvar-line:s50} depicts the cum. variance versus the number of components, for ${k=5}$. 
Deflation-based approaches are greedy;
first components capture high variance, but subsequent ones contribute less. 
	Our algorithm jointly optimizes the ${k=5}$ components and achieves higher \emph{total} cum. variance. 
	Fig.~\ref{leukemia:cvar-barplot:s50} depicts the total cum. variance achieved for various values of~$k$. 
	}
	\label{leukemia-figures}
\end{figure}

\textbf{Additional Datasets.}\,
We repeat the experiment on multiple datasets,
arbitrarily selected from~\cite{Lichman:2013}.
Table~\ref{spca-various-datasets:k5s40} lists the total cumulative variance captured by ${k=5}$ components, each with ${s=40}$ nonzero entries, 
extracted using the four methods. 
Our algorithm achieves the highest values in most cases. 
\renewcommand{\arraystretch}{1.1}
\begin{table}[b!]
	\fontsize{8.3}{10}\selectfont
	\centering
	\rowcolors{2}{mustard!20}{white}
	\begin{tabular}{%
		>{\hspace{-\tabcolsep plus .5em}}l<{}%
		>{\hspace{-\tabcolsep}\color{black!65}$(}r<{\times$\hspace{-\tabcolsep}}%
		>{\hspace{-\tabcolsep}\color{black!65}$}l<{)$}%
		>{$}c<{$}%
		>{$}c<{$}%
		>{$}c<{$}%
		>{$}c<{$\hspace{-\tabcolsep}}%
	}
	\toprulec
	\multicolumn{3}{c}{} & \text{TPower} & \text{EM sPCA} & \text{SpanSPCA} & \multicolumn{1}{c}{\text{SPCABiPart}} \\ 
	\midrulec
	\midrulec
   \textsc{Amzn Com Rev} &1500 &  10000&       7.31e+03 &      7.32e+03 &      7.31e+03 &     \mathbf{ 7.79e+03 }\\
  \textsc{Arcence Train} &100 &  10000&       1.08e+07 &      1.02e+07 &      1.08e+07 &     \mathbf{ 1.10e+07 }\\
  \textsc{CBCL Face Train} &   2429 &    361&       5.06e+00 &      5.18e+00 &      5.23e+00 &     \mathbf{ 5.29e+00 }\\
       \textsc{Isolet-5} &   1559 &    617&       3.31e+01 &      3.43e+01 &      3.34e+01 &     \mathbf{ 3.51e+01 }\\
	\textsc{Leukemia} &72 &  12582&       5.00e+09 &      5.03e+09 &      4.84e+09 &     \mathbf{ 5.37e+09 }\\
	     \textsc{Pems Train} &   267 & 138672&      \mathbf{ 3.94e+00 } &   3.58e+00 &      3.89e+00 &      3.75e+00 \\
      \textsc{Mfeat Pix} &2000 &    240&       5.00e+02 &      5.27e+02 &      5.08e+02 &     \mathbf{ 5.47e+02 } \\
 	\bottomrulec
   \end{tabular}
      \caption{
      Total cumulative variance captured by ${k=5}$ $40$-sparse extracted components on various datasets~\cite{Lichman:2013}.
   For each dataset, we list the size (\#samples$\times$\#variables) and the value of  variance captured by each method. 
   Our algorithm operates on a rank-${4}$ sketch in all cases.
   }
   \label{spca-various-datasets:k5s40}
\end{table}

\renewcommand{\arraystretch}{1.1}
\begin{table}[t!]
	\fontsize{8.3}{10}\selectfont
	\centering
	\rowcolors{2}{mustard!20}{white}
	\begin{tabular}{%
		>{\hspace{-\tabcolsep plus .5em}}l<{}%
		>{\hspace{-\tabcolsep}\color{black!65}$(}r<{\times$\hspace{-\tabcolsep}}%
		>{\hspace{-\tabcolsep}\color{black!65}$}l<{)$}%
		>{$}c<{$}%
		>{$}c<{$}%
		>{$}c<{$}%
		>{$}c<{$\hspace{-\tabcolsep}}%
	}
	\toprulec
	\multicolumn{3}{c}{} & \text{TPower} & \text{EM sPCA} & \text{SpanSPCA} & \multicolumn{1}{c}{\text{SPCABiPart}} \\ 
	\midrulec
	\midrulec
\textsc{BoW:NIPS} &   1500 &  12419&       2.51e+03 &      2.57e+03 &      2.53e+03 &     \mathbf{ 3.34e+03 }\;(+29.98\%)\\
\textsc{BoW:KOS} &   3430 &   6906&       4.14e+01 &      4.24e+01 &      4.21e+01 &     \mathbf{ 6.14e+01 }\;(+44.57\%)\\
  \textsc{BoW:Enron} &  39861 &  28102&       2.11e+02 &      2.00e+02 &      2.09e+02 &     \mathbf{ 2.38e+02 }\;(+12.90\%)\\
  \textsc{BoW:NyTimes} & 300000 & 102660&       4.81e+01 &      - &      4.81e+01 &     \mathbf{ 5.31e+01 }\;(+10.38\%)\\
   	\bottomrulec
   \end{tabular}
      \caption{
      Total variance captured by ${k=8}$ extracted components, each with ${s=15}$ nonzero entries -- Bag of Words dataset~\cite{Lichman:2013}.
   For each corpus, we list the size (\#documents$\times$\#vocabulary-size) and the explained variance.
   Our algorithm operates on a rank-${5}$ sketch in all cases.
   }
   \label{spca-text-datasets}
\end{table}

\textbf{Bag of Words (BoW) Dataset.}~\cite{Lichman:2013}\,
This is a collection of text corpora stored under the ``bag-of-words" model.
For each text corpus, a vocabulary of ${d}$ words is extracted upon tokenization, and the removal of stopwords and words appearing fewer than ten times in total.
Each document is then represented as a vector in that $d$-dimensional space, with the $i$th entry corresponding to the number of appearances of the $i$th vocabulary entry in the document.

We solve the sparse PCA problem~\eqref{spca-multi-def} on the word-by-word cooccurrence matrix,
and extract ${k=8}$ sparse components, each with cardinality ${s=10}$.
We note that the latter is not explicitly constructed; our algorithm can operate directly on the input word-by-document matrix. 
Table~\ref{spca-text-datasets} lists the variance captured by each method; our algorithm consistently outperforms the other approaches. 

Finally, note that here
each sparse component effectively \emph{selects} a small set of words.
In turn, the $k$ extracted components can be interpreted as a set of well-separated \emph{topics}.
In Table~\ref{nytimes-topics-our},
we list the topics extracted from the NY Times corpus (part of the Bag of Words dataset).
The corpus consists of ${3 \cdot 10^5}$ news articles and a vocabulary of ${d=102660}$ words.

\renewcommand{\arraystretch}{1.1}
\begin{table}[bh!]
	\setcounter{magicrownumbers}{0}
	\fontsize{8.3}{10}\selectfont
	\centering
	\setlength{\tabcolsep}{0.7\tabcolsep}
	\rowcolors{2}{white}{mustard!20!white}
	\begin{tabular}{%
		>{\hspace{-\tabcolsep plus .5em}{\makebox[1.3em][r]{\rownumber\space}}\hspace{.3em}}l<{}%
		>{}l<{}%
		>{}l<{}%
		>{}l<{}%
		>{}l<{}%
		>{}l<{}%
		>{}l<{}%
		>{}l<{\hspace{-\tabcolsep}}%
	}
	\toprulec
	Topic $1$ &
	Topic $2$ &
	Topic $3$ &
	Topic $4$ &
	Topic $5$ &
	Topic $6$ &
	Topic $7$ &
	Topic $8$ \\
	\midrulec
	\midrulec
percent&        zzz\_united\_states&    zzz\_bush&      company&        team&   cup&    school& zzz\_al\_gore\\
million&        zzz\_u\_s&      official&       companies&      game&   minutes&        student&        zzz\_george\_bush\\
money&  zzz\_american&  government&     market& season& add&    children&       campaign\\
high&   attack& president&      stock&  player& tablespoon&     women&  election\\
program&        military&       group&  business&       play&   oil&    show&   plan\\
number& palestinian&    leader& billion&        point&  teaspoon&       book&   tax\\
need&   war&    country&        analyst&        run&    water&  family& public\\
part&   administration& political&      firm&   right&  pepper& look&   zzz\_washington\\
problem&        zzz\_white\_house&      american&       sales&  home&   large&  hour&   member\\
com&    games&  law&    cost&   won&    food&   small&  nation\\
	\bottomrulec
   \end{tabular}
   \caption{
  \textsc{ BoW:NyTimes} dataset~\cite{Lichman:2013}.
   The table lists the words corresponding to the ${s=10}$ nonzero entries of each of the 
   $k=8$ extracted components (topics). Words corresponding to higher magnitude entries appear higher in the topic.
   }
   \label{nytimes-topics-our}
\end{table}


\section{Conclusions}
\label{sec:conclusions}
We considered the sparse PCA problem for multiple components with disjoint supports. 
Existing methods for the single component problem can be used along with an appropriate deflation step to compute multiple components one by one,
leading to potentially suboptimal results. 
We presented a novel algorithm for jointly optimizing multiple sparse and disjoint components with provable approximation guarantees.
Our algorithm is combinatorial and exploits interesting connections between the sparse PCA and the bipartite maximum weight matching problems. 
It runs in time that grows as a low-order polynomial in the ambient dimension of the input data, but depends exponentially on its rank.
To alleviate this dependency, we can apply the algorithm on a low-dimensional sketch of the input,
at the cost of an additional error in our theoretical approximation guarantees.
Empirical evaluation of our algorithm demonstrated that in many cases it outperforms deflation-based approaches.

\section*{Acknowledgments}
DP is generously supported by NSF awards CCF-1217058 and CCF-1116404 and MURI AFOSR grant 556016. 
This research has been supported by NSF Grants CCF 1344179, 1344364, 1407278, 1422549 and ARO YIP W911NF-14-1-0258.

\bibliographystyle{ieeetr}
\bibliography{spcamulti}

\section*{Supplemental Material}
\section{On the sub-optimality of deflation -- An example}
\label{sec:deflation-suboptimality}
We provide a simple example demonstrating the sub-optimality of deflation based approaches for computing multiple sparse components with disjoint supports. 
Consider the real $4 \times 4$ matrix
\begin{align}
	\mathbf{A}
	=
	\begin{bmatrix}
		1 & 0 & 0 & \epsilon \\
		0 & \delta & 0 & 0 \\
		0 & 0 & \delta & 0 \\
		\epsilon & 0 & 0 & 1
	\end{bmatrix},
	\nonumber
\end{align}
with $\epsilon,\delta > 0$ such that ${\epsilon+\delta < 1}$.
Note that $\mathbf{A}$ is PSD; 
$\mathbf{A} = \mathbf{B}^{\transpose}\mathbf{B}$ for 
\begin{align}
	\mathbf{B}
	=
	\begin{bmatrix}
		1 & 0 & 0 & \epsilon \\
		0 & \sqrt{\delta} & 0 & 0 \\
		0 & 0 & \sqrt{\delta} & 0 \\
		0 & 0 & 0 & \sqrt{1-\epsilon^2}
	\end{bmatrix}.
	\nonumber
\end{align}
We seek two $2$-sparse components with disjoint supports, 
\textit{i.e.}, the solution to 
\begin{align}
	\max_{\mathbf{X} \in \mathcal{X}}
		\sum_{j=1}^{2}
	\mathbf{x}_{j}^{\transpose}
	\mathbf{A}
	\mathbf{x}_{j},
	\label{2-components-opt-problem}
\end{align}
where
\begin{align}
	\Xset \eqdef 
	\left\lbrace 
		\X \in \mathbb{R}^{4 \times 2}: 
		\|\mathbf{x}_{i}\|_{2} \le 1, 
		\|\mathbf{x}_{i}\|_{0} \le 2 \;\forall\, i \in \lbrace{1,2}\rbrace,
		\supp(\mathbf{x}_{1}) \cap \supp(\mathbf{x}_{2}) = \emptyset
	\right\rbrace.
	\nonumber
\end{align}

\textbf{Iterative computation with deflation.}
Following an iterative, greedy procedure with a deflation step, we compute one component at the time. 
The first component is
\begin{align}
	\mathbf{x}_{1}
	=
	\argmax_{
		\substack{
			\|\mathbf{x}\|_{0}=2,
			\|\mathbf{x}\|_{2}=1
		}
	}	
	\mathbf{x}^{\transpose} \mathbf{A} \mathbf{x}.
	\label{greedy-x1}
\end{align}
Recall that for any unit norm vector $\mathbf{x}$ with support $I = \supp(\mathbf{x})$, 
\begin{align}
	\mathbf{x}^{\transpose} \mathbf{A} \mathbf{x}
	\le
	\lambda_{\max}\left( \mathbf{A}_{I,I}\right),
	\label{single-component-obj}
\end{align}
where $\mathbf{A}_{I,I}$ denotes the principal submatrix of $\mathbf{A}$ formed by the rows and columns indexed by $I$.
Equality can be achieved in~\eqref{single-component-obj} for $\mathbf{x}$ equal to the leading eigenvector of $\mathbf{A}_{I,I}$.
Hence, it suffices to determine the optimal support for $\mathbf{x}_{1}$.
Due to the small size of the example, 
it is easy to determine that the set~$I_{1}=\lbrace 1, 4 \rbrace$ maximizes the objective in~\eqref{single-component-obj} over all sets of two indices,
achieving value
\begin{align}
	\mathbf{x}_{1}^{\transpose}
	\mathbf{A}
	\mathbf{x}_{1}
	=
	\lambda_{\max}\left(\begin{bmatrix} 1 & \epsilon \\ \epsilon & 1\end{bmatrix}\right) = 1+\epsilon.
	\label{greedy-x1-contribution}
\end{align}
Since subsequent components must have disjoint supports, 
it follows that the support of the second $2$-sparse component $\mathbf{x}_{2}$ is~$I_{2}=\lbrace 2,3\rbrace$,
and $\mathbf{x}_{2}$ achieves value
\begin{align}
	\mathbf{x}_{2}^{\transpose}
	\mathbf{A}
	\mathbf{x}_{2}
	=
	\lambda_{\max}\left(\begin{bmatrix} \delta & 0 \\ 0 & \delta \end{bmatrix}\right) 
	= \delta.
	\label{greedy-x2-contribution}
\end{align}
In total, the objective value in~\eqref{2-components-opt-problem} achieved by the greedy computation with a deflation step is 
\begin{align}
	\sum_{j=1}^{2}
		\mathbf{x}_{j}^{\transpose}\mathbf{A}\mathbf{x}_{j}
	=
	1 + \epsilon + \delta.
	\label{first-alt-objective}
\end{align}
\textbf{The sub-optimality of deflation.}
Consider an alternative pair of $2$-sparse components~${\mathbf{x}_{1}^{\prime}}$ and~${\mathbf{x}_{2}^{\prime}}$
with support sets  ${I_{1}^{\prime}=\lbrace 1,2\rbrace}$ and~${I_{2}^{\prime}=\lbrace 3,4 \rbrace}$, respectively. 
Based on the above, such a pair achieves objective value in~\eqref{2-components-opt-problem} equal to 
\begin{align}
	\lambda_{\max}\left(\begin{bmatrix}1&0\\ 0 & \delta \end{bmatrix}
	\right)
	+
	\lambda_{\max}\left(\begin{bmatrix}\delta&0\\ 0 & 1 \end{bmatrix}\right)
	=
	1+1 = 2,
	\nonumber
\end{align}
which clearly outperforms the objective value in~\eqref{first-alt-objective} (under the assumption $\epsilon+\delta<1$), demonstrating the sub-optimality of the $\mathbf{x}_{1}$, $\mathbf{x}_{2}$ pair computed by the deflation-based approach.
In fact, for small $\epsilon, \delta$ the objective value in the second case is larger than the former by almost a factor of two.

\section{Construction of Bipartite Graph}
\label{sec:gen-bipart-graph}
The following algorithm formally outlines the steps for generating the bipartite graph ${G= \bigl(\lbrace{U_{j}}\rbrace_{j=1}^{{k}},V,E \bigr)}$ given a \emph{weight} $\dimension \times {k}$ matrix $\W$.

\setlength{\columnsep}{1.5em}
\begin{minipage}{1\textwidth}
   \vspace{-1em}
\begin{algorithm}[H]
	\caption{Generate Bipartite Graph}
	\label{algo:graph-gen}
	 	\begin{algorithmic}[1]
			\INPUT Real $\dimension \times {k}$ matrix $\W$
			\OUTPUT Bipartite ${G= \bigl(\lbrace{U_{j}}\rbrace_{j=1}^{{k}},V,E \bigr)}$
			\hfill \COMMENT{Fig.~\ref{fig:spca-bipartite}}
			\FOR{$j=1,\hdots, {k}$}
				\STATE $U_{j} \gets \bigl\lbrace u_{1}^{(j)}, \hdots, u_{{s}}^{(j)}\bigr\rbrace$
			\ENDFOR
			\STATE $U \gets \cup_{j=1}^{{k}} U_{j}$
			\hfill \COMMENT{$|U| = {k} \cdot {s}$}
			\STATE $V \gets \bigl\lbrace 1, \hdots, \dimension \bigr\rbrace$
			\STATE $E \gets U \times V$
			\FOR{$i=1, \hdots, \dimension$}
				\FOR{$j=1, \hdots, {k}$} 
					\FOR{ \textbf{each} $u \in U_{j}$} 
						\STATE	$w\bigl( u,  v_{i}\bigr) \gets W_{ij}^{2}$
					\ENDFOR
				\ENDFOR
			\ENDFOR
		\end{algorithmic}
\end{algorithm}
\end{minipage}

\section{Proofs}

\subsection{Guarantees of Algorithm~\ref{algo:local-candidate}}
\label{proof:algo-local-candidate-guarantees}
\begin{replemma}{lemma:algo-local-candidate-guarantees}
	\label{lemma:algo-local-guarantees}
	For any real $\dimension \times {k}$ matrix  $\mathbf{W}$, and 
	Algorithm~\ref{algo:local-candidate} outputs
	\begin{align}
		\widetilde{\X} 
		= \argmax_{\X \in \Xset_{k}}
		\sum_{j=1}^{{k}}
		\bigl\langle \mathbf{X}^{j}, \mathbf{W}^{j}\bigr\rangle^{2}
		\label{local-objective}
	\end{align}
	in time $O\mathopen{}\left(\dimension \cdot ({s} \cdot {k})^{2}\right)$.
\end{replemma}
\begin{proof}
	Consider a matrix $\X \in \Xset_{k}$
	and let $I_{j}$, $j=1\hdots, {k}$
	denote the support sets of its columns.
	By the constraints in $\Xset_{k}$, those sets are disjoint,
	\textit{i.e.},  $I_{j_{1}} \cap I_{j_{2}} = \emptyset$ $\forall j_{1}, j_{2} \in \range{{k}}, j_{1} \neq j_{2}$,
	and
	\begin{align}
		\sum_{j=1}^{{k}}
		\bigl\langle
			\mathbf{X}^{j},\, \mathbf{W}^{j}
		\bigr\rangle^{2}
		=
		\sum_{j=1}^{{k}}
		\Bigl(
			\sum_{i \in I_{j}} X_{ij} \cdot W_{ij}
		\Bigr)^{2}
		\le
		\sum_{j=1}^{{k}}
			\Bigl( \sum_{i \in I_{j}}  W_{ij}^{2} \Bigr).
		\label{spca-proof-upper-bound}
	\end{align}
	The last inequality is due to Cauchy-Schwarz and the fact that $\|\mathbf{X}^{j}\|_{2} \le 1$, $\forall\, j \in \range{{k}}$.
	In fact, if the supports sets $I_{j}$, ${j=1,\hdots,{k}}$ were known,
	the upper bound in~\eqref{spca-proof-upper-bound} would be achieved by setting ${\X}_{I_{j}}^{j} = \mathbf{W}_{I_{j}}^{j} / \|\mathbf{W}_{I_{j}}^{j} \|_{2}$, 
	\textit{i.e.}, setting the nonzero subvector of the $j$th column of ${\X}$ colinear to the corresponding subvector of the $j$th column of $\mathbf{W}$.
	Hence, the key step towards computing the optimal solution $\widetilde{\X}$ 
	is to determine the support sets $I_{j}$, $j=1,\hdots, {k}$ of its columns.
	
	Consider the set of binary matrices
	\begin{align}
	  \mathcal{Z} 
	  \eqdef
	  \left\lbrace 
		 \mathbf{Z} \in \lbrace{0,1}\rbrace^{\dimension \times {k}}: 
		 \| \mathbf{Z}^{j} \|_{0} \le s \;\forall\, j \in [{k}],
		 \supp(\mathbf{Z}^{i}) 
		 \cap 
		 \supp(\mathbf{Z}^{j}) = \emptyset
		 \;\forall\, i,j \in [{k}], i \neq j
	  \right\rbrace.
	  \nonumber
	\end{align}
	The set represents all possible supports for the members of $\Xset_{k}$.
	Taking into account the previous discussion, 
	the maximization in~\eqref{local-objective} can be written with respect to ${\mathbf{Z} \in \mathcal{Z}}$:
	\begin{align}
		\max_{\X \in \Xset_{k}}
		\sum_{j=1}^{{k}} \bigl\langle \mathbf{X}^{j},\,\mathbf{W}^{j}\bigr\rangle^{2}
		=
		\max_{\mathbf{Z} \in \mathcal{Z}}
		\sum_{j=1}^{{k}} \sum_{i=1}^{\dimension} Z_{ij} W_{ij}^{2}.
		\label{local-equivalent-support}
	\end{align}
	Let 
	$\widetilde{\mathbf{Z}} \in \mathcal{Z}$ denote the optimal solution, 
	which corresponds to the (support) indicator of $\widetilde{\X}$.
	Next, we show that computing $\widetilde{\mathbf{Z}}$ boils down to solving a maximum weight matching problem on the bipartite graph generated by Algorithm~\ref{algo:graph-gen}.	
	Recall that given $\mathbf{W} \in \mathbb{R}^{\dimension \times {k}}$,
    Algorithm~\ref{algo:graph-gen} generates a complete weighted bipartite graph $G=(U,V,E)$ where
\begin{itemize}[noitemsep, leftmargin=*]
	\item $V$ is a set of $\dimension$ vertices $v_{1}, \hdots, v_{\dimension}$, corresponding to the $\dimension$ variables, 
	\textit{i.e.}, the $\dimension$ rows of \scalebox{0.9}{$\widehat{\X}$}.
	\item $U$ is a set of ${k}\cdot {s}$ vertices, conceptually partitioned into ${k}$ disjoint subsets $U_{1}, \hdots, U_{{k}}$, each of cardinality~${s}$.
	The $j$th subset,~$U_{j}$, is associated with the support~$\mathcal{I}_{j}$;
	the~${s}$ vertices $u^{\mathsmaller{(j)}}_{\alpha}$, ${\alpha=1,\hdots,s}$ in~$U_{j}$ serve as placeholders for the variables/indices in~$\mathcal{I}_{j}$.
	\item Finally, the edge set is ${E = U \times V}$. 
	The edge weights are determined by the $\dimension \times {k}$ matrix $\W$ in~\eqref{local-problem}.
	In particular, the weight of edge $(u^{\mathsmaller{(j)}}_{\alpha}, v_{i})$ is equal to~$W_{ij}^{2}$.
	Note that all vertices in $U_{j}$ are effectively identical; they all share a common neighborhood and edge weights.
\end{itemize}
It is straightforward to verify that any $\mathbf{Z} \in \mathcal{Z}$ corresponds to a {perfect matching} in $G$ and vice versa;
$Z_{ij}=1$ if and only if vertex $v_{i} \in V$ is matched with a vertex in $U_{j}$ (all vertices in $U_{j}$ are equivalent with respect to their neighborhood).
Further, the objective value in~\eqref{local-equivalent-support} for a given $\mathbf{Z} \in \mathcal{Z}$ is equal to the weight of the corresponding matching in $G$. 
More formally, 
\begin{itemize}[noitemsep, leftmargin=*]
 \item Given a perfect matching $\mathcal{M}$, the support $I_{j}$ of the $j$th column of $\mathbf{Z}$ is determined by the neighborhood of $U_{j}$ in the matching:
   \begin{align}
   I_{j} \gets \bigl\lbrace i \in [\dimension]: (u, v_{i}) \in \mathcal{M}, u \in U_{j} \bigr\rbrace, \quad j=1,\hdots, {k}.
	  \label{support-construction-from-M}
   \end{align}
   Note that the sets $I_{j}$, $j=1,\hdots, {k}$ are indeed disjoint, and each has cardinality equal to ${s}$.
 The weight of the matching $\mathcal{M}$ is
   \begin{align}
	  \sum_{(u,v) \in \mathcal{M}} w(u,v)
	  =
	  \sum_{j=1}^{{k}}
	  \sum_{\substack{
			(u,v_{i}) \in \mathcal{M}:\\
			u \in U_{j}
			}} w(u, v_{i})
	  =
	  \sum_{j=1}^{{k}}
	  \sum_{i \in I_{j}} W_{ij}^{2}
	  =
	  \sum_{j=1}^{{k}}\sum_{i=1}^{\dimension} Z_{ij} \cdot W_{ij}^{2},
	  \label{objective-Z-M}
   \end{align}
   which is equal to the objective function in~\eqref{local-equivalent-support}.
 \item Conversely, given an indicator matrix $\mathbf{Z} \in \mathcal{Z}$,
 let $I_{j} \eqdef \supp(\mathbf{Z}^{j})$, and let $I_{j}(\alpha)$ denote the $\alpha$th element in the set, $\alpha=1,\hdots, {s}$ (with an arbitrary ordering).
 Then, 
 \begin{align}
   \mathcal{M}
   =
   \left\lbrace
	  (u^{\mathsmaller{(j)}}_{\alpha}, v_{I_{j}(\alpha)}),
	  \alpha=1,\hdots, {s}, \,
	  j=1,\hdots, {k}
   \right\rbrace
   \subset E
   \nonumber
 \end{align}
 is a perfect matching in $G$.
The objective value achieved by $\mathbf{Z}$ is equal to the weight of $\mathcal{M}$:
 \begin{align}
   \sum_{j=1}^{{k}}\sum_{i=1}^{\dimension} Z_{ij} \cdot W_{ij}^{2}
   =
   \sum_{j=1}^{{k}}
   \sum_{i \in I_{j}} W_{ij}^{2}
   =
   \sum_{j=1}^{{k}}
   \sum_{\alpha=1}^{s} W_{I_{j}(\alpha),j}^{2}
   =
   \sum_{(u,v) \in \mathcal{M}} w(u,v).
   \label{objective-M-Z}
 \end{align}
\end{itemize} 
It follows from~\eqref{objective-Z-M} and \eqref{objective-M-Z} that to determine $\widetilde{\mathbf{Z}}$,
it suffices to compute a maximum weight perfect matching in $G$.
The desired support is then obtained as described in~\eqref{support-construction-from-M} (lines 4-7 of Algorithm~\ref{algo:local-candidate}).
This complete the proof of correctness of Algorithm~\ref{algo:local-candidate} which proceeds in the steps described above to determine the support of $\widetilde{\X}$.

The weighted bipartite graph $G$ is generated in $O(\dimension \cdot ({s} \cdot {k}))$.
The running time of Algorithm~\ref{algo:local-candidate} is dominated by computing the maximum weight matching of $G$.
For the case of unbalanced bipartite graph with $|U|={{s} \cdot {k}}  < \dimension = \lvert{V}\rvert$ the Hungarian algorithm can be modified~\cite{ramshaw2012minimum} to compute the maximum weight bipartite matching in time 
$
O\mathopen{}\left(|E||U|+|U|^{2}\log{|U|}\right) 
=
{O\mathopen{}\bigl(\dimension \cdot ( {s} \cdot {k})^{2} \bigr)}
$.
This completes the proof. 
\end{proof}

\subsection{Guarantees of Algorithm~\ref{algo:lowrank-symmetric-multi-component} -- Proof of Theorem~\ref{thm:spca-multi-rank-r-guarantees}}
\label{sec:proof-algo-rank-r-spca}

We first prove a more general version of Theorem~\ref{thm:spca-multi-rank-r-guarantees} for arbitrary constraint sets. 
Combining that with the guarantees of Algorithm~\ref{algo:local-candidate}, we prove the Theorem~\ref{thm:spca-multi-rank-r-guarantees}.
\begin{lemma}
\label{lowrank-symmetric-multi-component-guarantees}
   For any  real $\dimension \times \dimension$ rank-$r$ PSD matrix $\widebar{\A}$
   and arbitrary set ${\Xset \subset \mathbb{R}^{\dimension \times {k}}}$,
   let
   $
   \widebar{\X}_{\star} \eqdef$ $\argmax_{\X \in \Xset}
   	\trace\bigl(\X^{\transpose}{\widebar{\A}}\X\bigr).
   $
   Assuming that there exists an operator 
   ${P_{\Xset}: \mathbb{R}^{\dimension \times {k}} \rightarrow \Xset}$
   such that
   $
   P_{\Xset}(\mathbf{W})\eqdef$ $\argmax_{\X \in \Xset}  \bigl\langle\mathbf{x}_{j},\, \mathbf{w}_{j}\bigr\rangle^{2}
   $,
   then 
   Algorithm~\ref{algo:lowrank-symmetric-multi-component} outputs $\widebar{\X} \in \Xset$ such that
   \begin{align}
   	\trace\bigl(
	 \widebar{\X}^{\transpose} \widebar{\A} \widebar{\X}
	 \bigr)
	  \ge
	  \left(1-\epsilon \right) \cdot
	  \trace\bigl(
	  \widebar{\X}_{\star}^{\transpose} \widebar{\A} \widebar{\X}_{\star}
	  \bigr),
	  \nonumber
   \end{align}
   in time 
   $\Tsvd(r) + O\mathopen{}\bigl(
   		\bigl(\tfrac{4}{\epsilon}\bigr)^{r \cdot {{k}}} \cdot \bigl(T_{\Xset}+{{k}}{\dimension}\bigr)\bigr)$,
		where $T_{\Xset}$ is the time required to compute $P_{\Xset}(\cdot)$ and $\Tsvd(r)$ the time required to compute the truncated SVD of $\widebar{\A}$.
\end{lemma}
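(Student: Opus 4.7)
The plan is to lift the problem to the joint optimization \eqref{double-maximization} over $(\X, \C)$, so that the unknown high-dimensional variable $\X$ is replaced---for the purpose of searching---by the low-dimensional variable $\C \in [\mathbb{S}_2^{r-1}]^{\otimes k}$. Since $\C$ lives in an $r$-dimensional product sphere, a standard $\epsilon/2$-net has size only $(4/\epsilon)^{rk}$, so Algorithm~\ref{algo:lowrank-symmetric-multi-component} can afford to enumerate it. At each net point, the companion $\X$-variable is recovered via the oracle $P_\Xset$. It then suffices to exhibit one net point whose induced candidate already achieves the claimed multiplicative guarantee, since the final step of the algorithm picks the best candidate under the true trace objective.

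First, I would fix a joint maximizer $(\widebar{\X}_\star, \C_\star)$ of \eqref{double-maximization}, so that $\trace(\widebar{\X}_\star^\top \widebar{\A} \widebar{\X}_\star) = \sum_{j=1}^k \langle \widebar{\X}_\star^j, \U\L^{1/2}\C_\star^j\rangle^2$. By the net property there exists $\widetilde{\C} \in [\N_{\epsilon/2}(\mathbb{S}_2^{r-1})]^{\otimes k}$ with $\|\widetilde{\C}^j - \C_\star^j\|_2 \le \epsilon/2$ for every $j$. Let $\widehat{\X} \eqdef P_\Xset(\U\L^{1/2}\widetilde{\C})$ be the candidate produced at this grid point, and introduce the surrogate $f(\X, \C) \eqdef \sum_j \langle \X^j, \U\L^{1/2}\C^j\rangle^2$. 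Two inequalities are immediate: the variational lower bound in \eqref{mt-trace-alt-def} gives $\trace(\widehat{\X}^\top \widebar{\A} \widehat{\X}) \ge f(\widehat{\X}, \widetilde{\C})$, and the definition of $P_\Xset$ gives $f(\widehat{\X}, \widetilde{\C}) \ge f(\widebar{\X}_\star, \widetilde{\C})$. The heart of the argument is then to show $f(\widebar{\X}_\star, \widetilde{\C}) \ge (1-\epsilon)\trace(\widebar{\X}_\star^\top \widebar{\A} \widebar{\X}_\star)$.

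The main obstacle is this last step: turning the column-wise $\ell_2$-proximity of $\widetilde{\C}$ to $\C_\star$ into a multiplicative loss on a quadratic form. Setting $\y_\star^j \eqdef \L^{1/2}\U^\top \widebar{\X}_\star^j \in \R^r$, note $\sum_j \|\y_\star^j\|_2^2 = \trace(\widebar{\X}_\star^\top \widebar{\A} \widebar{\X}_\star)$ and $f(\widebar{\X}_\star, \C) = \sum_j \langle \y_\star^j, \C^j\rangle^2$. Applying the difference-of-squares identity column-wise,
\[
\bigl\langle \y_\star^j, \widetilde{\C}^j\bigr\rangle^2 - \bigl\langle \y_\star^j, \C_\star^j\bigr\rangle^2 = \bigl\langle \y_\star^j, \widetilde{\C}^j + \C_\star^j\bigr\rangle \cdot \bigl\langle \y_\star^j, \widetilde{\C}^j - \C_\star^j\bigr\rangle \ge -\|\y_\star^j\|_2 \cdot 2 \cdot \|\y_\star^j\|_2 \cdot \tfrac{\epsilon}{2},
\]
where both factors are controlled via Cauchy-Schwarz together with $\|\widetilde{\C}^j + \C_\star^j\|_2 \le 2$ (triangle inequality on unit vectors) and $\|\widetilde{\C}^j - \C_\star^j\|_2 \le \epsilon/2$ (net property). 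Summing over $j$ gives $f(\widebar{\X}_\star, \widetilde{\C}) \ge f(\widebar{\X}_\star, \C_\star) - \epsilon \sum_j \|\y_\star^j\|_2^2 = (1-\epsilon)\trace(\widebar{\X}_\star^\top \widebar{\A} \widebar{\X}_\star)$, which chains with the previous two inequalities and closes the multiplicative bound. Notably, no regularity of $\Xset$ (e.g., unit-norm columns) is used here; the quadratic identity absorbs what would otherwise be a spurious $\sqrt{\lambda_1(\widebar{\A})}$ factor.

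For the running time, a volumetric bound gives $|\N_{\epsilon/2}(\mathbb{S}_2^{r-1})| \le (4/\epsilon)^r$, so the product net has $(4/\epsilon)^{rk}$ points. At each grid point, forming $\W = \U\L^{1/2}\C$ costs $O(kd)$ once the truncated eigendecomposition (cost $\Tsvd(r)$) is precomputed, and the projection adds $T_\Xset$; the final trace-selection is absorbed in the same order. Summing yields the stated $\Tsvd(r) + O\bigl((4/\epsilon)^{rk}(T_\Xset + kd)\bigr)$. Finally, Theorem~\ref{thm:spca-multi-rank-r-guarantees} follows by specializing this lemma to $\Xset = \Xset_k$, with $P_\Xset$ realized by Algorithm~\ref{algo:local-candidate} and $T_\Xset = O(d(sk)^2)$ from Lemma~\ref{lemma:algo-local-candidate-guarantees}.
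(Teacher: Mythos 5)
Your proof is correct and follows the same overall strategy as the paper: parametrize by the low-dimensional auxiliary variable $\mathbf{C}$, cover its domain with an $\epsilon/2$-net, use the oracle at each net point, and show that the net point $\widetilde{\C}$ closest to the optimal $\C_\star$ already yields a $(1-\epsilon)$-approximation, then chain the three inequalities. The one genuine (and minor) stylistic difference is in deriving the key bound $f(\widebar{\X}_\star, \widetilde{\C}) \ge (1-\epsilon)\,\trace(\widebar{\X}_\star^\top \widebar{\A} \widebar{\X}_\star)$: the paper works with the square roots $(\widebar{\X}_\star^{j\top}\widebar{\A}\widebar{\X}_\star^j)^{1/2}$, subtracts the $\epsilon/2$ term, rearranges, and then squares to obtain $(1-\epsilon/2)^2 \ge 1-\epsilon$, whereas you apply the factorization $a^2 - b^2 = (a+b)(a-b)$ directly to $\langle\y_\star^j,\widetilde{\C}^j\rangle^2 - \langle\y_\star^j,\C_\star^j\rangle^2$ and bound each factor by Cauchy--Schwarz, reaching $1-\epsilon$ in one step. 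Both routes require identical ingredients (the net property, Cauchy--Schwarz, $\|\C_\star^j\|_2 = \|\widetilde{\C}^j\|_2 = 1$) and neither imposes any regularity on $\Xset$, so your observation on that point is accurate; your version is marginally cleaner algebraically, but the two arguments are the same in substance.
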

\begin{proof}
Let $\widebar{\A} = \widebar{\mathbf{U}}\widebar{\mathbf{\Lambda}}\widebar{\mathbf{U}}^{\transpose}$
denote the truncated eigenvalue decomposition of $\widebar{\A}$;
$\widebar{\mathbf{\Lambda}}$ is a diagonal $r \times r$ whose $i$th diagonal entry $\Lambda_{ii}$ is equal to the $i$th largest eigenvalue of $\widebar{\A}$,
while the columns of $\widebar{\mathbf{U}}$ contain the corresponding eigenvectors. 
By the Cauchy-Schwartz inequality, 
for any $\mathbf{x} \in \mathbb{R}^{\dimension}$,  
   \begin{align}
	  \mathbf{x}^{\transpose}\widebar{\A}\mathbf{x}
	  =
	  \bigl\|
	  	\widebar{\mathbf{\Lambda}}^{1/2}\widebar{\mathbf{U}}^{\transpose} \mathbf{x}
	  \bigr\|_{2}^{2}
	  \ge
	  \bigl\langle
		 \widebar{\mathbf{\Lambda}}^{1/2}\widebar{\mathbf{U}}^{\transpose} \mathbf{x},\,
		 \mathbf{c}
	  \bigr\rangle^{2},
	  \quad
	  \forall\; 
	  \mathbf{c} \in \mathbb{R}^{r}: \|\mathbf{c}\|_{2}=1.
	  \label{CS-on-single-vector}
   \end{align}
   In fact, equality in~\eqref{CS-on-single-vector} is achieved for $\mathbf{c}$ colinear to $\widebar{\mathbf{\Lambda}}^{1/2}\widebar{\mathbf{U}}\mathbf{x}$, and hence,
   \begin{align}
   		\mathbf{x}^{\transpose}\widebar{\A}\mathbf{x}
		=
		\max_{\mathbf{c} \in {\mathbb{S}_{2}^{r-1}}}
		\bigl\langle
		 \widebar{\mathbf{\Lambda}}^{1/2}\widebar{\mathbf{U}}^{\transpose}\mathbf{x},\,
		 \mathbf{c}
	  \bigr\rangle^{2}.
   \end{align}
   In turn,
      \begin{align}
   \trace\left(
		\X^{\transpose}\widebar{\A} \X
   \right)
   =
   \sum_{j=1}^{{k}}
   	{\X^{j}}^{\transpose}\widebar{\A} \X^{j}
	=
	\max_{\mathbf{C}:{\mathbf{C}^{j}} \in {{\mathbb{S}_{2}^{r-1}}} \forall j}
   \sum_{j=1}^{{k}}
   \bigl\langle
		\widebar{\mathbf{\Lambda}}^{1/2}
		\widebar{\mathbf{U}}^{\transpose}
		\X^{j}
		,\,
		\mathbf{C}^{j}
	\bigr\rangle^{2}.
	\label{trace-alt-def}
	\end{align}

	Recall that $\widebar{\X}_{\star}$ is the optimal solution of the trace maximization on $\widebar{\A}$, \textit{i.e.},
   \begin{align}
      \widebar{\X}_{\star}
      \eqdef 
      \argmax_{\X \in \Xset}
		 \trace\left(
		 \X^{\transpose}{\widebar{\A}}\X
		 \right)
		 .
      \nonumber
   \end{align}
   Let $\widebar{\mathbf{C}}_{\star}$ be the maximizing value of $\mathbf{C}$ in~\eqref{trace-alt-def} for $\X=\widebar{\X}_{\star}$, \textit{i.e.},
   $\widebar{\mathbf{C}}_{\star}$ is an $r \times {k}$ matrix with unit-norm columns such that 
   for all $j \in \lbrace 1, \hdots, {k} \rbrace$,
   \begin{align}
	{\widebar{\X}_{\star}^{j}}^{\transpose}
	\widebar{\A}
	{\widebar{\X}_{\star}^{j}}
	=
		\bigl\langle
			\widebar{\mathbf{\Lambda}}^{1/2}
			\widebar{\mathbf{U}}^{\transpose}
			{\widebar{\X}_{\star}^{j}},\,
			{\widebar{\mathbf{C}}_{\star}^{j}}
		\bigr\rangle^{2}.
	\end{align}
Algorithm~\ref{algo:lowrank-symmetric-multi-component} iterates over the points
($r \times {k}$ matrices) $\mathbf{C}$ in~$\mathcal{N}_{\epsilon/2}^{\otimes {k}}\left( {\mathbb{S}_{2}^{r-1}}\right)$, 
the ${k}$th cartesian power of a finite $\sfrac{\epsilon}{2}$-net of the $r$-dimensional $l_{2}$-unit sphere. 
At each such point $\mathbf{C}$, it computes a candidate 
\begin{align}
	\widetilde{\X} 
	=
	\argmax_{\substack{\X \in \Xset }} 
	\sum_{j=1}^{{k}} 
		\bigl\langle \X^{j}, \U\L^{1/2}\bC^{j} \bigr\rangle^{2}
	\nonumber
\end{align}
via Algorithm~\ref{algo:local-candidate}
(See Lemma~\ref{lemma:algo-local-guarantees} for the guarantees of Algorithm~\ref{algo:local-candidate}).
By construction, the set $\mathcal{N}_{\epsilon/2}^{\otimes {k}}\left( {\mathbb{S}_{2}^{r-1}}\right)$ contains a $\mathbf{C}_{\sharp}$
   such that
   \begin{align}
	  \|\mathbf{C}_{\sharp} - \widebar{\mathbf{C}}_{\star}\|_{\infty, 2} 
	  =
	  \max_{j \in \range{{k}}} \|\mathbf{C}_{\sharp}^{j} - \widebar{\mathbf{C}}_{\star}^{j}\|_{2} 
	  \le \epsilon/2.
	 \label{distance-netpoint-from-opt}
   \end{align}
   Based on the above, for all $j \in \lbrace 1, \hdots, {k} \rbrace$,
   \begin{align}
   	\bigl(
		{\widebar{\X}_{\star}^{j}}^{\transpose}
		\widebar{\A}
		{\widebar{\X}_{\star}^{j}}
	\bigr)^{1/2}
	\nonumber & =
	\bigl\lvert 
		\bigl\langle
			\widebar{\mathbf{\Lambda}}^{1/2}
			\widebar{\mathbf{U}}^{\transpose}
			{\widebar{\X}_{\star}^{j}},\,
			{\widebar{\mathbf{C}}_{\star}^{j}}
		\bigr\rangle
	\bigr\rvert
	\nonumber\\ & =
	\bigl\lvert 
		\bigl\langle
			\widebar{\mathbf{\Lambda}}^{1/2}
			\widebar{\mathbf{U}}^{\transpose}
			{\widebar{\X}_{\star}^{j}},\,
			{\mathbf{C}_{\sharp}^{j}}
		\bigr\rangle
		+
		\bigl\langle
			\widebar{\mathbf{\Lambda}}^{1/2}
			\widebar{\mathbf{U}}^{\transpose}
			{\widebar{\X}_{\star}^{j}},\,
			\bigl({\widebar{\mathbf{C}}_{\star}^{j}}-{\mathbf{C}_{\sharp}^{j}}\bigr)
		\bigr\rangle
	\bigr\rvert
	\nonumber\\ & \le
	\bigl\lvert 
		\bigl\langle
			\widebar{\mathbf{\Lambda}}^{1/2}
			\widebar{\mathbf{U}}^{\transpose}
			{\widebar{\X}_{\star}^{j}},\,
			{\mathbf{C}_{\sharp}^{j}}
			\bigr\rangle 
	\bigr\rvert
	+
	\bigl\lvert
		\bigl\langle
			\widebar{\mathbf{\Lambda}}^{1/2}
			\widebar{\mathbf{U}}^{\transpose}
			{\widebar{\X}_{\star}^{j}},\,
			\bigl({\widebar{\mathbf{C}}_{\star}^{j}}-{\mathbf{C}_{\sharp}^{j}} \bigr)
		\bigr\rangle
	\bigr\rvert
	\nonumber\\ & \le
	\bigl\lvert 
		\bigl\langle
			\widebar{\mathbf{\Lambda}}^{1/2}
			\widebar{\mathbf{U}}^{\transpose}
			{\widebar{\X}_{\star}^{j}},\,
			{\mathbf{C}_{\sharp}^{j}}
			\bigr\rangle 
	\bigr\rvert
	+
	\bigl\|
		\widebar{\mathbf{\Lambda}}^{1/2}
		\widebar{\mathbf{U}}^{\transpose}
		{\widebar{\X}_{\star}^{j}}
	\bigr\|
	\cdot 
	\bigl\| {\widebar{\mathbf{C}}_{\star}^{j}}-{\mathbf{C}_{\sharp}^{j}} \bigr\|
	\nonumber\\ & \le
	\bigl\lvert 
		\bigl\langle
			\widebar{\mathbf{\Lambda}}^{1/2}
			\widebar{\mathbf{U}}^{\transpose}
			{\widebar{\X}_{\star}^{j}},\,
			{\mathbf{C}_{\sharp}^{j}}
			\bigr\rangle 
	\bigr\rvert
	+
	(\epsilon/2) \cdot
	\bigl(
		{\widebar{\X}_{\star}^{j}}^{\transpose}
		\widebar{\A}
		{\widebar{\X}_{\star}^{j}}
	\bigr)^{1/2}.
	\label{bounding-opt-quadratic-jthcol-multi}
   \end{align}
   The first step follows by the definition of $\widebar{\mathbf{C}}_{\star}$,
   the second by the linearity of the inner product,
   the third by the triangle inequality,
   the fourth by Cauchy-Schwarz inequality
   and the last by~\eqref{distance-netpoint-from-opt}.
	Rearranging the terms in~\eqref{bounding-opt-quadratic-jthcol-multi},
	\begin{align}
	\bigl\lvert 
		\bigl\langle
			\widebar{\mathbf{\Lambda}}^{1/2}
			\widebar{\mathbf{U}}^{\transpose}
			{\widebar{\X}_{\star}^{j}},\,
			{\mathbf{C}_{\sharp}^{j}}
			\bigr\rangle 
	\bigr\rvert
	\ge
	\bigl( 1 - \tfrac{\epsilon}{2} \bigr)\cdot 
		\bigl(
		{\widebar{\X}_{\star}^{j}}^{\transpose}
		\widebar{\A}
		{\widebar{\X}_{\star}^{j}}
		\bigr)^{1/2}
	\ge
	0,
	\nonumber
   \end{align}
   and in turn,
   \begin{align}
	\bigl\langle
		\widebar{\mathbf{\Lambda}}^{1/2}
		\widebar{\mathbf{U}}^{\transpose}
		{\widebar{\X}_{\star}^{j}},\,
		{\mathbf{C}_{\sharp}^{j}}
	\bigr\rangle^{2}
	\ge
	\bigl( 1 - \tfrac{\epsilon}{2} \bigr)^{2} \cdot 
		{\widebar{\X}_{\star}^{j}}^{\transpose}
		\widebar{\A}
		{\widebar{\X}_{\star}^{j}}
	\ge
	\left( 1 - \epsilon \right) \cdot
		{\widebar{\X}_{\star}^{j}}^{\transpose}
		\widebar{\A}
		{\widebar{\X}_{\star}^{j}}
		\label{inequality-for-each-term}
   \end{align}
   Summing the terms in~\eqref{inequality-for-each-term} over all $j \in \lbrace 1, \hdots, {k}\rbrace$,
   \begin{align}
   \sum_{j=1}^{{k}}
   	\bigl\langle
		\widebar{\mathbf{\Lambda}}^{1/2}
		\widebar{\mathbf{U}}^{\transpose}
		{\widebar{\X}_{\star}^{j}},\,
		{\mathbf{C}_{\sharp}^{j}}
	\bigr\rangle^{2}
	\ge
	\left( 1 - \epsilon \right) \cdot
	\trace\left(\widebar{\X}_{\star}^{\transpose}
		\widebar{\A}
		\widebar{\X}_{\star}
		\right).
	\label{close-to-the-end}
   \end{align}   
   
      Let $\Xsharp \in \Xset$ be the candidate solution produced by the algorithm at $\mathbf{C}_{\sharp}$,
   \textit{i.e.},
   \begin{align}
   	\X_{\sharp}
	\eqdef
	\argmax_{\X \in \Xset}
	\sum_{j=1}^{{k}}
	\bigl\langle\mathbf{x}_{j},\, 
	\widebar{\mathbf{U}}\widebar{\mathbf{\Lambda}}^{1/2}
	{\mathbf{C}_{\sharp}^{j}}\bigr\rangle^{2}.
	\label{optimality-of-x-sharp}
   \end{align} 
  Then,
   \begin{align}
   \trace\left(
		\Xsharp^{\transpose}
		\widebar{\A}
		\Xsharp
   \right)
   \nonumber & \stackrel{(\alpha)}{=}
   \max_{\mathbf{C}:{\mathbf{C}^{j}} \in {{\mathbb{S}_{2}^{r-1}}} \forall j}
   \sum_{j=1}^{{k}}
   \bigl\langle
		\widebar{\mathbf{\Lambda}}^{1/2}
		\widebar{\mathbf{U}}^{\transpose}
		{\widebar{\X}_{\sharp}^{j}}
		,\,
		\mathbf{C}^{j}
	\bigr\rangle^{2}
	\nonumber\\&\stackrel{(\beta)}{\ge}
	  \sum_{j=1}^{{k}}
   \bigl\langle
		\widebar{\mathbf{\Lambda}}^{1/2}
		\widebar{\mathbf{U}}^{\transpose}
		{\widebar{\X}_{\sharp}^{j}}
		,\,
		{\mathbf{C}_{\sharp}^{j}}
	\bigr\rangle^{2}
	\nonumber\\&\stackrel{(\gamma)}{\ge}
	  \sum_{j=1}^{{k}}
   \bigl\langle
		{\widebar{\X}_{\star}^{j}}
		,\,
		\widebar{\mathbf{U}}
		\widebar{\mathbf{\Lambda}}^{1/2}
		{\mathbf{C}_{\sharp}^{j}}
	\bigr\rangle^{2}
	\nonumber\\&\stackrel{(\delta)}{\ge}
	\left( 1 - \epsilon \right) \cdot
	\trace\left(\widebar{\X}_{\star}^{\transpose}
		\widebar{\A}
		\widebar{\X}_{\star}
		\right),
	\label{final-guarantee}
   \end{align}   
   where 
   $(\alpha)$ follows from the observation in~\eqref{trace-alt-def},
   $(\beta)$ from the sub-optimality of  $\mathbf{C}_{\sharp}$,
   $(\gamma)$ by the definition of $\Xsharp$ in~\eqref{optimality-of-x-sharp},
   while 
   $(\delta)$ follows from~\eqref{close-to-the-end}.
   According to~\eqref{final-guarantee},
   at least one of the candidate solutions produced by Algorithm~\ref{algo:lowrank-symmetric-multi-component},
   namely $\Xsharp$,
    achieves an objective value within a multiplicative factor $(1-\epsilon)$ from the optimal, 
    implying the guarantees of the lemma. 
   
   Finally, the running time of Algorithm~\ref{algo:lowrank-symmetric-multi-component} follows immediately from the cost per iteration and the cardinality of the $\sfrac{\epsilon}{2}$-net on the unit-sphere. 
   Note that matrix multiplications can exploit the singular value decomposition which is performed once.
\end{proof}

\begin{reptheorem}{thm:spca-multi-rank-r-guarantees}
 For any  real $\dimension \times \dimension$ rank-$r$ PSD matrix $\widebar{\A}$,
   desired number of components ${k}$,
   number ${s}$ of nonzero entries per component, 
   and accuracy parameter $\epsilon \in (0,1)$,
   Algorithm~\ref{algo:lowrank-symmetric-multi-component}
   outputs $\widebar{\X} \in \Xset_{k}$ such that
   \begin{align}
   	\trace\bigl(
		\widebar{\X}^{\transpose} \widebar{\A} \widebar{\X}
	 \bigr)
	  \;\ge\;
	  \left(1-\epsilon \right) \cdot
	  \trace\bigl(
	  	{\X}_{\star}^{\transpose} \widebar{\A} {\X}_{\star}
	  \bigr),
	  \nonumber
   \end{align}
   where
      $
   {\X}_{\star} \eqdef \argmax_{\X \in \Xset_{k}}
   	\trace\left(\X^{\transpose}{ \widebar{\A} }\X\right),
   $
   in time 
   $\Tsvd(r) + O\mathopen{}\bigl(
   		\bigl(\tfrac{4}{\epsilon}\bigr)^{r \cdot {{k}}} \cdot \dimension \cdot ({s} \cdot {k})^{2} \bigr)$.
$\Tsvd(r)$ is the time required to compute the truncated SVD of $\widebar{\A}$.
\end{reptheorem}
\begin{proof}
	Recall that $\Xset_{{k}}$ is the set of $\dimension \times {k}$ matrices $\X$ whose columns have unit length and pairwise disjoint supports.
	Algorithm~\ref{algo:local-candidate}, given any $\W \in \mathbb{R}^{\dimension \times {k}}$,
	computes $\X \in \Xset_{{k}}$ that optimally solves the constrained maximization in line 5. (See Lemma~\ref{lemma:algo-local-guarantees} for the guarantee of Algorithm~\ref{algo:local-candidate}).
	in time $O\mathopen{}\left(\dimension \cdot ({s} \cdot {k})^{2}\right)$.
	The desired result then follows by Lemma~\ref{lowrank-symmetric-multi-component-guarantees} for the constrained set $\Xset_{{k}}$.
\end{proof}

\subsection{Guarantees of Algorithm~\ref{algo:symmetric-multi-component} -- Proof of Theorem~\ref{thm:generic-sketch-solution}}
\label{sec:proof-algo-general-matrix}
We prove Theorem~\ref{thm:generic-sketch-solution} with the approximation guarantees of Algorithm~\ref{algo:symmetric-multi-component}.
\begin{lemma}
   \label{lemma:generic-sketch-solution}
   For any $\dimension \times \dimension$ PSD matrices $\mathbf{A}$ and $\widebar{\A}$,
   and any set $\Xset \subseteq \mathbb{R}^{\dimension \times {{k}}}$ let
   $$
   	\X_{\star} \eqdef \argmax_{\X \in \Xset}\trace\left(\X^{\transpose}\mathbf{A}\X \right),
   	\quad \text{and} \quad
	\widebar{\X}_{\star} \eqdef \argmax_{\X \in \Xset}\trace\bigl(\X^{\transpose}{\widebar{\A}}\X\bigr).
	$$
   Then, for any $\widebar{\X} \in \Xset$ such that
   $
   \trace\bigl(\widebar{\X}^{\transpose}{\widebar{\A}}\widebar{\X}\bigr)
   \ge \gamma \cdot
   \trace\bigl(\widebar{\X}_{\star}^{\transpose}{\widebar{\A}}\widebar{\X}_{\star}\bigr)
   $
   for some $0 < \gamma < 1$, 
   \begin{align}
      \trace\bigl(\widebar{\X}^{\transpose}\mathbf{A}\widebar{\X}\bigr)
      \ge
      \gamma \cdot \trace\bigl(\X_{\star}^{\transpose}\mathbf{A}\X_{\star}\bigr)
      - 2 \cdot \|\mathbf{A}-{\widebar{\A}}\|_{2} \cdot \max_{\X \in \Xset} \|\X\|_{\frob}^{2}.
      \nonumber
   \end{align}
\end{lemma}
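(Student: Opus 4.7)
\textbf{Proof plan for Lemma~\ref{lemma:generic-sketch-solution}.}
The plan is to reduce everything to a single perturbation inequality: for any symmetric $\dimension \times \dimension$ matrix $\M$ and any $\X \in \mathbb{R}^{\dimension \times {k}}$,
\[
\bigl|\trace(\X^\top \M \X)\bigr|
= \Bigl| \sum_{j=1}^{{k}} {\X^{j}}^{\top} \M \X^{j} \Bigr|
\le \|\M\|_{2} \sum_{j=1}^{{k}} \|\X^{j}\|_{2}^{2}
= \|\M\|_{2}\cdot \|\X\|_{\frob}^{2}.
\]
Applying this with $\M = \A - \widebar{\A}$ gives, for every $\X \in \Xset$,
\[
\bigl|\trace(\X^\top \A \X) - \trace(\X^\top \widebar{\A} \X)\bigr|
\;\le\; \|\A - \widebar{\A}\|_{2}\cdot \|\X\|_{\frob}^{2}
\;\le\; \|\A - \widebar{\A}\|_{2}\cdot M,
\]
where $M \eqdef \max_{\X \in \Xset}\|\X\|_{\frob}^{2}$. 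This is the only non-trivial ingredient, and it is essentially a one-line Rayleigh-quotient argument applied column by column.

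With this bound in hand, the rest is a short chain of inequalities. First, I would lower-bound $\trace(\widebar{\X}^{\top}\A\widebar{\X})$ by $\trace(\widebar{\X}^{\top}\widebar{\A}\widebar{\X}) - \|\A-\widebar{\A}\|_{2}\cdot M$. Then I would use the hypothesis $\trace(\widebar{\X}^{\top}\widebar{\A}\widebar{\X}) \ge \gamma \cdot \trace(\widebar{\X}_{\star}^{\top}\widebar{\A}\widebar{\X}_{\star})$, followed by the optimality of $\widebar{\X}_{\star}$ for $\widebar{\A}$ over $\Xset$, which yields $\trace(\widebar{\X}_{\star}^{\top}\widebar{\A}\widebar{\X}_{\star}) \ge \trace(\X_{\star}^{\top}\widebar{\A}\X_{\star})$ (since $\X_{\star} \in \Xset$). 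Finally, I would apply the perturbation bound again to pass from $\widebar{\A}$ back to $\A$ at the point $\X_{\star}$: $\trace(\X_{\star}^{\top}\widebar{\A}\X_{\star}) \ge \trace(\X_{\star}^{\top}\A\X_{\star}) - \|\A-\widebar{\A}\|_{2}\cdot M$.

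Chaining these steps gives
\[
\trace(\widebar{\X}^{\top}\A\widebar{\X})
\;\ge\;
\gamma \cdot \trace(\X_{\star}^{\top}\A\X_{\star})
\;-\; \gamma\cdot \|\A-\widebar{\A}\|_{2}\cdot M
\;-\; \|\A-\widebar{\A}\|_{2}\cdot M,
\]
and since $0<\gamma<1$ the sum of the last two terms is bounded in absolute value by $2\|\A-\widebar{\A}\|_{2}\cdot M$, producing exactly the claimed inequality.

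I do not expect any serious obstacle here: the statement is really a stability result for trace maximization under spectral perturbation of the quadratic form, and the only place where one has to be a little careful is justifying $|\trace(\X^\top \M \X)| \le \|\M\|_2 \|\X\|_\frob^2$ for a symmetric but not necessarily PSD $\M$, which follows by decomposing $\M$ in its eigenbasis or, more directly, column-by-column via $|\mathbf{x}^\top \M \mathbf{x}|\le \|\M\|_{2}\|\mathbf{x}\|_{2}^{2}$. Note that no use of the PSD hypothesis on $\A$ or $\widebar{\A}$ is actually needed for this lemma beyond what is already implicit in the theorem statement; it only matters that the trace objective is a quadratic form.
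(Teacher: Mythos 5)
Your proposal is correct and follows essentially the same route as the paper: the identical chain (perturbation bound at $\widebar{\X}$, the $\gamma$-hypothesis, optimality of $\widebar{\X}_{\star}$ over $\Xset$ for $\widebar{\A}$, perturbation bound at $\X_{\star}$, then $(1+\gamma)\le 2$). The only cosmetic difference is that you justify $\lvert\trace(\X^{\transpose}(\A-\widebar{\A})\X)\rvert \le \|\A-\widebar{\A}\|_{2}\|\X\|_{\frob}^{2}$ column-by-column via the Rayleigh quotient, whereas the paper invokes its auxiliary trace/Frobenius inequalities — the two are equivalent, and your observation that PSD-ness of $\A,\widebar{\A}$ is not needed matches the paper's argument as well.
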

\begin{proof}
By the optimality of $\widebar{\X}_{\star}$ for ${\widebar{\A}}$,
\begin{align}
   \trace\left(\widebar{\X}_{\star}^{\transpose}{\widebar{\A}}\widebar{\X}_{\star}\right)
   \ge
   \trace\left(\X_{\star}^{\transpose}{\widebar{\A}}\X_{\star}\right).
   \nonumber
\end{align}
In turn,
for any $\widebar{\X} \in \Xset$ such that
 $
   \trace\left(\widebar{\X}^{\transpose}{\widebar{\A}}\widebar{\X}\right)
   \ge \gamma \cdot
   \trace\left(\widebar{\X}_{\star}^{\transpose}{\widebar{\A}}\widebar{\X}_{\star}\right)
 $
 for some $0 < \gamma < 1$,
\begin{align}
   \trace\left(\widebar{\X}^{\transpose}{\widebar{\A}}\widebar{\X}\right)
   \ge
   \gamma \cdot
   \trace\left(\X_{\star}^{\transpose}{\widebar{\A}}\X_{\star}\right).
   \label{eq:bp01}
\end{align}
Let $\mathbf{E} \eqdef \mathbf{A} - \widebar{\A}$.
By the linearity of the trace,
\begin{align}
   \trace\left(\widebar{\X}^{\transpose}{\widebar{\A}}\widebar{\X}\right)
   & =
   \trace\left(\widebar{\X}^{\transpose}\mathbf{A}\widebar{\X}\right)
   -
   \trace\left(\widebar{\X}^{\transpose}\mathbf{E}\widebar{\X}\right)
   \nonumber\\ & \le
   \trace\left(\widebar{\X}^{\transpose}\mathbf{A}\widebar{\X}\right)
   +
   \bigl\lvert\trace\left(\widebar{\X}^{\transpose} \mathbf{E} \widebar{\X}\right)\bigr\rvert.
   \label{eq:bp02}
\end{align}
By Lemma~\ref{lemma:abs-trace-XAY-ub},
\begin{align}
   \bigl\lvert\trace\left(\widebar{\X}^{\transpose}\mathbf{E}\widebar{\X}\right)\bigr\rvert
   \le
   \|\widebar{\X}\|_{\frob} \cdot \|\widebar{\X}\|_{\frob} \cdot \|\mathbf{E}\|_{2}
   \le
   \| \mathbf{E} \|_{2}
   \cdot \max_{\X \in \Xset} \|\X\|_{\frob}^{2}
   \;\eqdef\; R.
   \label{residual-trace-bound}
\end{align}
Continuing from~\eqref{eq:bp02},
\begin{align}
   \trace\left(\widebar{\X}^{\transpose}{\widebar{\A}}\widebar{\X}\right)
   \le
   \trace\left(\widebar{\X}^{\transpose}\mathbf{A}\widebar{\X}\right)
   +
   R.
   \label{eq:bp04}
\end{align}
Similarly,
\begin{align}
   \trace\left(\X_{\star}^{\transpose}{\widebar{\A}}\X_{\star}\right)
   &=
   \trace\left(\X_{\star}^{\transpose}\mathbf{A}\X_{\star}\right)
   -
   \trace\left(\X_{\star}^{\transpose}\mathbf{E}\X_{\star}\right)
   \nonumber\\& \ge
   \trace\left(\X_{\star}^{\transpose}\mathbf{A}\X_{\star}\right)
   -
   \bigl\lvert
   \trace\left(\X_{\star}^{\transpose}\mathbf{E}\X_{\star}\right)
   \bigr\rvert
   \nonumber\\& \ge
   \trace\left(\X_{\star}^{\transpose}\mathbf{A}\X_{\star}\right)
   -
   R.
   \label{eq:bp03}
\end{align}
Combining the above, we have
\begin{align}
   \trace\left(\widebar{\X}^{\transpose}\mathbf{A}\widebar{\X}\right)
   &\ge
   \trace\left(\widebar{\X}^{\transpose}{\widebar{\A}}\widebar{\X}\right)
   -R
   \nonumber\\&\ge
   \gamma \cdot
   \trace\left(\X_{\star}^{\transpose}{\widebar{\A}}\X_{\star}\right)
   -
   R
   \nonumber\\&\ge
   \gamma \cdot
   \left(
   \trace\left(\X_{\star}^{\transpose}\mathbf{A}\X_{\star}\right)
   -
   R
   \right) - R
   \nonumber\\&=
   \gamma \cdot \trace\left(\X_{\star}^{\transpose}\mathbf{A}\X_{\star}\right)
   - (1 + \gamma) \cdot  R
   \nonumber\\&\ge
   \gamma \cdot \trace\left(\X_{\star}^{\transpose}\mathbf{A}\X_{\star}\right)
   - 2 \cdot  R,
   \nonumber
\end{align}
where
the first inequality follows from~\eqref{eq:bp04}
the second from~\eqref{eq:bp01},
the third from~\eqref{eq:bp03}, and the last from the fact that $R \ge 0$ and $0< \gamma \le 1$.
This concludes the proof.
\end{proof}


\begin{remark}
   If in Lemma~\ref{lemma:generic-sketch-solution} the PSD matrices
   $\mathbf{A}$ and $\widebar{\A} \in \mathbb{R}^{\dimension \times \dimension}$
   are such that $\mathbf{A}-\widebar{\A}$ is also PSD, then the following tighter bound holds:
   \begin{align}
      \trace\bigl(\widebar{\X}^{\transpose}\mathbf{A}\widebar{\X} \bigr)
      \ge
      \gamma \cdot \trace\bigl(\X_{\star}^{\transpose}\mathbf{A}\X_{\star} \bigr)
      - \sum_{i=1}^{{k}} \lambda_{i} \bigl(\mathbf{A}-{\widebar{\A}}\bigr).
      \nonumber
   \end{align}
\end{remark}
\begin{proof}
 This follows from the fact that
 if $\mathbf{E}\eqdef \mathbf{A}-\widebar{\A}$ is PSD,
 then
 \begin{align}
   \trace\left(\widebar{\X}^{\transpose}\mathbf{E}\widebar{\X}\right)
   =
   \sum_{j=1}^{\dimension} \mathbf{x}_{j}^{\transpose}\mathbf{E}\mathbf{x}_{j}
   \ge 0,
   \nonumber
 \end{align}
 and the bound in~\eqref{eq:bp02} can be improved to
 \begin{align}
   \trace\left(\widebar{\X}^{\transpose}{\widebar{\A}}\widebar{\X}\right)
   & =
   \trace\left(\widebar{\X}^{\transpose}\mathbf{A}\widebar{\X}\right)
   -
   \trace\left(\widebar{\X}^{\transpose}\mathbf{E}\widebar{\X}\right)
    \le
   \trace\left(\widebar{\X}^{\transpose}\mathbf{A}\widebar{\X}\right).
   \nonumber
\end{align}
Further, by Lemma~\ref{cor:orthogonal-proj-bound}, the bound in~\eqref{residual-trace-bound} can be improved to
\begin{align}
   \trace\bigl(\widebar{\X}^{\transpose}\mathbf{E}\widebar{\X}\bigr)
   \le
   \sum_{i=1}^{{k}} \lambda_{i} \bigl(\mathbf{E}\bigr)
   \;\eqdef\; R.
   \nonumber
\end{align}
The rest of the proof follows as is.
\end{proof}


\begin{reptheorem}{thm:generic-sketch-solution}
	For any $n \times \dimension$ input data matrix $\mathbf{S}$,
	 with corresponding empirical covariance matrix $\A=\sfrac{1}{n}\cdot \mathbf{S}^{\T}\mathbf{S}$,
	 any desired number of components~$k$, 
	and accuracy parameters~$\epsilon \in (0,1)$ and~$r$,
	Algorithm~\ref{algo:symmetric-multi-component} outputs ${\X_{\mathsmaller{(r)}} \in \Xset_{k}}$ such that
\begin{align}
	\trace\bigl( {\X}_{\mathsmaller{(r)}}^{\T} \A {\X}_{\mathsmaller{(r)}} \bigr) 
	\;\ge\;
	(1 - \epsilon) \cdot 
	\trace\bigl( \X_{\star}^{\T} \A \X_{\star} \bigr)
	-
	2 \cdot {k} \cdot \|\A - \widebar{\A}\|_{2},
   \nonumber
\end{align}
where
$\Xstar \eqdef \argmax_{\X \in \Xset_{k}} \trace\left(\X^{\T} \A \X \right)$,
in time
$\Tsketch(r) + \Tsvd(r) + O\mathopen{}\bigl(\bigl(\tfrac{4}{\epsilon}\bigr)^{r \cdot {{k}}} \cdot {\dimension} \cdot ({s} \cdot {k})^{2}\bigr)$.
\end{reptheorem}
\begin{proof}
	The theorem follows from Lemma~\ref{lemma:generic-sketch-solution}
	and the approximation guarantees of Algorithm~\ref{algo:lowrank-symmetric-multi-component}.
\end{proof}

\subsection{Proof of Theorem \ref{thm:opt-epsk}}\label{proof:algo-ptas}

First, we restate and prove the following Lemma by \cite{alon2013approximate}.

\begin{lemma}{\label{thm:Ad-approx}}
Let $\A \in \R^{d \times d}$ be an positive semidefinite matrix with entries in $[-1,1]$, and ${\bf V} \in \R^{d \times d}$ matrix such that $\A = {\bf V}{\bf V}^\top$. Consider a random matrix ${\bf R} \in \R^{d \times r}$ with entries drawn according to a Gaussian distribution $N(0,1/r)$, and define $$\widebar{\A} = {\bf V}{\bf R}{\bf R}^\top{\bf V}^\top.$$ Then, for  $r = O(\epsilon^{-2}\log d)$,
\begin{align}
	\left|[\A]_{i,j}-[\widebar{\A}]_{i,j}\right|\le \epsilon \nonumber
\end{align} for all $i,j$ with probability at least $1-1/d$.
\end{lemma}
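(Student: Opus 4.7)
}

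The plan is to recognize the claim as an entrywise Johnson--Lindenstrauss inner-product preservation statement and prove it by combining a standard one-vector norm concentration bound with the polarization identity and a union bound over the $d^2$ coordinate pairs.

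\emph{Step 1: Rewrite entries as inner products.} Let ${\bf v}_1,\dots,{\bf v}_d$ denote the rows of ${\bf V}$. Then $[\A]_{i,j}=\langle {\bf v}_i,{\bf v}_j\rangle$ and $[\widebar\A]_{i,j}=\langle {\bf R}^\top {\bf v}_i,\,{\bf R}^\top {\bf v}_j\rangle$, so the lemma is the statement that the random linear map ${\bf R}^\top:\R^d\to\R^r$ approximately preserves the inner products of $\{{\bf v}_i\}_{i=1}^d$ up to additive error $\epsilon$. Because $\A$ is PSD with $[\A]_{i,i}\in[-1,1]$, the diagonal gives $\|{\bf v}_i\|_2^2=[\A]_{i,i}\le 1$, so every ${\bf v}_i$ lies in the unit ball. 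In particular, for all $i,j$, $\|{\bf v}_i\pm {\bf v}_j\|_2^2\le 4$.

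\emph{Step 2: Single-vector concentration.} I would invoke the standard Gaussian JL lemma: for any fixed $w\in\R^d$, since the entries of ${\bf R}$ are i.i.d.\ $\mathcal N(0,1/r)$, the random variable $\|{\bf R}^\top w\|_2^2$ has mean $\|w\|_2^2$ and satisfies
\begin{align}
\Pr\bigl[\,\bigl|\|{\bf R}^\top w\|_2^2-\|w\|_2^2\bigr|>\eta\,\|w\|_2^2\,\bigr]\;\le\;2\exp(-c\,\eta^{2}\,r)
\nonumber
\end{align}
for $\eta\in(0,1)$ and an absolute constant $c>0$ (a $\chi^2$-tail / Hanson--Wright calculation).

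\emph{Step 3: From norms to inner products via polarization.} For any two vectors $u,v$ in the unit ball, the polarization identity gives
\begin{align}
\langle {\bf R}^\top u,{\bf R}^\top v\rangle-\langle u,v\rangle
=\tfrac{1}{4}\Bigl(\bigl(\|{\bf R}^\top(u+v)\|^2-\|u+v\|^2\bigr)-\bigl(\|{\bf R}^\top(u-v)\|^2-\|u-v\|^2\bigr)\Bigr).
\nonumber
\end{align}
Applying Step~2 to $w=u+v$ and $w=u-v$ with $\eta=\epsilon/2$ and using $\|u\pm v\|_2^2\le 4$ shows that, with probability at least $1-4\exp(-c\epsilon^{2}r/4)$, the right-hand side has magnitude at most $\tfrac{1}{4}(\eta\cdot 4+\eta\cdot 4)=2\eta=\epsilon$.

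\emph{Step 4: Union bound and choice of $r$.} Apply Step~3 to each pair $(u,v)=({\bf v}_i,{\bf v}_j)$ for $1\le i,j\le d$, and take a union bound over the $d^{2}$ pairs. Choosing $r=C\epsilon^{-2}\log d$ with a sufficiently large absolute constant $C$ (so that $4d^{2}\exp(-c\epsilon^{2}r/4)\le 1/d$) yields
\begin{align}
\Pr\Bigl[\,\max_{i,j}\bigl|[\A]_{i,j}-[\widebar\A]_{i,j}\bigr|>\epsilon\,\Bigr]\;\le\;1/d,
\nonumber
\end{align}
which is exactly the claimed statement.

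\emph{Expected obstacle.} The only delicate point is book-keeping the constants so that the polarization doubles/quadruples in the error are absorbed and the final failure probability after the $d^{2}$-term union bound is at most $1/d$; once the scaling $\eta=\epsilon/2$ is fixed and the $\chi^{2}$ tail bound is quoted, everything else is a direct computation. No structural difficulty arises from the sparse-PCA context: the lemma is a purely linear-algebraic/probabilistic fact about Gaussian projections and the PSD/entrywise-bounded hypothesis on $\A$ is used solely to control $\|{\bf v}_i\|_2$.
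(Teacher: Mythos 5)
Your proposal is correct and follows essentially the same route as the paper: interpret each entry of $\A$ as an inner product of factor vectors, invoke Johnson--Lindenstrauss inner-product preservation under the Gaussian map $\mathbf{R}$, and union bound over the $O(d^2)$ pairs with $r = O(\epsilon^{-2}\log d)$. The only difference is that the paper directly quotes the two-vector JL inner-product bound (for unit-norm vectors), whereas you derive it from the single-vector norm bound via polarization, which also cleanly handles the fact that the factor vectors only satisfy $\|{\bf v}_i\|_2\le 1$ rather than being exactly unit norm.
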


\begin{proof}
The proof relies on the Johnson-Lindenstrauss (JL) Lemma \cite{dasgupta2003elementary},
according to which for any two unit norm vectors $\x,\y\in\mathbb{R}^d$ and $\mathbf{R}$ generated as described
\begin{equation}
	\Pr\bigl\{|\x^\T{\bf R}{\bf R}^\T\y - \x^\T\y|\ge \epsilon\bigr\}\le 2\cdot e^{-(\epsilon^2-\epsilon^3)\cdot r/4}.\nonumber
\end{equation}
Observe that each element of $\A$ is in $[-1,1]$, hence can be rewritten as an inner product of two unit-norm vectors:
$$[\A]_{i,j}={\bf V}_{:,i}^T{\bf V}_{:,j}.$$
Setting $r = O(\epsilon^{-2}\log d)$
and using the JL lemma and a union bound over all $O(d^2)$ vector pairs ${\bf V}_{:,i}$, ${\bf V}_{:,j}$ 
we obtain the desired result. 
\end{proof}

Next, we provide the proof of Theorem \ref{thm:opt-epsk} for the simple case of $k = 1$; the proof easily generalizes to the multi-component case $k > 1$. According to Lemma~\ref{thm:Ad-approx}, choosing $d = O\left((\delta/6)^{-2} \log{n}\right)= O\left(\delta^{-2} \log{n}\right)$ suffices for all entries of $\widebar{\A}$ constructed as described in the lemma to satisfiy
\begin{align}
   \left|[\A]_{i,j}-[\widebar{\A}]_{i,j}\right|
   \le 
   \frac{\delta}{6} \nonumber
\end{align}
with probability at least $1-1/d$. In turn, for any $s$-sparse, unit-norm $\x$,
\begin{align}
	\bigl\vert{\bf x}^\T\A{\bf x}-{\bf x}^\T\widebar{\A}{\bf x}\bigr\vert 
&= \left\vert \sum_{i,j} x_i x_j ([\A]_{ij}-[\widebar{\A}]_{ij}) \right\vert  
   \le \frac{\delta}{6} \cdot \left|\sum_{i=1}^n |x_i| \sum_{j=1}^n |x_j| \right| \nonumber \\ 
&\le \frac{\delta}{6} \cdot \|x\|_1^2 
   \le\frac{\delta}{6}\cdot \left(\sqrt{s} \cdot \|\x\|_2\right)^2 
    = \frac{\delta}{6} \cdot s
	\label{eq:A-Ad-quadratic}, 
\end{align}
where the second inequality follows from the fact that $\x$ is $s$-sparse and unit norm.

We run Algorithm~\ref{algo:lowrank-symmetric-multi-component} (for $k=1$) with input argument the rank-$r$ matrix $\widebar{\A}$, desired sparsity~${s}$ and accuracy parameter $\epsilon = \delta/6$.
Algorithm~\ref{algo:lowrank-symmetric-multi-component} outputs a ${s}$-sparse unit-norm vector $\widehat{\mathbf{x}}$
which according to Theorem~\ref{thm:spca-multi-rank-r-guarantees} satisfies
\begin{align}
      \left(1-\sfrac{\delta}{6}\right) \cdot {\mathbf{x}_d}^{\transpose} \widebar{\A}{\mathbf{x}_d}
   \;\le\;
      \widehat{\mathbf{x}}^{\transpose} \widebar{\A} \widehat{\mathbf{x}}
   \;\le\;
      {\mathbf{x}_d}^{\transpose} \widebar{\A}{\mathbf{x}_d},
\end{align}
where $\mathbf{x}_d$ is the true $s$-sparse principal component of $\widebar{\A}$.
This, in turn, implies that $\widehat{\mathbf{x}}$ satisfies
\begin{align}
   \left| 
      \widehat{\mathbf{x}}^{\transpose} \widebar{\A} \widehat{\mathbf{x}}
      -
      {\mathbf{x}_d}^{\transpose} \widebar{\A}{\mathbf{x}_d}
   \right|
   \le
      \frac{\delta}{6} \cdot
      {\mathbf{x}_d}^{\transpose} \widebar{\A}{\mathbf{x}_d} 
   \le
      \frac{\delta}{6}
      \left( 1 + \frac{\delta}{6}\right)s \le \frac{\delta}{3} \cdot {s},
      \label{eq:by-theor-1}
\end{align}
where the second inequality follows from the fact that the entries of $\widebar{\A}$ lie in $[-1-\frac{\delta}{6}, 1+\frac{\delta}{6}]$ and $\widehat{\mathbf{x}}$ is ${s}$-sparse and unit-norm.

In the following, 
we bound the difference of the performance of $\widehat{\mathbf{x}}$ on the original matrix $\mathbf{A}$ from the optimal value.
Let $\x_{\star}$ denote the ${s}$-sparse principal component of $\A$ and define
\begin{align}
	\opt
	\eqdef
	{\x_{\star}}^T \mathbf{A} \x_{\star}. \nonumber
\end{align}
Then,
\begin{align}
	| \opt  - \widehat{\bf x}^\T\A\widehat{\bf x} |
	&=| \x_{\star}^\T\A\x_{\star} - 
		 \widehat{\bf x}^\T\A\widehat{\bf x} | 	\nonumber\\
	&=| \x_{\star}^\T\A\x_{\star} 
	       - \x_{d}^\T\widebar{\A}\x_{d}+ {\x_{d}}^\T\widebar{\A}\x_{d} 
	       -\widehat{\bf x}^\T\A\widehat{\bf x} |  \nonumber \\
	&\le\underbrace{| \x_{\star}^\T\A\x_{\star} -{\x_{d}}^\T\widebar{\A}\x_{d}|}_{A}
	       + \underbrace{| {\x_{d}}^\T\widebar{\A}\x_{d} - \widehat{\bf x}^\T\A\widehat{\bf x} |}_{B}.
	\label{eq:prwto-part}
\end{align}
Utilizing \eqref{eq:A-Ad-quadratic} and the triangle inequality,
one can verify that
\begin{align}
A
   &=| {\x_{\star}}^\T\A\x_{\star} - {\x_{d}}^\T\A\x_{d} \phantom{|}+ \phantom{|}{\x_{d}}^\T\A\x_{d}- {\x_{d}}^\T\widebar{\A}\x_{d}	|	 \nonumber \\
   &\le | {\x_{\star}}^\T\A\x_{\star} - {\x_{d}}^\T\A\x_{d} | + 
      |{\x_{d}}^\T\A\x_{d}- {\x_{d}}^\T\widebar{\A}\x_{d}	|	 \nonumber \\
   &\le
      \phantom{|}
	 \underbrace{ {\x_{\star}}^\T\A\x_{\star} - {\x_{d}}^\T\A\x_{d} }_{\ge 0}\phantom{|}
	 + \phantom{|}\frac{\delta}{6} \cdot{s} \phantom{|}\nonumber\\
   &\le
      \phantom{|}{\x_{\star}}^\T\A\x_{\star} - {\x_{d}}^\T\A\x_{d} \phantom{|}
      +
       \phantom{|}\frac{\delta}{6} \cdot{s} \phantom{|}
       +
      \phantom{|}\underbrace{ {\x_{d}}^\T\widebar{\A}\x_{d} - {\x_{\star}}^\T\widebar{\A}\x_{\star}}_{\ge 0}\phantom{|}
     \nonumber\\
   &\le
      \phantom{|}{\x_{\star}}^\T\A\x_{\star} - {\x_{\star}}^\T\widebar{\A}\x_{\star}\phantom{|}
       + \phantom{|}\frac{\delta}{6} \cdot{s} \phantom{|}
      + \phantom{|}{\x_{d}}^\T\widebar{\A}\x_{d} - {\x_{d}}^\T\A\x_{d}\phantom{|}
     \nonumber\\
   &\le
      | {\x_{\star}}^\T\A\x_{\star} - {\x_{\star}}^\T\widebar{\A}\x_{\star} |
       + \phantom{|}\frac{\delta}{6} \cdot{s} \phantom{|}
      + | {\x_{d}}^\T\widebar{\A}\x_{d} - {\x_{d}}^\T\A\x_{d} |
      \nonumber\\
   & \le 
   \frac{\delta}{2}\cdot{s}.
   \label{eq:deftero-part}
\end{align}
Similarly,
\begin{align}
B
   &=\left| {\x_{d}}^\T\widebar{\A}\x_{d} - \widehat{\bf x}^\T\widebar{\A}\widehat{\bf x}
	    + \widehat{\bf x}^\T\widebar{\A}\widehat{\bf x} - \widehat{\bf x}^\T\A\widehat{\bf x} \right|  \nonumber \\
   &=\left| {\x_{d}}^\T\widebar{\A}\x_{d} - \widehat{\bf x}^\T\widebar{\A}\widehat{\bf x} \right|
	    + \left|\widehat{\bf x}^\T\widebar{\A}\widehat{\bf x} - \widehat{\bf x}^\T\A\widehat{\bf x} \right|  \nonumber \\
   &\le
   \left| {\x_{d}}^\T\widebar{\A}\x_{d} - \widehat{\bf x}^\T\widebar{\A}\widehat{\bf x} \right|
	    + \frac{\delta}{6} \cdot{s}  \nonumber \\
   &\stackrel{(\alpha)}{\le}
      \frac{2\delta}{6} \cdot{s} + \frac{\delta}{6} \cdot{s}  \nonumber \\
   &\stackrel{}{\le} \frac{\delta}{2}\cdot{s}.
   \label{eq:trito-part}
\end{align}
where 
$(\alpha)$ follows from \eqref{eq:by-theor-1}.
Continuing from \eqref{eq:prwto-part},
combining \eqref{eq:deftero-part} and \eqref{eq:trito-part} we obtain
\begin{align}
	\left| \opt \right. & \left.- \widehat{\bf x}^\T\A\widehat{\bf x} \right|
	\le \delta \cdot{s},
	\nonumber
\end{align}
which is the desired result.
\hfill$\blacksquare$

\section{Auxiliary Technical Lemmata}

\begin{lemma}
   \label{lemma:partial-sing-val-sum}
   For any real $\dimension \times n$ matrix $\mathbf{M}$,
   and any $r, k \le \min\lbrace \dimension, n \rbrace$,
   \begin{align}
	  \sum_{i=r+1}^{r + k} \sigma_{i}(\mathbf{M})
	  \le
	  \frac{{k}}{\sqrt{r+k}} \cdot \|\mathbf{M}\|_{\frob},
	  \nonumber
   \end{align} 
   where $\sigma_{i}(\mathbf{M})$ is the $i$th largest singular value of $\mathbf{M}$.
\end{lemma}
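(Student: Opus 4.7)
The plan is to combine the Cauchy--Schwarz inequality with the fact that the singular values are sorted in non-increasing order, so that the ``tail'' average (over indices $r{+}1,\dots,r{+}k$) is controlled by the overall average (over indices $1,\dots,r{+}k$).

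First, I would apply the Cauchy--Schwarz inequality to the sum, treating $\sigma_{r+1},\dots,\sigma_{r+k}$ against the all-ones vector of length $k$:
\begin{align}
\Bigl(\sum_{i=r+1}^{r+k}\sigma_{i}(\mathbf{M})\Bigr)^{2}
\;\le\; k\cdot \sum_{i=r+1}^{r+k}\sigma_{i}(\mathbf{M})^{2}.
\nonumber
\end{align}
This already yields the weaker bound $\sqrt{k}\,\|\mathbf{M}\|_{\frob}$, which is not sharp enough when $r$ is large.

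Second, I would exploit the ordering $\sigma_{1}(\mathbf{M})\ge \sigma_{2}(\mathbf{M})\ge \cdots\ge 0$ to sharpen the second factor. Since the squared singular values are also non-increasing, the average of the last $k$ terms of $\sigma_{1}^{2},\dots,\sigma_{r+k}^{2}$ is at most the average of all $r{+}k$ terms, so
\begin{align}
\frac{1}{k}\sum_{i=r+1}^{r+k}\sigma_{i}(\mathbf{M})^{2}
\;\le\;
\frac{1}{r+k}\sum_{i=1}^{r+k}\sigma_{i}(\mathbf{M})^{2}
\;\le\;
\frac{1}{r+k}\,\|\mathbf{M}\|_{\frob}^{2},
\nonumber
\end{align}
where the last step uses $\|\mathbf{M}\|_{\frob}^{2}=\sum_{i}\sigma_{i}(\mathbf{M})^{2}$. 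Substituting this back into the Cauchy--Schwarz bound and taking square roots gives the claim
$\sum_{i=r+1}^{r+k}\sigma_{i}(\mathbf{M}) \le \frac{k}{\sqrt{r+k}}\,\|\mathbf{M}\|_{\frob}$.

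There is no real obstacle here; the only subtle step is the monotone rearrangement argument (i.e., a decreasing sequence has its tail average bounded by its total average), which is a standard one-line consequence of the ordering. I would present it cleanly by noting that for each $j\in\{1,\dots,k\}$ and each $\ell\in\{1,\dots,r\}$, the inequality $\sigma_{r+j}(\mathbf{M})^{2}\le \sigma_{\ell}(\mathbf{M})^{2}$ holds, then averaging. Everything else is bookkeeping, and the proof fits in a few lines.
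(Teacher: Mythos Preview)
Your proposal is correct and matches the paper's own proof essentially step for step: Cauchy--Schwarz against the all-ones vector, followed by the observation that the $k$ smallest of the first $r+k$ squared singular values have average at most the full average, and finally bounding by $\|\mathbf{M}\|_{\frob}^{2}$. There is nothing to add.
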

\begin{proof}
   By the Cauchy-Schwartz inequality,
   \begin{align}
	  \sum_{i=r+1}^{r + k} \sigma_{i}(\mathbf{M})
	  =
	  \sum_{i=r+1}^{r + k} \lvert\sigma_{i}(\mathbf{M})\rvert
	  \le
	  \left( \sum_{i=r+1}^{r + k} \sigma_{i}^{2}(\mathbf{M}) \right)^{1/2} \cdot \|\mathbf{1}_{{k}}\|_{2}
	  =
	  \sqrt{{k}} \cdot \left( \sum_{i=r+1}^{r + k} \sigma_{i}^{2}(\mathbf{M}) \right)^{1/2}.
	  \nonumber
   \end{align}
   Note that $\sigma_{r+1}(\mathbf{M}), \hdots, \sigma_{r+k}(\mathbf{M})$ are the ${k}$ smallest among the $r+k$ largest singular values. 
   Hence,
   \begin{align}
	  \sum_{i=r+1}^{r + k} \sigma_{i}^{2}(\mathbf{M})
	  \le
	  \frac{{k}}{r+k}\sum_{i=1}^{r + k} \sigma_{i}^{2}(\mathbf{M})
	  \le
	  \frac{{k}}{r+k} \sum_{i=1}^{\min\lbrace{\dimension,n}\rbrace} \sigma_{i}^{2}(\mathbf{M})
	  =
	  \frac{{k}}{r+k} \|\mathbf{M}\|_{\frob}^{2}.
	  \nonumber
   \end{align}
   Combining the two inequalities, the desired result follows.
\end{proof}
\begin{corollary}
	\label{sigma-bound}
   For any real $\dimension \times n$ matrix $\mathbf{M}$
   and $k \le \min\lbrace \dimension, n \rbrace$,
   $\sigma_{{k}}(\mathbf{M}) \le  k^{-1/2} \cdot \| \mathbf{M} \|_{\frob}$.
\end{corollary}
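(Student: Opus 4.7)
The plan is to derive this as an immediate consequence of the non-increasing ordering of singular values combined with the Frobenius-norm identity $\|\mathbf{M}\|_{\frob}^{2} = \sum_{i} \sigma_{i}^{2}(\mathbf{M})$. In fact, the required inequality is already implicit in the intermediate step of the proof of Lemma~\ref{lemma:partial-sing-val-sum}; I will just specialize that reasoning to a single squared singular value.

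Concretely, I would proceed as follows. Since $\sigma_{1}(\mathbf{M}) \ge \sigma_{2}(\mathbf{M}) \ge \cdots \ge \sigma_{k}(\mathbf{M})$, the term $\sigma_{k}^{2}(\mathbf{M})$ is the smallest among the top $k$ squared singular values, so
\[
   k \cdot \sigma_{k}^{2}(\mathbf{M}) \;\le\; \sum_{i=1}^{k} \sigma_{i}^{2}(\mathbf{M}).
\]
Extending the sum on the right to all $\min\{\dimension, n\}$ singular values only increases it, so
\[
   k \cdot \sigma_{k}^{2}(\mathbf{M}) \;\le\; \sum_{i=1}^{\min\{\dimension,n\}} \sigma_{i}^{2}(\mathbf{M}) \;=\; \|\mathbf{M}\|_{\frob}^{2}.
\]
Dividing by $k$ and taking the nonnegative square root yields the claimed bound $\sigma_{k}(\mathbf{M}) \le k^{-1/2}\cdot \|\mathbf{M}\|_{\frob}$.

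There is essentially no obstacle here; the whole argument is two lines. The only thing worth being careful about is ensuring $k \le \min\{\dimension,n\}$ so that $\sigma_{k}(\mathbf{M})$ is well-defined as a nonzero-index singular value, which is exactly the hypothesis of the corollary. An even shorter alternative would be to invoke Lemma~\ref{lemma:partial-sing-val-sum} directly with parameters chosen so that the summation range reduces to the single index $k$ (namely $r = k-1$ together with a singleton range), but the self-contained two-line derivation above is cleaner and avoids repurposing the lemma's notation.
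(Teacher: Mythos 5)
Your proof is correct and is essentially the paper's argument: the paper derives the corollary by invoking Lemma~\ref{lemma:partial-sing-val-sum} (with the summation range collapsed to the single index $k$, as you note in your alternative), and your two-line derivation simply inlines the same averaging step $k\cdot\sigma_{k}^{2}(\mathbf{M})\le\sum_{i=1}^{k}\sigma_{i}^{2}(\mathbf{M})\le\|\mathbf{M}\|_{\frob}^{2}$ that underlies that lemma. No gaps.
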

\begin{proof}
   It follows immediately from Lemma~\ref{lemma:partial-sing-val-sum}.
\end{proof}
\begin{lemma}
   \label{holder-consequence}
   Let $a_1,\hdots , a_n$ 
   and $b_1, \hdots, b_n$ be $2n$ real numbers and let $p$ and
   $q$ be two numbers such that ${1/p} + {1/q} = 1$ and $p>1$. We have
   \begin{align}
	  \bigl\lvert 
		 \sum_{i=1}^{n} a_{i}b_{i}
	  \bigr\rvert
	  \le
	  \left( \sum_{i=1}^{n} \lvert  a_{i}\rvert^{p} \right)^{1/p}
	  \cdot
	  \left( \sum_{i=1}^{n} \lvert  b_{i}\rvert^{q} \right)^{1/q}.
	  \nonumber
   \end{align}
\end{lemma}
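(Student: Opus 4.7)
\textbf{Proof plan for Lemma~\ref{holder-consequence} (H\"older's inequality).}
This is the classical finite H\"older inequality, so my plan is to reproduce the textbook argument based on Young's inequality. I will proceed in four short steps.

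\textbf{Step 1: Reduction to the nonnegative case.} First I apply the triangle inequality to get $\bigl\lvert \sum_{i} a_i b_i \bigr\rvert \le \sum_i \lvert a_i\rvert \lvert b_i\rvert$, so it suffices to prove the inequality with $a_i, b_i \ge 0$. I then dispose of the trivial case in which $\sum_i a_i^p = 0$ or $\sum_i b_i^q = 0$: in either case, every nonnegative term in the left-hand sum is forced to vanish and both sides are $0$.

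\textbf{Step 2: Young's inequality.} The core lemma I need is that for any nonnegative reals $x, y$ and conjugate exponents $p, q > 1$ with $1/p + 1/q = 1$,
\begin{align}
   xy \;\le\; \frac{x^{p}}{p} + \frac{y^{q}}{q}.
   \nonumber
\end{align}
I would justify this in one line via concavity of $\log$: if $x, y > 0$, write $xy = \exp\bigl(\tfrac{1}{p}\log x^{p} + \tfrac{1}{q} \log y^{q}\bigr)$ and use concavity of $\log$ (equivalently, convexity of $\exp$) together with $1/p + 1/q = 1$ to upper bound this by $\tfrac{1}{p} x^{p} + \tfrac{1}{q} y^{q}$. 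The case where $x = 0$ or $y = 0$ is immediate.

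\textbf{Step 3: Normalize and apply Young termwise.} Set $A \eqdef \bigl(\sum_i a_i^{p}\bigr)^{1/p}$ and $B \eqdef \bigl(\sum_i b_i^{q}\bigr)^{1/q}$, both strictly positive by Step 1. Define $x_i \eqdef a_i/A$ and $y_i \eqdef b_i/B$, so that $\sum_i x_i^{p} = 1$ and $\sum_i y_i^{q} = 1$. Applying Young's inequality to each pair gives
\begin{align}
   \sum_{i=1}^{n} x_{i} y_{i}
   \;\le\;
   \sum_{i=1}^{n}\left(\frac{x_{i}^{p}}{p} + \frac{y_{i}^{q}}{q}\right)
   \;=\;
   \frac{1}{p}\sum_{i=1}^{n} x_{i}^{p} + \frac{1}{q}\sum_{i=1}^{n} y_{i}^{q}
   \;=\;
   \frac{1}{p} + \frac{1}{q} \;=\; 1.
   \nonumber
\end{align}

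\textbf{Step 4: Denormalize.} Multiplying through by $AB$ yields $\sum_i a_i b_i \le A \cdot B$, i.e. the claimed bound. Combining with Step 1's triangle inequality delivers the statement for general real $a_i, b_i$. I do not anticipate any real obstacle here; the only subtlety worth flagging is the degenerate case in Step 1 (zero denominator after normalization), which is dispatched in a single line.
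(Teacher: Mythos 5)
Your proof is correct and complete: the reduction to nonnegative terms, the treatment of the degenerate case, and the normalization-plus-Young's-inequality argument are the standard derivation of finite H\"older. Note that the paper itself states Lemma~\ref{holder-consequence} without proof, treating it as a classical auxiliary fact (used later only with $p=q=2$, i.e., Cauchy--Schwarz), so your write-up simply supplies the textbook argument the authors omitted; there is no discrepancy to reconcile.
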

\begin{lemma}
   \label{lemma:frob-of-matrix-prod}
   For any two real matrices $\mathbf{A}$ and $\mathbf{B}$ of appropriate dimensions,
   \begin{align}
	  \|\mathbf{A}\mathbf{B}\|_{\frob}
	  \le
	  \min\mathopen{}
	  \left\lbrace
		 \|\mathbf{A}\|_{2} \|\mathbf{B}\|_{\frob}, \;
		 \|\mathbf{A}\|_{\frob} \|\mathbf{B}\|_{2}
	  \right\rbrace.
	  \nonumber
   \end{align}
\end{lemma}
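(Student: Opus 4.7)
The plan is to prove each of the two bounds inside the $\min$ separately; the inequality then follows by taking the smaller of the two. The key fact I will use is the definition of the spectral norm as the operator norm, $\|\mathbf{A}\|_{2} = \sup_{\|\x\|_{2}=1}\|\mathbf{A}\x\|_{2}$, together with the column-wise (equivalently row-wise) decomposition of the Frobenius norm, $\|\mathbf{M}\|_{\frob}^{2} = \sum_{j} \|\mathbf{M}_{:,j}\|_{2}^{2} = \sum_{i} \|\mathbf{M}_{i,:}\|_{2}^{2}$.

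For the first bound, I would write $\|\mathbf{A}\mathbf{B}\|_{\frob}^{2} = \sum_{j}\|\mathbf{A}\mathbf{B}_{:,j}\|_{2}^{2}$, where $\mathbf{B}_{:,j}$ denotes the $j$th column of $\mathbf{B}$. For each term, the definition of the spectral norm gives $\|\mathbf{A}\mathbf{B}_{:,j}\|_{2} \le \|\mathbf{A}\|_{2}\cdot\|\mathbf{B}_{:,j}\|_{2}$. Summing over $j$ yields $\|\mathbf{A}\mathbf{B}\|_{\frob}^{2} \le \|\mathbf{A}\|_{2}^{2} \sum_{j}\|\mathbf{B}_{:,j}\|_{2}^{2} = \|\mathbf{A}\|_{2}^{2}\,\|\mathbf{B}\|_{\frob}^{2}$, and taking square roots establishes $\|\mathbf{A}\mathbf{B}\|_{\frob} \le \|\mathbf{A}\|_{2}\|\mathbf{B}\|_{\frob}$.

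For the second bound, I would reduce to the first by transposing: since $\|\mathbf{M}\|_{\frob} = \|\mathbf{M}^{\transpose}\|_{\frob}$ and $\|\mathbf{M}\|_{2} = \|\mathbf{M}^{\transpose}\|_{2}$, we have $\|\mathbf{A}\mathbf{B}\|_{\frob} = \|\mathbf{B}^{\transpose}\mathbf{A}^{\transpose}\|_{\frob} \le \|\mathbf{B}^{\transpose}\|_{2}\,\|\mathbf{A}^{\transpose}\|_{\frob} = \|\mathbf{B}\|_{2}\,\|\mathbf{A}\|_{\frob}$, where the inequality is an application of the first bound to the pair $(\mathbf{B}^{\transpose},\mathbf{A}^{\transpose})$. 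Combining the two inequalities gives $\|\mathbf{A}\mathbf{B}\|_{\frob} \le \min\{\|\mathbf{A}\|_{2}\|\mathbf{B}\|_{\frob},\,\|\mathbf{A}\|_{\frob}\|\mathbf{B}\|_{2}\}$, as required.

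There is no real obstacle here: the statement is a textbook consequence of the sub-multiplicativity of operator norms with respect to the Frobenius norm. The only point to be careful about is making sure the dimensions of $\mathbf{A}$ and $\mathbf{B}$ are compatible so that the column/row decomposition is well-defined, which is exactly the "appropriate dimensions" hypothesis in the lemma.
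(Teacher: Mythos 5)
Your proof is correct and follows essentially the same route as the paper: bound $\|\mathbf{A}\mathbf{B}\|_{\frob}^{2}$ column by column via $\|\mathbf{A}\mathbf{b}_{j}\|_{2}\le\|\mathbf{A}\|_{2}\|\mathbf{b}_{j}\|_{2}$, then obtain the second bound by transposing and invoking the first. No gaps.
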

\begin{proof}
   Let $\mathbf{b}_{i}$ denote the $i$th column of $\mathbf{B}$. 
   Then,
   \begin{align}
	  \|\mathbf{A}\mathbf{B}\|_{\frob}^{2}
	  =
	  \sum_{i} \| \mathbf{A} \mathbf{b}_{i} \|_{2}^{2}
	  \le
	  \sum_{i} \| \mathbf{A}\|_{2}^{2}  \|\mathbf{b}_{i} \|_{2}^{2}
	  =
	  \| \mathbf{A}\|_{2}^{2} \sum_{i}   \|\mathbf{b}_{i} \|_{2}^{2}
	  =
	  \| \mathbf{A}\|_{2}^{2} \|\mathbf{B} \|_{\frob}^{2}.
	  \nonumber 
   \end{align}
   Similarly, using the previous inequality,
   \begin{align}
	  \|\mathbf{A}\mathbf{B}\|_{\frob}^{2}
	  =
	  \|\mathbf{B}^{\transpose}\mathbf{A}^{\transpose}\|_{\frob}^{2}
	  \le
	  \|\mathbf{B}^{\transpose}\|_{2}^{2}\|\mathbf{A}^{\transpose}\|_{\frob}^{2}
	  =
	  \|\mathbf{B}\|_{2}^{2}\|\mathbf{A}\|_{\frob}^{2}.
	  \nonumber
   \end{align}
   Combining the two upper bounds, the desired result follows.
\end{proof}
\begin{lemma}
   \label{lemma:inner-prod-ub}
   For any $\mathbf{A}, \mathbf{B} \in \mathbb{R}^{n \times {{k}}}$,
   \begin{align}
	  \bigl\lvert 
		 \langle \mathbf{A}, \mathbf{B} \rangle
	  \bigr\rvert 
	  \eqdef 
	  \bigl\lvert
		 \trace\left( \mathbf{A}^{\transpose} \mathbf{B}\right)
	  \bigr\rvert
	  \le
	  \|\mathbf{A}\|_{\frob}
	  \|\mathbf{B}\|_{\frob}.
	  \nonumber
   \end{align}
\end{lemma}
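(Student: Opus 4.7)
The plan is to recognize Lemma \ref{lemma:inner-prod-ub} as the classical Cauchy--Schwarz inequality in disguise. For $\mathbf{A}, \mathbf{B} \in \mathbb{R}^{n \times k}$, the Frobenius inner product expands entrywise as
$$\langle \mathbf{A}, \mathbf{B}\rangle \;=\; \trace\bigl(\mathbf{A}^{\transpose}\mathbf{B}\bigr) \;=\; \sum_{j=1}^{k}\sum_{i=1}^{n} A_{ij}B_{ij},$$
which is precisely the standard Euclidean inner product of the column-stacked vectorizations $\mathrm{vec}(\mathbf{A}), \mathrm{vec}(\mathbf{B}) \in \mathbb{R}^{nk}$. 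The Frobenius norm similarly coincides with the Euclidean norm of the vectorization: $\|\mathbf{A}\|_{\frob}^{2} = \sum_{i,j} A_{ij}^{2} = \|\mathrm{vec}(\mathbf{A})\|_{2}^{2}$, and likewise for $\mathbf{B}$. Applying the vector Cauchy--Schwarz inequality to $\mathrm{vec}(\mathbf{A})$ and $\mathrm{vec}(\mathbf{B})$ then yields the claim in one line.

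If a more ``matricial'' argument is preferred, I would instead decompose the trace column-wise as $\trace(\mathbf{A}^{\transpose}\mathbf{B}) = \sum_{j=1}^{k}\langle \mathbf{A}^{j}, \mathbf{B}^{j}\rangle$, use the triangle inequality to bound this by $\sum_{j=1}^{k} \bigl|\langle \mathbf{A}^{j}, \mathbf{B}^{j}\rangle\bigr|$, apply vector Cauchy--Schwarz to each summand to obtain $\|\mathbf{A}^{j}\|_{2}\|\mathbf{B}^{j}\|_{2}$, and finally apply Cauchy--Schwarz a second time to the two length-$k$ sequences $\bigl(\|\mathbf{A}^{j}\|_{2}\bigr)_{j}$ and $\bigl(\|\mathbf{B}^{j}\|_{2}\bigr)_{j}$. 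The resulting product $\bigl(\sum_{j}\|\mathbf{A}^{j}\|_{2}^{2}\bigr)^{1/2}\bigl(\sum_{j}\|\mathbf{B}^{j}\|_{2}^{2}\bigr)^{1/2}$ equals $\|\mathbf{A}\|_{\frob}\|\mathbf{B}\|_{\frob}$ by the definition of the Frobenius norm.

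There is genuinely no technical obstacle: the substantive content is only the observation that $(\mathbb{R}^{n \times k}, \langle \cdot, \cdot\rangle_{\frob})$ is isometric to $\mathbb{R}^{nk}$ equipped with its standard inner product, so Cauchy--Schwarz transfers verbatim. Alternatively, the inequality is exactly the $p=q=2$ case of the scalar H\"older inequality stated as Lemma \ref{holder-consequence} just above, applied to the $nk$ entries of $\mathbf{A}$ and $\mathbf{B}$, so it could equally well be presented as a direct corollary of that companion lemma if one prefers a unified treatment of the auxiliary estimates.
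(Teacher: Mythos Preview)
Your proposal is correct and matches the paper's approach exactly: the paper's proof is the one-liner ``The inequality follows from Lemma~\ref{holder-consequence} for $p=q=2$, treating $\mathbf{A}$ and $\mathbf{B}$ as vectors,'' which is precisely the vectorization-plus-Cauchy--Schwarz argument you describe (and explicitly identify as the $p=q=2$ case of Lemma~\ref{holder-consequence}).
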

\begin{proof} 
	The inequality follows from Lemma~\ref{holder-consequence} for ${p=q=2}$, treating $\mathbf{A}$ and $\mathbf{B}$ as vectors.
\end{proof}
%

\begin{lemma}
	\label{lemma:orthogonal-proj-bound}
   For any real
   $m \times n$ matrix $\mathbf{A}$,
   and any $k \le \min\lbrace {m},\; {n}\rbrace$, 
      \begin{align}
   		\max_{
			\substack{
				{\mathbf{Y}} \in \mathbb{R}^{n \times {{k}}}\\
				{\mathbf{Y}}^{\transpose}{\mathbf{Y}} = \mathbf{I}_{{k}} 
			}
		}
   		\|\mathbf{A} {\mathbf{Y}} \|_{\frob}
		=
		\left( 		\sum_{i=1}^{{k}} \sigma_{i}^{2}(\mathbf{A}) \right)^{1/2}.
		\nonumber
   \end{align}
   The maximum is attained by $\mathbf{Y}$ coinciding with the ${k}$ leading right singular vectors of $\mathbf{A}$.
\end{lemma}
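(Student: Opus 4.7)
The plan is to reduce to a diagonal form via SVD and then solve the resulting scalar optimization explicitly. Write $\|\mathbf{A}\mathbf{Y}\|_{\frob}^{2} = \trace(\mathbf{Y}^{\top}\mathbf{A}^{\top}\mathbf{A}\mathbf{Y})$, and let $\mathbf{A} = \mathbf{U}\boldsymbol{\Sigma}\mathbf{V}^{\top}$ be a (thin) SVD, so that $\mathbf{A}^{\top}\mathbf{A} = \mathbf{V}\boldsymbol{\Sigma}^{2}\mathbf{V}^{\top}$ where $\boldsymbol{\Sigma}^{2}$ is diagonal with entries $\sigma_{1}^{2}(\mathbf{A}) \ge \sigma_{2}^{2}(\mathbf{A}) \ge \cdots$. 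Set $\mathbf{Z} \eqdef \mathbf{V}^{\top}\mathbf{Y} \in \mathbb{R}^{n \times k}$; since $\mathbf{V}$ is orthogonal we have $\mathbf{Z}^{\top}\mathbf{Z} = \mathbf{Y}^{\top}\mathbf{V}\mathbf{V}^{\top}\mathbf{Y} = \mathbf{Y}^{\top}\mathbf{Y} = \mathbf{I}_{k}$, so the feasible sets for $\mathbf{Y}$ and $\mathbf{Z}$ are in bijection. Hence the problem becomes
\begin{align}
\max_{\mathbf{Z}^{\top}\mathbf{Z} = \mathbf{I}_{k}} \trace(\mathbf{Z}^{\top}\boldsymbol{\Sigma}^{2}\mathbf{Z}) = \max_{\mathbf{Z}^{\top}\mathbf{Z} = \mathbf{I}_{k}} \sum_{i} \sigma_{i}^{2}(\mathbf{A}) \cdot r_{i}, \nonumber
\end{align}
where $r_{i} \eqdef \sum_{j=1}^{k} Z_{ij}^{2}$ is the squared $\ell_{2}$-norm of the $i$th row of $\mathbf{Z}$.

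The next step is to characterize the feasible vector $(r_{i})_{i}$. From $\mathbf{Z}^{\top}\mathbf{Z} = \mathbf{I}_{k}$ one obtains $\sum_{i} r_{i} = \trace(\mathbf{Z}^{\top}\mathbf{Z}) = k$, and since $\mathbf{Z}\mathbf{Z}^{\top}$ is then an orthogonal projector of rank $k$ onto the column span of $\mathbf{Z}$, its diagonal entries $r_{i}$ lie in $[0,1]$. Conversely, any assignment of $r_{i} \in \{0,1\}$ selecting $k$ indices is realizable by taking $\mathbf{Z}$ to consist of the corresponding standard basis vectors. So the optimization reduces to
\begin{align}
\max\Bigl\{ \sum_{i} \sigma_{i}^{2}(\mathbf{A}) \, r_{i} : 0 \le r_{i} \le 1,\; \sum_{i} r_{i} = k \Bigr\}, \nonumber
\end{align}
which is a linear program over a polytope whose vertices are exactly the $\{0,1\}$-vectors with $k$ ones.

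Because the coefficients $\sigma_{i}^{2}(\mathbf{A})$ are sorted in nonincreasing order, a greedy/rearrangement argument shows the optimum is attained by $r_{1} = \cdots = r_{k} = 1$ and $r_{i} = 0$ for $i > k$, yielding the value $\sum_{i=1}^{k} \sigma_{i}^{2}(\mathbf{A})$. Taking the square root gives the claimed bound. Unwinding the substitution, this optimum is achieved by $\mathbf{Z} = \bigl[\mathbf{I}_{k};\,\mathbf{0}\bigr]^{\top}$ and correspondingly $\mathbf{Y} = \mathbf{V}_{k}$, the matrix of the $k$ leading right singular vectors of $\mathbf{A}$. No step is really an obstacle here; the only point that deserves care is verifying that the row-norm vector $(r_{i})$ of a matrix with orthonormal columns lies in $[0,1]^{n}$ with coordinate sum $k$ — this is the observation that turns the matrix problem into the scalar LP above.
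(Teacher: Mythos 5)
Your proof is correct and follows essentially the same route as the paper's: reduce via the SVD to maximizing $\sum_i \sigma_i^2(\mathbf{A})\, r_i$ over row-weights $r_i \in [0,1]$ with $\sum_i r_i = k$ and take the top $k$ indices (the paper bounds this sum directly rather than phrasing it as an LP over the hypersimplex, a cosmetic difference). One small nit: the identity $\mathbf{Z}^{\transpose}\mathbf{Z} = \mathbf{Y}^{\transpose}\mathbf{V}\mathbf{V}^{\transpose}\mathbf{Y} = \mathbf{Y}^{\transpose}\mathbf{Y}$ requires the full SVD with $\mathbf{V}$ square and orthogonal (as the paper uses), not the thin SVD when $m < n$, but this is trivially fixed.
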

\begin{proof}
	Let $\mathbf{U}\mathbf{\Sigma}\mathbf{V}^{\transpose}$ be the singular value decomposition of $\mathbf{A}$;
	$\mathbf{U}$ and $\mathbf{V}$ are $m \times m$ and $n \times n$ unitary matrices respectively, while $\Sigma$ is a diagonal matrix with ${\Sigma_{jj} = \sigma_{j}}$, the $j$th largest singular value of~$\mathbf{A}$, $j=1, \hdots, d$, where $d \eqdef \min\lbrace {m}, {n} \rbrace$.
	Due to the invariance of the Frobenius norm under unitary multiplication,
   \begin{align}
   		\|\mathbf{A} \mathbf{Y} \|_{\frob}^{2}
		=
		\|\mathbf{U}\mathbf{\Sigma}\mathbf{V}^{\transpose} \mathbf{Y} \|_{\frob}^{2}
		=
		\|\mathbf{\Sigma}\mathbf{V}^{\transpose} \mathbf{Y} \|_{\frob}^{2}.
		\label{frob-norm-unitary}
   \end{align}
   Continuing from~\eqref{frob-norm-unitary},
   \begin{align}
		\|\mathbf{\Sigma}\mathbf{V}^{\transpose} \mathbf{Y} \|_{\frob}^{2}
		=
		\trace\left(\mathbf{Y}^{\transpose}\mathbf{V}\mathbf{\Sigma}^{2}\mathbf{V}^{\transpose} \mathbf{Y}\right)
		=
		\sum_{i=1}^{{k}} \mathbf{y}_{i}^{\transpose} 
			\Bigl( 
				\sum_{j=1}^{d} \sigma_{j}^{2} \cdot \mathbf{v}_{j} \mathbf{v}_{j}^{\transpose}
			\Bigr)
			\mathbf{y}_{i}
		=
		\sum_{j=1}^{d} 
			\sigma_{j}^{2} \cdot \sum_{i=1}^{{k}} \left( \mathbf{v}_{j}^{\transpose} \mathbf{y}_{i}\right)^{2}.
		\nonumber
   \end{align}
   Let $z_{j} \eqdef \sum_{i=1}^{{k}} \left( \mathbf{v}_{j}^{\transpose} \mathbf{y}_{i}\right)^{2}$, $j=1, \hdots, d$.
   Note that each individual $z_{j}$ satisfies
   \begin{align}
   		 0 \le z_{j} \eqdef \sum_{i=1}^{{k}} \left( \mathbf{v}_{j}^{\transpose} \mathbf{y}_{i}\right)^{2}
		\le
		\|\mathbf{v}_{j}\|^{2} = 1,
		\nonumber
   \end{align}
   where the last inequality follows from the fact that the columns of $\mathbf{Y}$ are orthonormal.
   Further,
   \begin{align}
   		\sum_{j=1}^{d} z_{j} 
		=  
		\sum_{j=1}^{d}\sum_{i=1}^{{k}} \left( \mathbf{v}_{j}^{\transpose} \mathbf{y}_{i}\right)^{2}
		=
		\sum_{i=1}^{{k}}\sum_{j=1}^{d} \left( \mathbf{v}_{j}^{\transpose} \mathbf{y}_{i}\right)^{2}
		=
		\sum_{i=1}^{{k}}\|\mathbf{y}_{i}\|^{2} = k.
		\nonumber
   \end{align}
   Combining the above, we conclude that
   \begin{align}
		\|\mathbf{A}\mathbf{Y} \|_{\frob}^{2}
		=
		\sum_{j=1}^{d} 
			\sigma_{j}^{2} \cdot z_{j}
		\le \sigma_{1}^{2} + \hdots + \sigma_{{k}}^{2}.
		\label{frob-prod-ub}
   \end{align}
   Finally, it is straightforward to verify that if $\mathbf{y}_{i} = \mathbf{v}_{i}$, $i=1, \hdots, {{k}}$, then~\eqref{frob-prod-ub} holds with equality.
\end{proof}

\begin{lemma}
   \label{lemma:abs-trace-XAY-ub}
   For any real
   $\dimension \times n$ matrix $\mathbf{A}$,
   and pair of 
   $\dimension \times {{k}}$ matrix $\X$ and  $n \times {{k}}$ matrix $\Y$
   such that $\X^{\transpose}\X=\mathbf{I}_{{k}}$ and $\Y^{\transpose}\Y=\mathbf{I}_{{k}}$
   with ${k} \le \min\lbrace {\dimension},\; {n}\rbrace$, the following holds:
   \begin{align}
	  \bigl\lvert
	  \trace\left( \X^{\transpose} \mathbf{A} \Y \right)
	  \bigr\rvert
	  \le
	  \sqrt{{k}}\cdot 
	  \Bigl( \sum_{i=1}^{{k}} \sigma_{i}^{2}\left(\mathbf{A}\right) \Bigr)^{1/2}.
	  \nonumber
   \end{align}
\end{lemma}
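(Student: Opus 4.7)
\medskip

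\noindent\textbf{Proof proposal.} The plan is to reduce the claim to the two technical lemmata already proved in the excerpt, namely Lemma~\ref{lemma:inner-prod-ub} (Cauchy--Schwarz for the Frobenius inner product) and Lemma~\ref{lemma:orthogonal-proj-bound} (the Frobenius norm of $\A$ times an orthonormal block is controlled by the top-$k$ singular values of $\A$).

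First, I would rewrite the trace as a Frobenius inner product, $\trace(\X^\transpose \A \Y) = \langle \X, \A \Y \rangle$, and apply Lemma~\ref{lemma:inner-prod-ub} to get
\[
   |\trace(\X^\transpose \A \Y)| \;\le\; \|\X\|_\frob \cdot \|\A \Y\|_\frob.
\]
Since $\X^\transpose \X = \I_k$, the Frobenius norm of $\X$ is simply $\|\X\|_\frob = \sqrt{\trace(\X^\transpose \X)} = \sqrt{k}$.

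Next, I would bound $\|\A \Y\|_\frob$. Because $\Y \in \mathbb{R}^{n \times k}$ has orthonormal columns, Lemma~\ref{lemma:orthogonal-proj-bound} applied to $\A$ yields
\[
   \|\A \Y\|_\frob \;\le\; \Bigl( \sum_{i=1}^{k} \sigma_i^2(\A) \Bigr)^{1/2}.
\]
Combining the two bounds gives the desired inequality
\[
   |\trace(\X^\transpose \A \Y)| \;\le\; \sqrt{k} \cdot \Bigl( \sum_{i=1}^{k} \sigma_i^2(\A) \Bigr)^{1/2}.
\]

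There is essentially no obstacle here; both ingredients are already established. The only sanity check is that Lemma~\ref{lemma:orthogonal-proj-bound} is stated for $\A \Y$ rather than $\A^\transpose \X$, but by the symmetry of the argument (or by passing to transposes and applying the same lemma to $\A^\transpose$) either side of the inner product could be used to extract the singular-value bound. Writing it this way makes the calculation a two-line application and avoids any computation with the SVD directly.
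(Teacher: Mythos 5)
Your proposal is correct and follows essentially the same route as the paper's proof: apply Lemma~\ref{lemma:inner-prod-ub} to get $|\trace(\X^\transpose \A \Y)| \le \|\X\|_\frob \|\A\Y\|_\frob = \sqrt{k}\,\|\A\Y\|_\frob$, then bound $\|\A\Y\|_\frob$ via Lemma~\ref{lemma:orthogonal-proj-bound}. The transpose sanity check you mention is not even needed, since Lemma~\ref{lemma:orthogonal-proj-bound} directly covers $\A\Y$ with $\Y$ orthonormal.
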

\begin{proof}
   By Lemma~\ref{lemma:inner-prod-ub},
   \begin{align}
	  \lvert \langle \X,\, \mathbf{A}\Y \rangle\rvert
	  =
	  \bigl\lvert
	  \trace\left( \X^{\transpose} \mathbf{A} \Y \right)
	  \bigr\rvert
	  \le
	  \|\X\|_{\frob} \cdot \|\mathbf{A}\Y\|_{\frob}
	  =
	  \sqrt{{k}} \cdot \|\mathbf{A}\Y\|_{\frob}.
	  \nonumber
   \end{align}
   where the last inequality follows from the fact that $\|\X\|_{\frob}^{2} = \trace\left(\X^{\transpose}\X\right) = \trace\left(\mathbf{I}_{{k}}\right) = {k}$.
Combining with a bound on $\|\mathbf{A} \Y \|_{\frob}$ as in Lemma~\ref{lemma:orthogonal-proj-bound}, completes the proof.
\end{proof}
\begin{lemma}
	\label{cor:orthogonal-proj-bound}
   For any real $\dimension \times \dimension$ PSD matrix $\mathbf{A}$,
   and $k \times \dimension$ matrix $\mathbf{X}$ with $k \le \dimension$ orthonormal columns,
      \begin{align}
   		\trace\left(\X^{\transpose}\mathbf{A}\X\right)
		\le
		\sum_{i=1}^{{k}} \lambda_{i}(\mathbf{A})
		\nonumber
   \end{align}
   where $\lambda_{i}(\mathbf{A})$ is the $i$th largest eigenvalue of $\mathbf{A}$.
   Equality is achieved for $\X$ coinciding with the ${k}$ leading eigenvectors of~$\mathbf{A}$.
\end{lemma}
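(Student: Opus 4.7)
The statement is a standard Ky Fan-type trace maximization, and the plan is to adapt the argument already used in Lemma~\ref{lemma:orthogonal-proj-bound} to the symmetric quadratic form $\X^{\transpose}\A\X$ rather than $\mathbf{A}\mathbf{Y}$. I would begin by writing the eigendecomposition $\A = \U\boldsymbol{\Lambda}\U^{\transpose}$, where $\U$ is $d\times d$ orthogonal and $\boldsymbol{\Lambda}=\diag(\lambda_{1},\ldots,\lambda_{d})$ with $\lambda_{1}\ge\ldots\ge\lambda_{d}\ge 0$ (non-negativity from PSD). Setting $\Y\eqdef\U^{\transpose}\X$, orthogonality of $\U$ gives $\Y^{\transpose}\Y=\X^{\transpose}\X=\Id_{k}$, so $\Y$ again has orthonormal columns. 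Then
\begin{align}
\trace\bigl(\X^{\transpose}\A\X\bigr)
=\trace\bigl(\Y^{\transpose}\boldsymbol{\Lambda}\Y\bigr)
=\sum_{i=1}^{d}\lambda_{i}\,z_{i},
\quad \text{where } z_{i}\eqdef\sum_{j=1}^{k}Y_{ij}^{2}.
\nonumber
\end{align}

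Next, I would establish the two key properties of the $z_{i}$. First, $\sum_{i=1}^{d}z_{i}=\|\Y\|_{\frob}^{2}=\trace(\Y^{\transpose}\Y)=k$. Second, $0\le z_{i}\le 1$ for every $i$: extending the $d\times k$ matrix $\Y$ to a full $d\times d$ orthogonal matrix $[\Y\;\Y']$, the $i$th row of this orthogonal matrix has unit norm, which forces $z_{i}=\sum_{j}Y_{ij}^{2}\le 1$. So the quantity $\trace(\X^{\transpose}\A\X)$ is a convex combination of the $\lambda_{i}$'s with total mass $k$ and individual masses in $[0,1]$.

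The final step is to upper-bound $\sum_{i=1}^{d}\lambda_{i}z_{i}$ by $\sum_{i=1}^{k}\lambda_{i}$. Using $\sum_{i=1}^{k}(1-z_{i})=\sum_{i=k+1}^{d}z_{i}$ (from $\sum_{i}z_{i}=k$), I would write
\begin{align}
\sum_{i=1}^{k}\lambda_{i}-\sum_{i=1}^{d}\lambda_{i}z_{i}
=\sum_{i=1}^{k}\lambda_{i}(1-z_{i})-\sum_{i=k+1}^{d}\lambda_{i}z_{i}
\ge\lambda_{k}\sum_{i=1}^{k}(1-z_{i})-\lambda_{k}\sum_{i=k+1}^{d}z_{i}=0,
\nonumber
\end{align}
where each inequality uses that $\lambda_{i}\ge\lambda_{k}$ for $i\le k$ and $\lambda_{i}\le\lambda_{k}$ for $i>k$, together with $1-z_{i}\ge 0$ and $z_{i}\ge 0$. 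This yields the claimed bound. For the equality clause, I would check that picking $\X=\U_{:,1:k}$ gives $\Y=[\Id_{k};\mathbf{0}]$, hence $z_{i}=1$ for $i\le k$ and $z_{i}=0$ otherwise, saturating the bound.

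There is no real obstacle here: the argument is essentially a rearrangement inequality on a bounded, mass-constrained weight vector, entirely parallel to the computation in Lemma~\ref{lemma:orthogonal-proj-bound}. The only minor care needed is in justifying $z_{i}\le 1$ via the orthogonal extension, but this is a one-line observation.
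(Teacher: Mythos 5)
Your proof is correct, and every step checks out: the change of variables $\Y=\U^{\transpose}\X$ preserves orthonormality, the weights $z_i$ satisfy $0\le z_i\le 1$ and $\sum_i z_i = k$, and your rearrangement step $\sum_{i=1}^{k}\lambda_i-\sum_{i=1}^{d}\lambda_i z_i\ge \lambda_k\bigl(\sum_{i=1}^{k}(1-z_i)-\sum_{i=k+1}^{d}z_i\bigr)=0$ is valid, as is the equality case. The route differs from the paper's, though: the paper proves this lemma in two lines by writing $\A=\matV\matV^{\transpose}$, observing $\trace(\X^{\transpose}\A\X)=\|\matV^{\transpose}\X\|_{\frob}^{2}$, and then invoking Lemma~\ref{lemma:orthogonal-proj-bound} together with $\lambda_i(\A)=\sigma_i^2(\matV)$, whereas you bypass that reduction and re-derive the bound directly from the eigendecomposition of $\A$. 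In effect you inline the argument of Lemma~\ref{lemma:orthogonal-proj-bound} in the symmetric setting, and you actually make its last step more explicit than the paper does (the paper passes from $z_j\in[0,1]$, $\sum_j z_j=k$ to $\sum_j\sigma_j^2 z_j\le\sum_{j\le k}\sigma_j^2$ without spelling out the rearrangement inequality you wrote down). The paper's version buys brevity by reusing an already-proved lemma; yours buys a self-contained, slightly more detailed proof, at the cost of duplicating that lemma's machinery. Your justification of $z_i\le 1$ via extension to a full orthogonal matrix is also fine and marginally different from the paper's, which bounds $z_j=\sum_i(\mathbf{v}_j^{\transpose}\mathbf{y}_i)^2\le\|\mathbf{v}_j\|^2$ as the squared norm of a projection onto the column span of $\Y$.
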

\begin{proof}
	Let $\mathbf{A} = \mathbf{V}\mathbf{V}^{\transpose}$ be a factorization of the PSD matrix $\mathbf{A}$.
	Then, $\trace\left(\X^{\transpose}\mathbf{A}\X\right) = \trace\left(\X^{\transpose}\mathbf{V}\mathbf{V}^{\transpose}\X\right) = \|\mathbf{V}^{\transpose}\X\|_{\frob}^{2}$.
	The desired result follows by Lemma~\ref{lemma:orthogonal-proj-bound}
	and the fact that $\lambda_{i}(\mathbf{A}) = \sigma_{i}^{2}(\mathbf{V})$, $i=1,\hdots, \dimension$.
\end{proof}

\section{Additional Experimental Results}
\label{sec:apndx-experiments}

\begin{figure}[tbph!]
	\centering
   \subfigure[tight][]{
	  \includegraphics[height=0.38\textwidth, trim=0cm 1pt 0cm 0cm, clip=true]{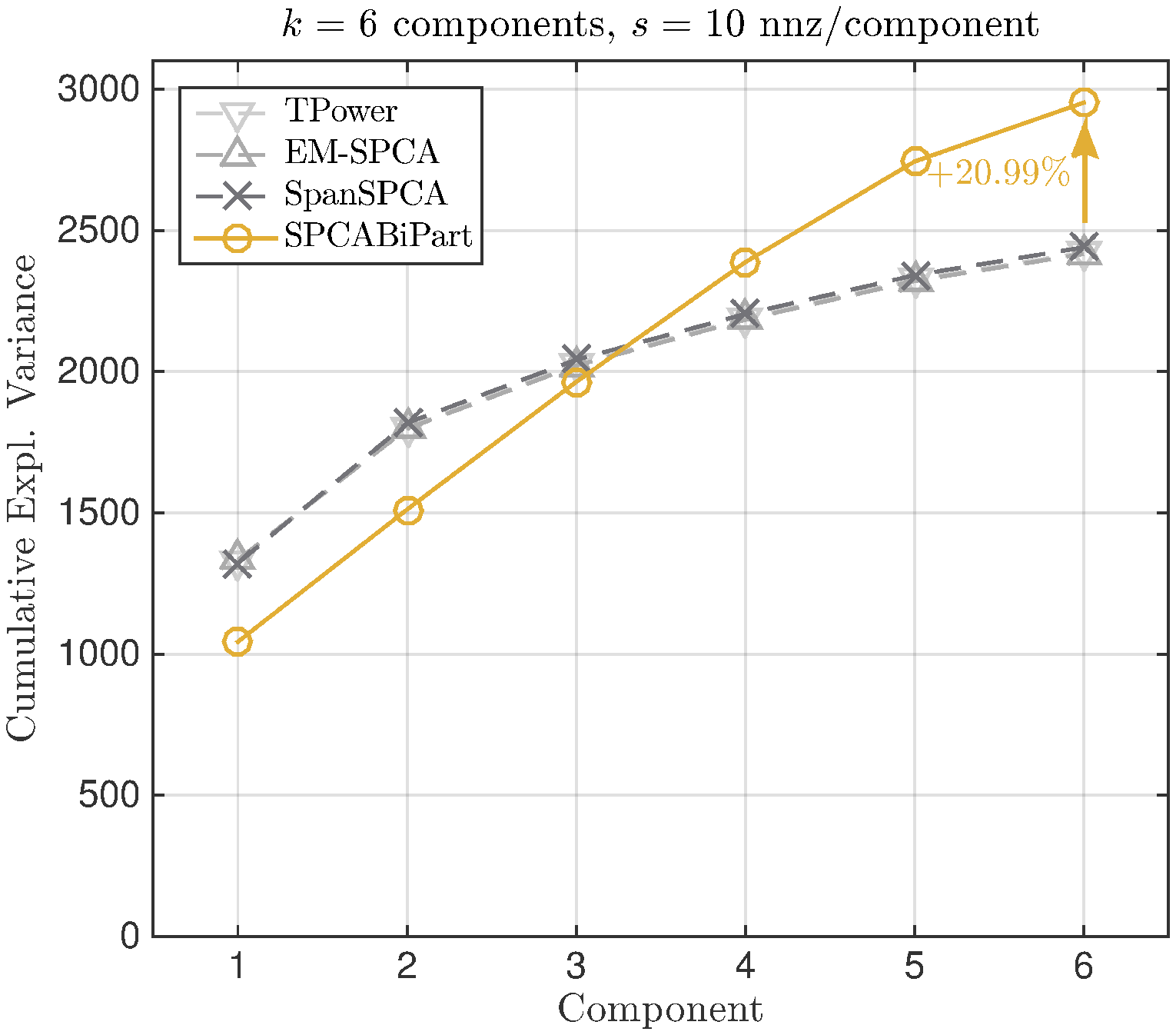}
	  \label{cvar-line-nips-s10}
   }
   \subfigure[tight][]{
	  \includegraphics[height=0.38\textwidth, trim=0cm 1pt 0cm 0cm, clip=true]{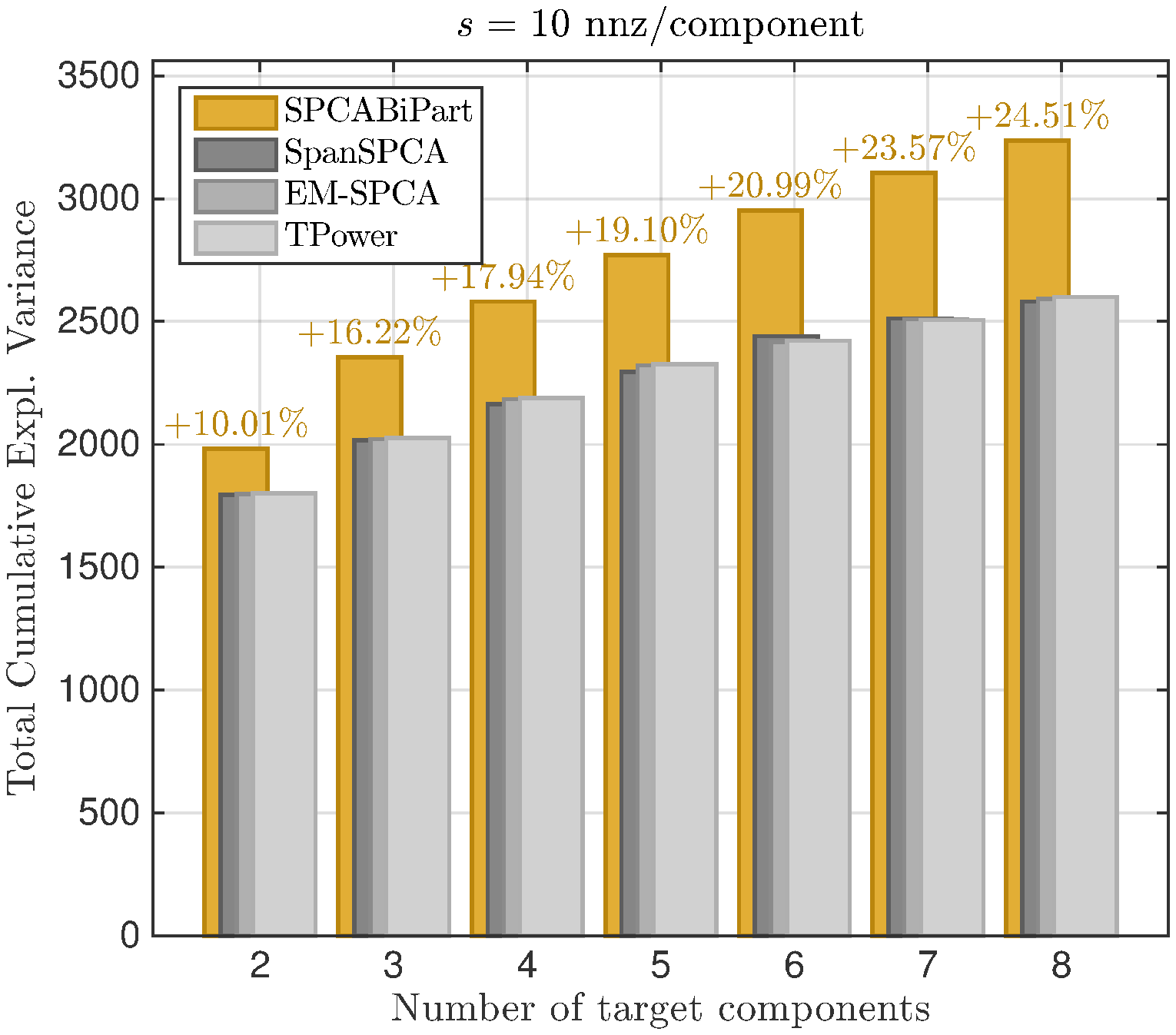}
	  \label{cvar-barplot-nips-s10}
   }
   \vspace{-1.2em}
\caption{
	Cumulative variance captured by $k$ $s$-spars components computed on the word-by-word matrix -- \textsc{BagOfWords:NIPS} dataset~\cite{Lichman:2013}.
	Sparsity is arbitrarily set to $s=10$ nonzero entries per component. 
	Fig.~\ref{cvar-line-nips-s10} depicts the cum. variance captured by ${k=6}$ components. 
	Deflation leads to a greedy formation of components; first components capture high variance, but subsequent ones contribute less. 
	On the contrary, our algorithm jointly optimizes the ${k}$ components and achieves higher total cum. variance. 
	Fig.~\ref{cvar-barplot-nips-s10} depicts the total cum. variance achieved for various values of~$k$. 
	Our algorithm operates on a rank-$4$ approximation of the input.
	}
	\label{bow-nips-plots-k8-s10}
\end{figure}

 \renewcommand{\arraystretch}{1.1}
\begin{table}[tbph!]
	\setcounter{magicrownumbers}{0}
	\scriptsize
	\centering
	\setlength{\tabcolsep}{0.9\tabcolsep}
	\rowcolors{2}{white}{mustard!20!white}
	\begin{tabular}{%
		>{\hspace{-\tabcolsep plus .5em}{\makebox[1.3em][r]{\rownumber\space}}\hspace{.3em}}l<{}%
		>{}l<{}%
		>{}l<{}%
		>{}l<{}%
		>{}l<{}%
		>{}l<{}%
		>{}l<{}%
		>{}l<{\hspace{-\tabcolsep}}%
	}
	\toprulec
	Topic $1$ &
	Topic $2$ &
	Topic $3$ &
	Topic $4$ &
	Topic $5$ &
	Topic $6$ &
	Topic $7$ &
	Topic $8$ \\
	\midrulec
	\midrulec
    \llap{\smash{\rotatebox[origin=c]{90}{%
      \hspace*{-9\normalbaselineskip}
      \textsc{TPower}}}%
      \hspace*{2.3em}}
	network&        algorithm&      neuron& parameter&      object& classifier&     word&   noise\\
model&  data&   cell&   point&  image&  net&    speech& control\\
learning&       system& pattern&        distribution&   recognition&    classification& level&    dynamic\\
input&  error&  layer&  hidden& images& class&  context&        step\\
function&       weight& information&    space&  task&   test&   hmm&    term\\
neural& problem&        signal& gaussian&       features&       order&  character&      optimal\\
unit&   result& visual& linear& feature&        examples&       processing&     component\\
set&    number& field&  probability&    representation& rate&   non&    equation\\
training&       method& synaptic&       mean&   performance&    values& approach&       single\\
output& vector& firing& case&   view&   experiment&     trained&        analysis\\
\midrulec
    \llap{\smash{\rotatebox[origin=c]{90}{%
      \hspace*{-9.5\normalbaselineskip}
      \textsc{SpanSPCA}}}%
      \hspace*{2.3em}}
network&        algorithm&      neuron& parameter&      recognition&    control&        classifier&       noise\\
model&  data&   cell&   distribution&   object& action& classification& order\\
input&  weight& pattern&        point&  image&  dynamic&        class&  term\\
learning&       error&  layer&  linear& word&   step&   net&    component\\
neural& problem&        signal& probability&    performance&    optimal&        test&   rate\\
function&       output& information&    space&  task&   policy& speech& equation\\
unit&   result& visual& gaussian&       features&       states& examples&       single\\
set&    number& synaptic&       hidden& representation& reinforcement&  approach&       analysis\\
system& method& field&  case&   feature&        values& experiment&     large\\
training&       vector& response&       mean&   images& controller&     trained&        form\\
\midrulec
    \llap{\smash{\rotatebox[origin=c]{90}{%
      \hspace*{-10\normalbaselineskip}
      \textsc{SPCABiPart}}}%
      \hspace*{2.3em}}
data&   function&       neuron& unit&   learning&       network&        model&  training\\
distribution&   algorithm&      cell&   weight& space&  input&  parameter&      hidden\\
gaussian&       set&    visual& layer&  action& neural& information&    performance\\
probability&    error&  direction&      net&    order&  system& control&        recognition\\
component&      problem&        firing& task&   step&   output& dynamic&        classifier\\
approach&       result& synaptic&       connection&     linear& pattern&        mean&   test\\
analysis&       number& response&       activation&     case&   signal& noise&  word\\
mixture&        method& spike&  architecture&   values& processing&     field&  speech\\
likelihood&     vector& activity&       generalization& term&   image&  local&  classification\\
experiment&     point&  motion& threshold&      optimal&        object& equation&       trained\\
	\bottomrulec
   \end{tabular}
   \vspace{1em}
   
   \rowcolors{1}{white}{white}
   \begin{tabular}{lc}
	  \toprulec
 					 & Total Cum. Variance \\
	  \midrulec
        \textsc{TPower} & $2.5999 \cdot 10^3$ \\
        \textsc{SpanSPCA} & $2.5981 \cdot 10^3$ \\
        \textsc{SPCABiPart} & $\mathbf{3.2090 \cdot 10^3}$ \\
        \bottomrulec
   \end{tabular}

   \caption{
  \textsc{ BagOfWords:NIPS} dataset~\cite{Lichman:2013}.
   We run various SPCA algorithms for $k=8$ components (topics)
   and ${s=10}$ nonzero entries per component. 
   The table lists the words selected by each component (words corresponding to higher magnitude entries appear higher in the topic).
   Our algorithm was configured to use a rank-$4$ approximation of the input data.
   }
   \label{}
\end{table}   

\begin{figure}[!ht]
	\centering
   \subfigure[tight][]{
	  \includegraphics[height=0.38\textwidth, trim=0cm 1pt 0cm 0cm, clip=true]{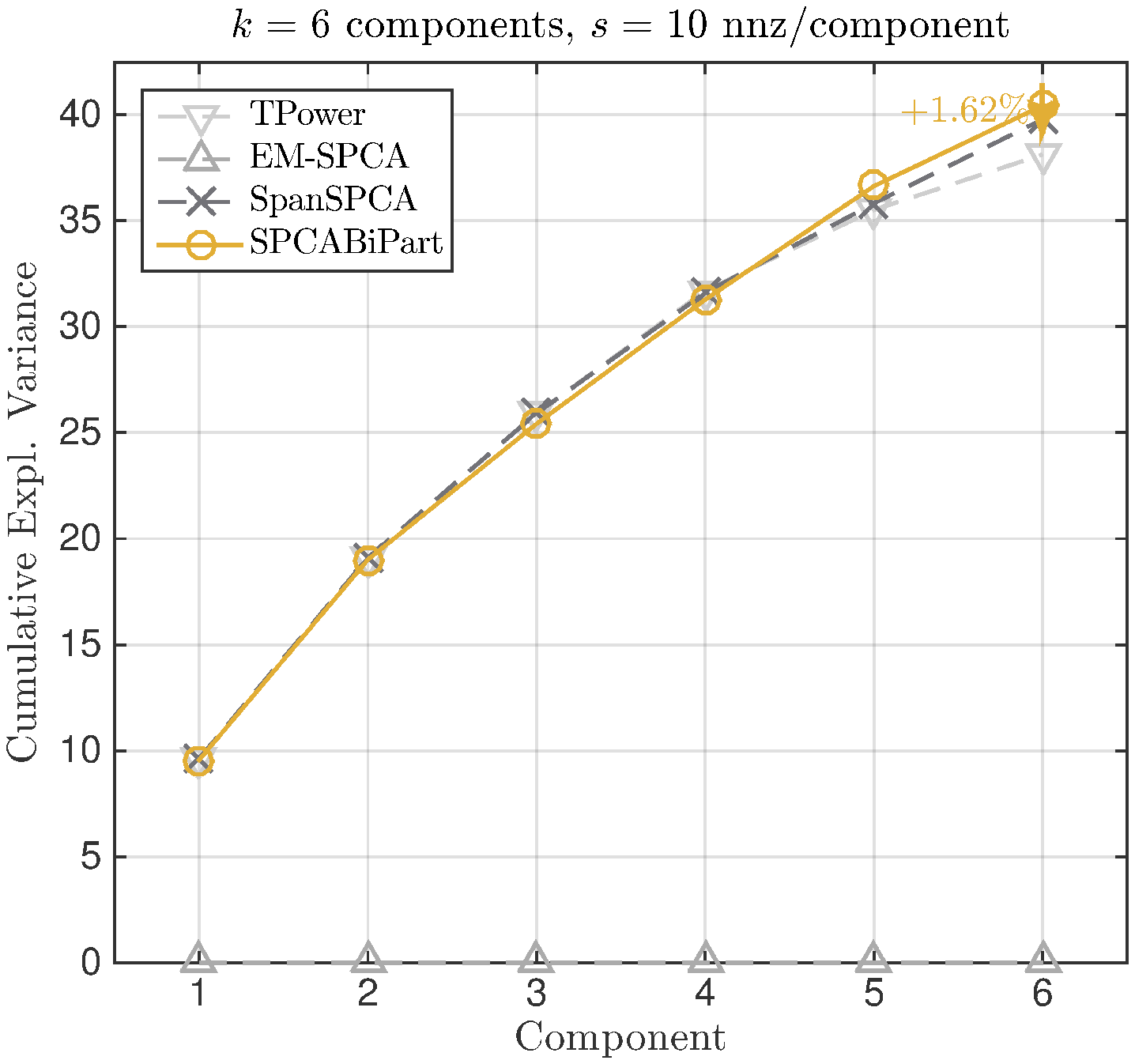}
	  \label{cvar-line-nytimes-s10}
   }
   \subfigure[tight][]{
	  \includegraphics[height=0.38\textwidth, trim=0cm 1pt 0cm 0cm, clip=true]{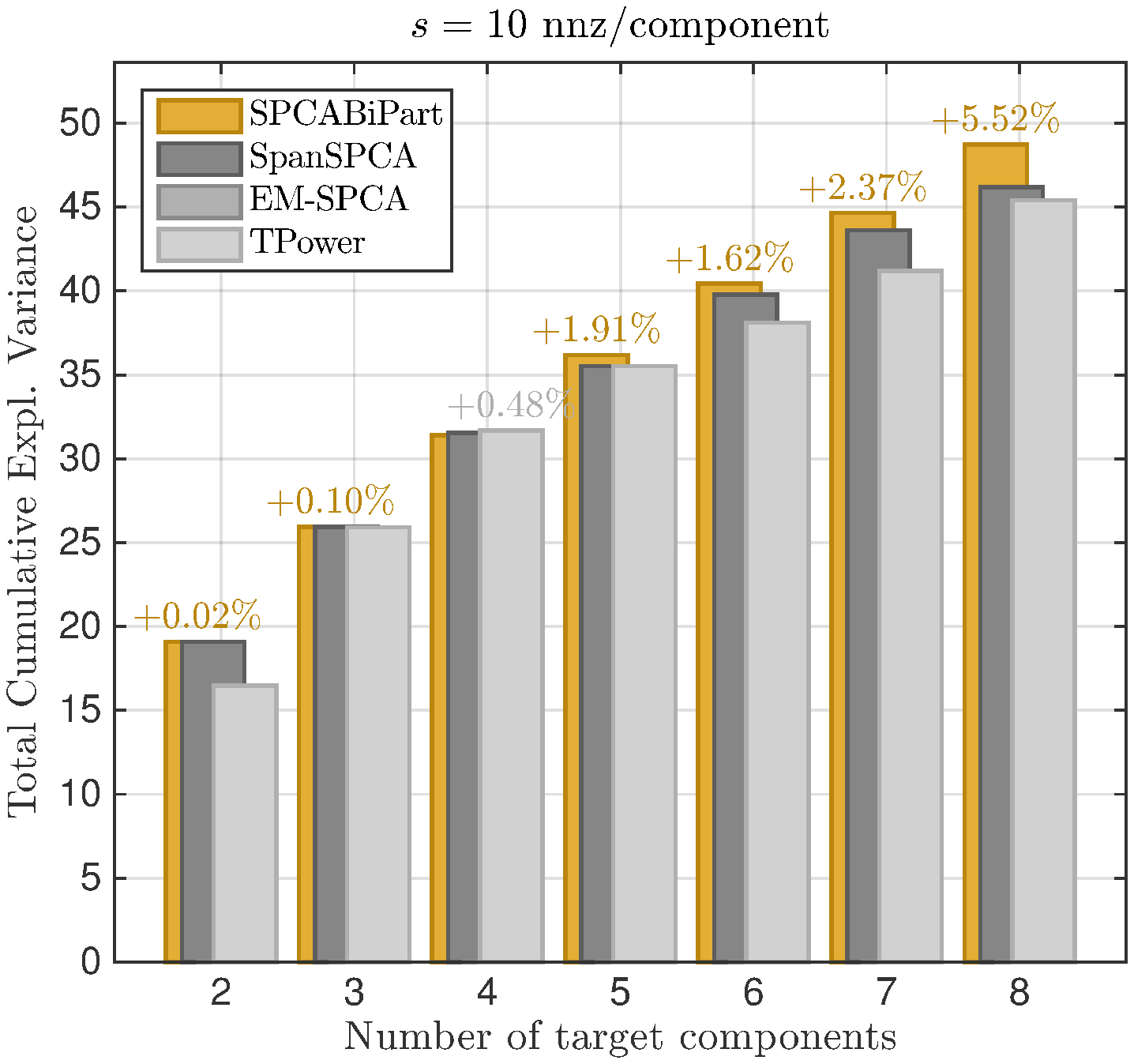}
	  \label{cvar-barplot-nytimes-s10}
   }
   \vspace{-1.2em}
	\caption{
	   	Cumulative variance captured by $k$ $s$-sparse ($s=10$) extracted components on the word-by-word matrix -- \textsc{BagOfWords:NyTimes} dataset~\cite{Lichman:2013}.
		Fig.~\ref{cvar-line-nytimes-s10} depicts the cum. variance captured by ${k=6}$ components. 
Deflation leads to a greedy formation of components;
first components capture high variance, but subsequent ones contribute less.
On the contrary, our algorithm jointly optimizes the ${k}$ components and achieves higher total cum. variance. 
	Fig.~\ref{cvar-barplot-nytimes-s10} depicts the total cum. variance achieved for various values of $k$. 
	Sparsity is arbitrarily set to $s=10$ nonzero entries per component. 
	Our algorithm operates on a rank-$4$ approximation.
	}
	\label{bow:nytimes-plots-k8-s10}
\end{figure}

\begin{figure}[!ht]
	\centering
   \subfigure[tight][]{
	  \includegraphics[height=0.38\textwidth, trim=0cm 1pt 0cm 0cm, clip=true]{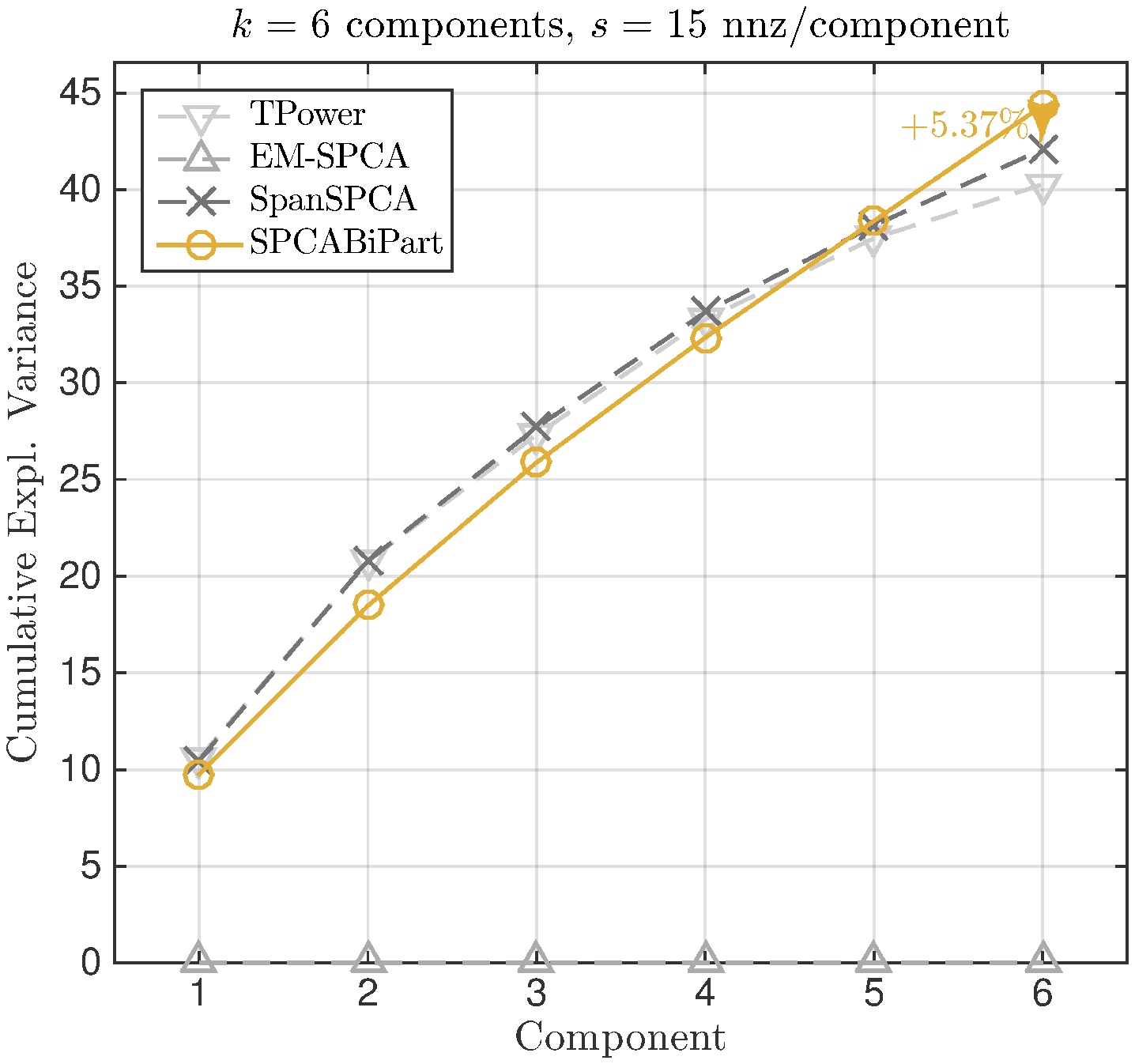}
	  \label{cvar-line-nytimes-s15}
   }
   \subfigure[tight][]{
	  \includegraphics[height=0.38\textwidth, trim=0cm 1pt 0cm 0cm, clip=true]{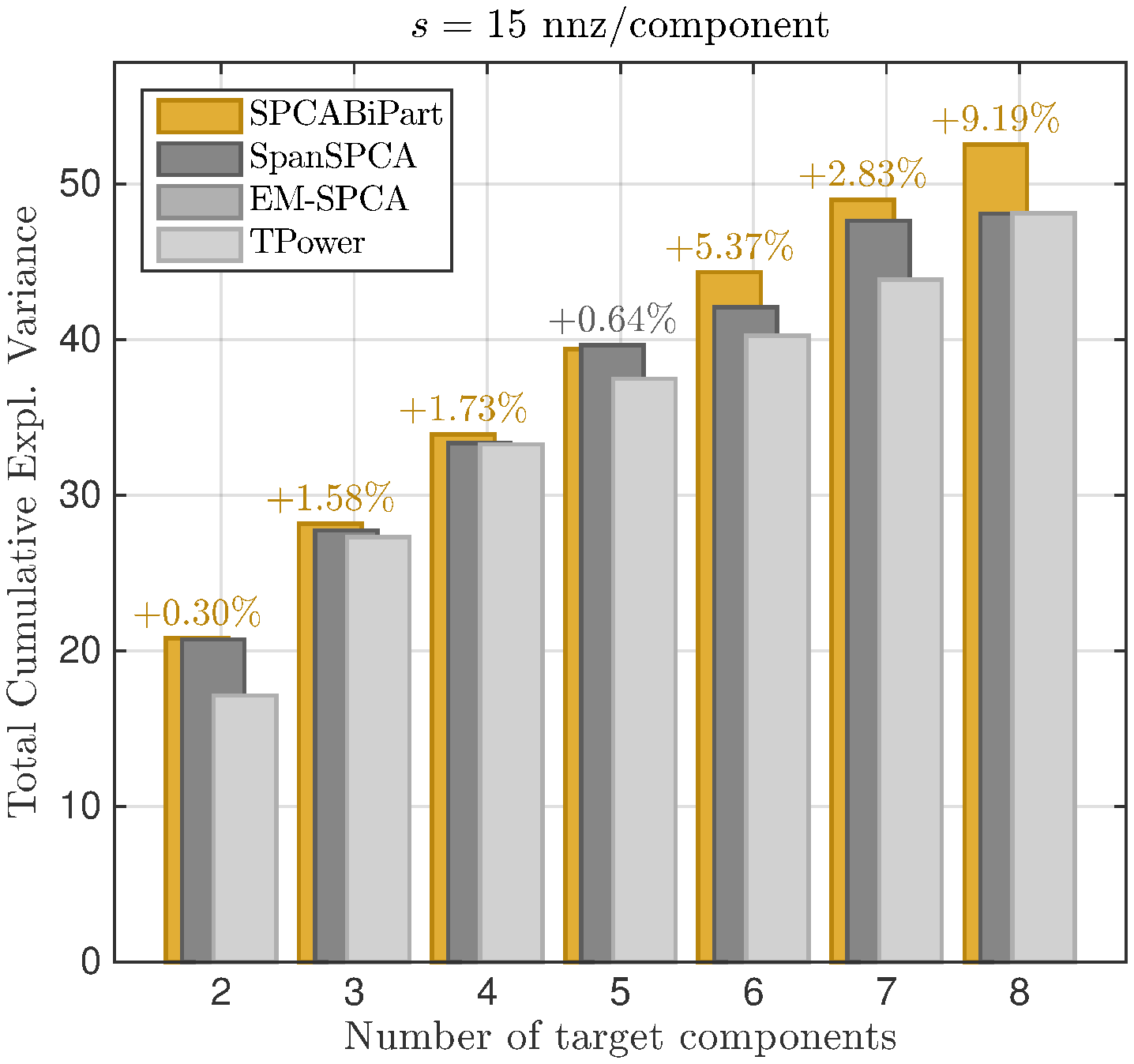}
	  \label{cvar-barplot-nytimes-s15}
   }
   \vspace{-1.2em}
	\caption{
	Same as Fig.~\ref{bow:nytimes-plots-k8-s10}, but for sparsity $s=15$.
	}
	\label{bow:nytimes-plots-k8-s15}
\end{figure}

\begin{figure}[!ht]
	\centering
   \subfigure[tight][]{
	   \includegraphics[height=0.38\textwidth, trim=0cm 1pt 0cm 0cm, clip=true]{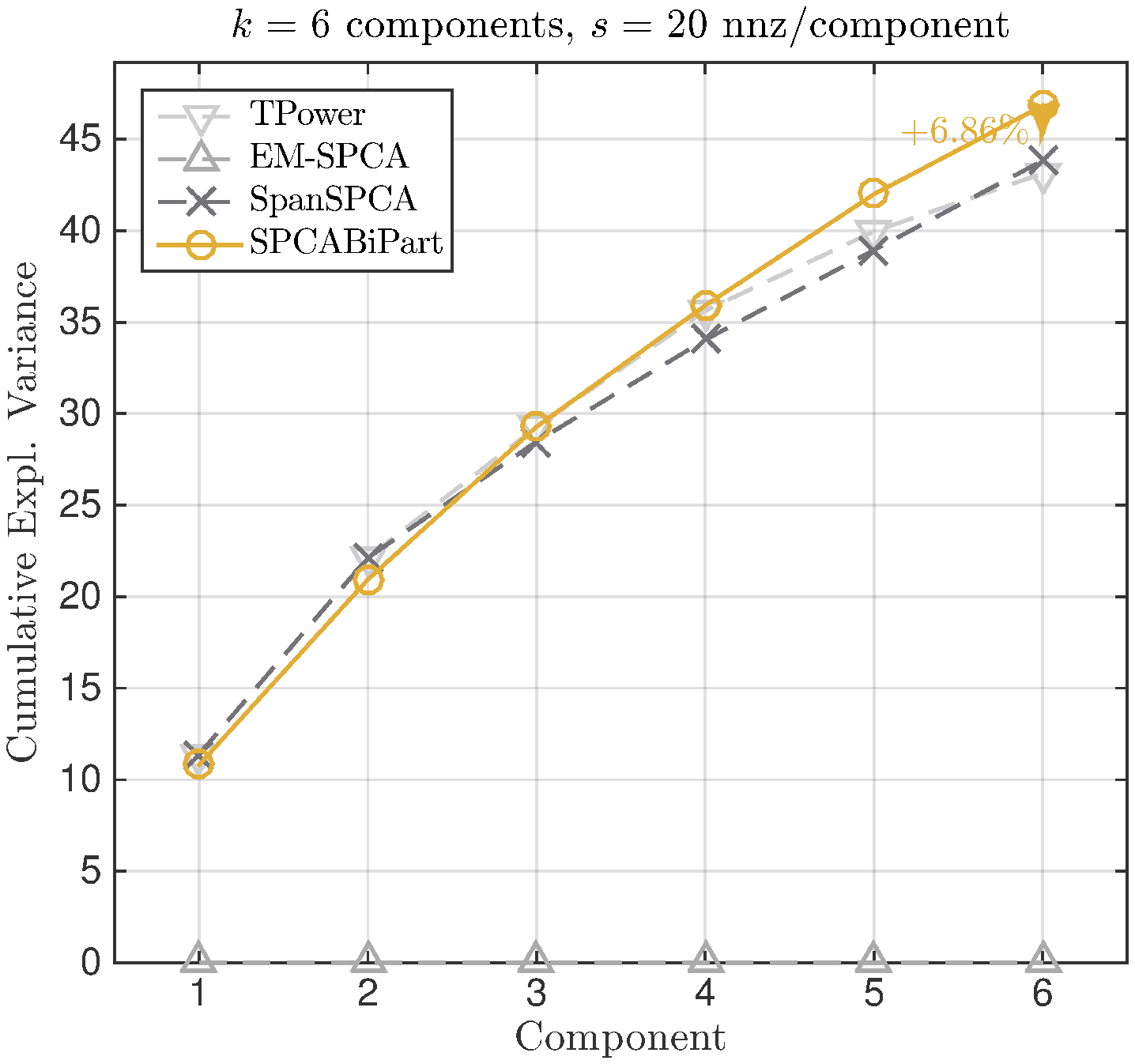}
\label{cvar-line-nytimes-s20}
 }
   \subfigure[tight][]{
   \includegraphics[height=0.38\textwidth, trim=0cm 1pt 0cm 0cm, clip=true]{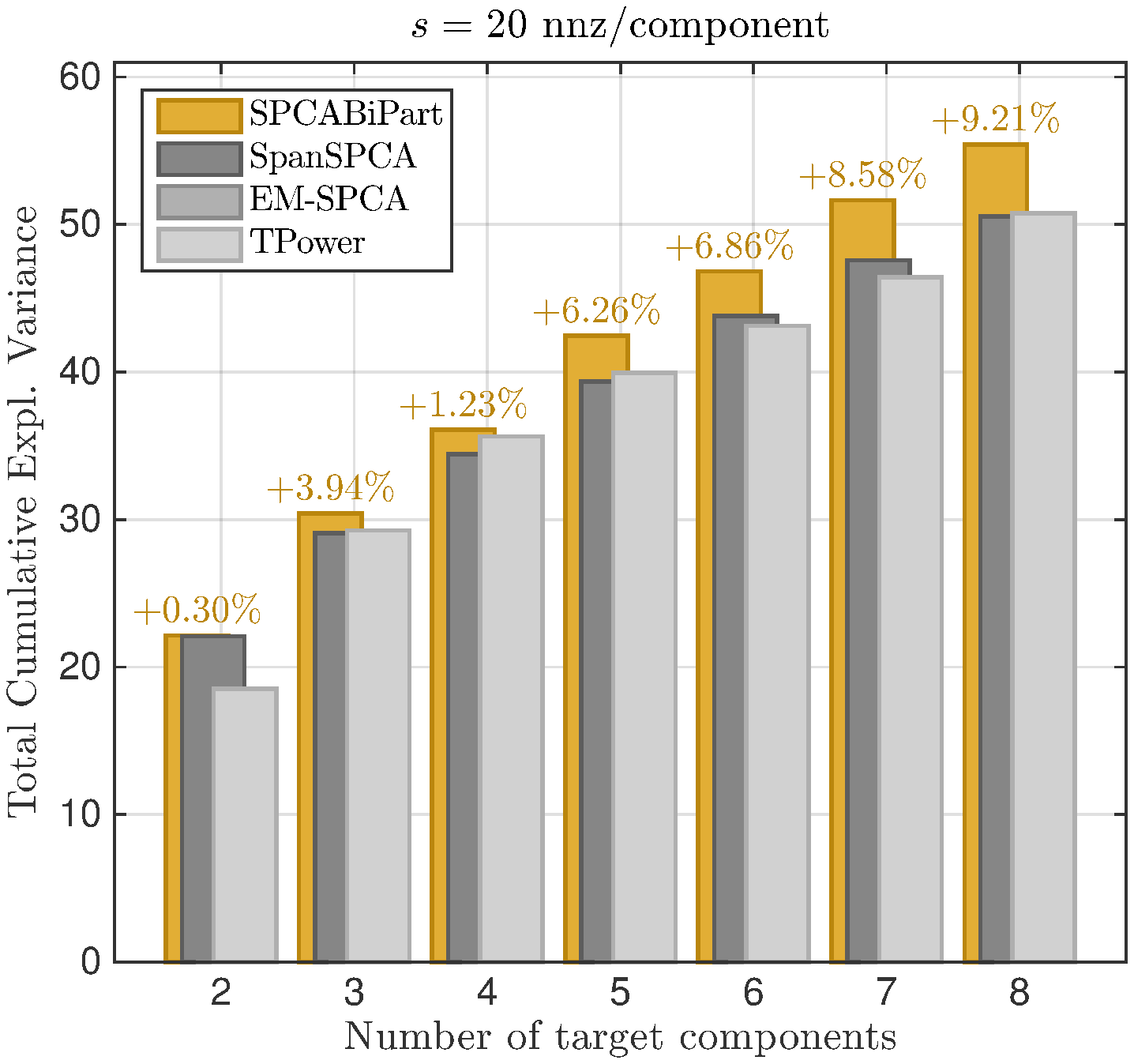}
	  \label{cvar-barplot-nytimes-s20}
   }
   \vspace{-1.2em}
	\caption{
	Same as Fig.~\ref{bow:nytimes-plots-k8-s10}, but for sparsity $s=20$.
	}
	\label{bow:nytimes-plots-k8-s20}
\end{figure}

\renewcommand{\arraystretch}{1.1}
\begin{table}[tbph!]
	\setcounter{magicrownumbers}{0}
	\fontsize{7.0}{8}\selectfont
	\setlength{\tabcolsep}{0.85\tabcolsep}
	\centering
	\rowcolors{2}{white}{mustard!20!white}
	\noindent\makebox[\textwidth]{%
	\begin{tabular}{%
		>{\hspace{-\tabcolsep plus .5em}{\makebox[1.3em][r]{\rownumber\space}}\hspace{.3em}}l<{}%
		>{}l<{}%
		>{}l<{}%
		>{}l<{}%
		>{}l<{}%
		>{}l<{}%
		>{}l<{}%
		>{}l<{\hspace{-\tabcolsep}}%
	}
	\toprulec
	Topic $1$ &
	Topic $2$ &
	Topic $3$ &
	Topic $4$ &
	Topic $5$ &
	Topic $6$ &
	Topic $7$ &
	Topic $8$ \\
	\midrulec
	\midrulec
	\llap{\smash{\rotatebox[origin=c]{90}{%
      \hspace*{-9\normalbaselineskip}
      \textsc{TPower}}}%
      \hspace*{2.3em}}
percent&        zzz\_bush&      team&   school& women&  zzz\_enron&     drug&   palestinian\\
company&        zzz\_al\_gore&  game&   student&        show&   firm&   patient&        zzz\_israel\\
million&        president&      season& program&        book&   zzz\_arthur\_andersen&  doctor& zzz\_israeli\\
companies&      official&       player& high&   com&    deal&   system& zzz\_yasser\_arafat\\
market& zzz\_george\_bush&      play&   children&       look&   lay&    problem&        attack\\
stock&  campaign&       games&  right&  american&       financial&      law&    leader\\
business&       government&     point&  group&  need&   energy& care&   peace\\
money&  plan&   run&    home&   part&   executives&     cost&   israelis\\
billion&        administration& coach&  public& family& accounting&     help&   israeli\\
fund&   zzz\_white\_house&      win&    teacher&        found&  partnership&    health& zzz\_west\_bank\\
	\midrulec
	\llap{\smash{\rotatebox[origin=c]{90}{%
      \hspace*{-9\normalbaselineskip}
      \textsc{SpanSPCA}}}%
      \hspace*{2.3em}}
percent&        team&   zzz\_bush&      palestinian&    school& cup&    show&   won\\
company&        game&   zzz\_al\_gore&  attack& student&        minutes&        com&    night\\
million&        season& president&      zzz\_united\_states&    children&       add&    part&   left\\
companies&      player& zzz\_george\_bush&      zzz\_u\_s&      program&        tablespoon&     look&   big\\
market& play&   campaign&       military&       home&   teaspoon&       need&   put\\
stock&  games&  official&       leader& family& oil&    book&   win\\
business&       point&  government&     zzz\_israel&    women&  pepper& called& hit\\
money&  run&    political&      zzz\_american&  public& water&  hour&   job\\
billion&        right&  election&       war&    high&   large&  american&       ago\\
plan&   coach&  group&  country&        law&    sugar&  help&   zzz\_new\_york\\
	\midrulec
	\llap{\smash{\rotatebox[origin=c]{90}{%
      \hspace*{-9\normalbaselineskip}
      \textsc{SPCABiPart}}}%
      \hspace*{2.3em}}
percent&        zzz\_united\_states&    zzz\_bush&      company&        team&   cup&    school& zzz\_al\_gore\\
million&        zzz\_u\_s&      official&       companies&      game&   minutes&        student&        zzz\_george\_bush\\
money&  zzz\_american&  government&     market& season& add&    children&       campaign\\
high&   attack& president&      stock&  player& tablespoon&     women&  election\\
program&        military&       group&  business&       play&   oil&    show&   plan\\
number& palestinian&    leader& billion&        point&  teaspoon&       book&   tax\\
need&   war&    country&        analyst&        run&    water&  family& public\\
part&   administration& political&      firm&   right&  pepper& look&   zzz\_washington\\
problem&        zzz\_white\_house&      american&       sales&  home&   large&  hour&   member\\
com&    games&  law&    cost&   won&    food&   small&  nation\\
	\bottomrulec
   \end{tabular}%
   }
   \vspace{1em}
   
   \rowcolors{1}{white}{white}
   \begin{tabular}{lc}
	  \toprulec
 					 & Total Cum. Variance \\
	  \midrulec
        \textsc{TPower} & $45.4014$ \\
        \textsc{SpanSPCA} & $46.0075$ \\
        \textsc{SPCABiPart} & $\mathbf{47.7212}$ \\
        \bottomrulec
   \end{tabular}
   
   \caption{
  \textsc{ BagOfWords:NyTimes} dataset~\cite{Lichman:2013}.
   We run various SPCA algorithms for $k=8$ components (topics)
   and ${s=10}$ nonzero entries per component. 
   The table lists the words selected by each component (words corresponding to higher magnitude entries appear higher in the topic).
   Our algorithm was configured to use a rank-$4$ approximation of the input data.
   }
   \label{bow:nytimes:table-topics-3algos-k8-s10}
\end{table}   

 \renewcommand{\arraystretch}{1.1}
\begin{table}[tbph!]
	\setcounter{magicrownumbers}{0}
	\fontsize{7.0}{8}\selectfont
	\setlength{\tabcolsep}{0.85\tabcolsep}
	\centering
	\rowcolors{2}{white}{mustard!20!white}
	
	\noindent\makebox[\textwidth]{%
	\begin{tabular}{%
		>{\hspace{-\tabcolsep plus .5em}{\makebox[1.3em][r]{\rownumber\space}}\hspace{.3em}}l<{}%
		>{}l<{}%
		>{}l<{}%
		>{}l<{}%
		>{}l<{}%
		>{}l<{}%
		>{}l<{}%
		>{}l<{\hspace{-\tabcolsep}}%
	}
	\toprulec
	Topic $1$ &
	Topic $2$ &
	Topic $3$ &
	Topic $4$ &
	Topic $5$ &
	Topic $6$ &
	Topic $7$ &
	Topic $8$ \\
	\midrulec
	\midrulec
	\llap{\smash{\rotatebox[origin=c]{90}{%
      \hspace*{-14\normalbaselineskip}
      \textsc{TPower}}}%
      \hspace*{2.3em}}
	percent&	zzz\_bush&	team&	school&	com&	zzz\_enron&	law&	palestinian\\
company&	zzz\_al\_gore&	game&	student&	women&	firm&	drug&	zzz\_israel\\
million&	zzz\_george\_bush&	season&	program&	book&	deal&	court&	zzz\_israeli\\
companies&	campaign&	player&	children&	web&	financial&	case&	zzz\_yasser\_arafat\\
market&	right&	play&	show&	american&	zzz\_arthur\_andersen&	federal&	peace\\
stock&	group&	games&	public&	information&	chief&	patient&	israelis\\
money&	political&	point&	need&	look&	executive&	system&	israeli\\
business&	zzz\_united\_states&	run&	part&	site&	analyst&	decision&	military\\
government&	zzz\_u\_s&	coach&	family&	zzz\_new\_york&	executives&	bill&	zzz\_palestinian\\
official&	administration&	home&	help&	question&	lay&	member&	zzz\_west\_bank\\
billion&	leader&	win&	job&	number&	investor&	lawyer&	war\\
president&	attack&	won&	teacher&	called&	energy&	doctor&	security\\
plan&	zzz\_white\_house&	night&	country&	find&	investment&	cost&	violence\\
high&	tax&	left&	problem&	found&	employees&	care&	killed\\
fund&	zzz\_washington&	guy&	parent&	ago&	accounting&	health&	talk\\
	\midrulec
	\llap{\smash{\rotatebox[origin=c]{90}{%
      \hspace*{-14\normalbaselineskip}
      \textsc{SpanSPCA}}}%
      \hspace*{2.3em}}
percent&	team&	official&	zzz\_al\_gore&	cup&	show&	public&	night\\
company&	game&	zzz\_bush&	zzz\_george\_bush&	minutes&	com&	member&	big\\
million&	season&	zzz\_united\_states&	campaign&	add&	part&	system&	set\\
companies&	player&	attack&	election&	tablespoon&	look&	case&	film\\
market&	play&	zzz\_u\_s&	political&	teaspoon&	need&	number&	find\\
stock&	games&	palestinian&	vote&	oil&	book&	question&	room\\
business&	point&	military&	republican&	pepper&	women&	job&	place\\
money&	run&	leader&	voter&	water&	family&	told&	friend\\
billion&	right&	zzz\_american&	democratic&	large&	called&	put&	took\\
plan&	win&	war&	school&	sugar&	children&	zzz\_washington&	start\\
government&	coach&	zzz\_israel&	presidential&	serving&	help&	found&	car\\
president&	home&	country&	zzz\_white\_house&	butter&	ago&	information&	feel\\
high&	won&	administration&	law&	chopped&	zzz\_new\_york&	federal&	half\\
cost&	left&	terrorist&	zzz\_republican&	hour&	program&	student&	guy\\
group&	hit&	american&	tax&	pan&	problem&	court&	early\\
	\midrulec
	\llap{\smash{\rotatebox[origin=c]{90}{%
      \hspace*{-14\normalbaselineskip}
      \textsc{SPCABiPart}}}%
      \hspace*{2.3em}}
company&	show&	cup&	team&	percent&	zzz\_al\_gore&	official&	school\\
companies&	home&	minutes&	game&	million&	zzz\_george\_bush&	zzz\_bush&	student\\
stock&	run&	add&	season&	money&	campaign&	government&	children\\
market&	com&	tablespoon&	player&	plan&	right&	president&	women\\
billion&	high&	oil&	play&	business&	election&	zzz\_united\_states&	book\\
zzz\_enron&	need&	teaspoon&	games&	tax&	political&	zzz\_u\_s&	family\\
firm&	look&	pepper&	coach&	cost&	point&	group&	called\\
analyst&	part&	water&	guy&	cut&	leader&	attack&	hour\\
industry&	night&	large&	yard&	job&	zzz\_washington&	zzz\_american&	friend\\
fund&	zzz\_new\_york&	sugar&	hit&	pay&	administration&	country&	found\\
investor&	help&	serving&	played&	deal&	question&	military&	find\\
sales&	left&	butter&	playing&	quarter&	member&	american&	set\\
customer&	put&	chopped&	ball&	chief&	won&	war&	room\\
investment&	ago&	fat&	fan&	executive&	win&	law&	film\\
economy&	big&	food&	shot&	financial&	told&	public&	small\\
	\bottomrulec
   \end{tabular}%
   } 
   \vspace{1em}
   
   \rowcolors{1}{white}{white}
   \begin{tabular}{lc}
	  \toprulec
 					 & Total Cum. Variance \\
	  \midrulec
        \textsc{TPower} &  $48.140645$\\
        \textsc{SpanSPCA} & $48.767864$ \\
        \textsc{SPCABiPart} & $\mathbf{51.873063}$ \\
        \bottomrulec
   \end{tabular}
   \caption{
  \textsc{ BagOfWords:NyTimes} dataset~\cite{Lichman:2013}.
   We run various SPCA algorithms for $k=8$ components (topics)
   and cardinality ${s=15}$ per component. 
   The table lists the words corresponding to each component (words corresponding to higher magnitude entries appear higher in the topic).
   Our algorithm was configured to use a rank-$4$ approximation of the input data.
   }
   \label{bow:nytimes:table-topics-3algos-k8-s15}
\end{table}   

 \renewcommand{\arraystretch}{1.1}
\begin{table}[tbph!]
	\setcounter{magicrownumbers}{0}
	\fontsize{7.0}{8}\selectfont
	\setlength{\tabcolsep}{0.85\tabcolsep}
	\centering
	\rowcolors{2}{white}{mustard!20!white}
	\noindent\makebox[\textwidth]{%
	\begin{tabular}{%
		>{\hspace{-\tabcolsep plus .5em}{\makebox[1.3em][r]{\rownumber\space}}\hspace{.3em}}l<{}%
		>{}l<{}%
		>{}l<{}%
		>{}l<{}%
		>{}l<{}%
		>{}l<{}%
		>{}l<{}%
		>{}l<{\hspace{-\tabcolsep}}%
	}
	\toprulec
	Topic $1$ &
	Topic $2$ &
	Topic $3$ &
	Topic $4$ &
	Topic $5$ &
	Topic $6$ &
	Topic $7$ &
	Topic $8$ \\
	\midrulec
	\midrulec
	\llap{\smash{\rotatebox[origin=c]{90}{%
      \hspace*{-18\normalbaselineskip}
      \textsc{TPower}}}%
      \hspace*{2.3em}}
percent&	zzz\_bush&	team&	school&	com&	zzz\_enron&	drug&	palestinian\\
company&	zzz\_al\_gore&	game&	student&	women&	court&	patient&	zzz\_israel\\
million&	zzz\_george\_bush&	season&	program&	book&	case&	doctor&	zzz\_israeli\\
companies&	campaign&	player&	children&	web&	firm&	cell&	zzz\_yasser\_arafat\\
market&	zzz\_united\_states&	play&	show&	site&	federal&	care&	peace\\
stock&	zzz\_u\_s&	games&	public&	information&	lawyer&	disease&	israelis\\
government&	political&	point&	part&	zzz\_new\_york&	deal&	health&	israeli\\
official&	attack&	run&	family&	www&	decision&	medical&	zzz\_palestinian\\
money&	zzz\_american&	home&	system&	hour&	chief&	test&	zzz\_west\_bank\\
business&	american&	coach&	help&	find&	power&	hospital&	security\\
president&	administration&	win&	problem&	mail&	industry&	research&	violence\\
billion&	leader&	won&	law&	found&	executive&	cancer&	killed\\
plan&	country&	left&	job&	put&	according&	treatment&	talk\\
group&	election&	night&	called&	set&	financial&	study&	meeting\\
high&	zzz\_washington&	hit&	look&	room&	office&	death&	soldier\\
right&	military&	guy&	member&	big&	analyst&	human&	minister\\
fund&	zzz\_white\_house&	yard&	question&	told&	executives&	heart&	zzz\_sharon\\
need&	war&	played&	ago&	friend&	zzz\_arthur\_andersen&	blood&	fire\\
cost&	tax&	start&	teacher&	director&	employees&	trial&	zzz\_ariel\_sharon\\
number&	nation&	playing&	parent&	place&	investor&	benefit&	zzz\_arab\\
	\midrulec
	\llap{\smash{\rotatebox[origin=c]{90}{%
      \hspace*{-18\normalbaselineskip}
      \textsc{SpanSPCA}}}%
      \hspace*{2.3em}}
percent&	team&	zzz\_al\_gore&	attack&	school&	cup&	com&	drug\\
company&	game&	zzz\_bush&	zzz\_united\_states&	student&	minutes&	web&	patient\\
million&	season&	zzz\_george\_bush&	zzz\_u\_s&	children&	add&	site&	cell\\
companies&	player&	campaign&	palestinian&	program&	tablespoon&	information&	doctor\\
market&	play&	election&	military&	family&	oil&	computer&	disease\\
stock&	games&	political&	zzz\_american&	women&	teaspoon&	find&	care\\
business&	point&	tax&	zzz\_israel&	show&	pepper&	big&	health\\
money&	run&	republican&	war&	help&	water&	zzz\_new\_york&	test\\
billion&	win&	zzz\_white\_house&	country&	told&	large&	www&	research\\
government&	home&	vote&	terrorist&	parent&	sugar&	mail&	human\\
president&	won&	law&	american&	problem&	serving&	set&	medical\\
plan&	coach&	administration&	zzz\_taliban&	book&	butter&	put&	study\\
high&	left&	democratic&	zzz\_afghanistan&	job&	chopped&	director&	death\\
group&	night&	voter&	security&	found&	hour&	industry&	cancer\\
official&	hit&	leader&	zzz\_israeli&	friend&	pan&	room&	hospital\\
need&	guy&	public&	nation&	ago&	fat&	small&	treatment\\
right&	yard&	zzz\_republican&	member&	question&	bowl&	car&	scientist\\
part&	played&	presidential&	support&	teacher&	gram&	zzz\_internet&	according\\
cost&	look&	federal&	called&	case&	food&	place&	blood\\
system&	start&	zzz\_washington&	forces&	number&	medium&	film&	heart\\
	\midrulec
	\llap{\smash{\rotatebox[origin=c]{90}{%
      \hspace*{-18\normalbaselineskip}
      \textsc{SPCABiPart}}}%
      \hspace*{2.3em}}
palestinian&	percent&	zzz\_al\_gore&	cup&	school&	team&	company&	official\\
zzz\_israel&	million&	zzz\_bush&	minutes&	right&	game&	companies&	government\\
zzz\_israeli&	money&	zzz\_george\_bush&	add&	group&	season&	market&	president\\
zzz\_yasser\_arafat&	billion&	campaign&	tablespoon&	show&	player&	stock&	zzz\_united\_states\\
peace&	business&	election&	oil&	home&	play&	zzz\_enron&	zzz\_u\_s\\
war&	fund&	political&	teaspoon&	high&	games&	analyst&	attack\\
terrorist&	tax&	zzz\_white\_house&	pepper&	program&	point&	firm&	zzz\_american\\
zzz\_taliban&	cost&	administration&	water&	need&	run&	industry&	country\\
zzz\_afghanistan&	cut&	republican&	hour&	part&	coach&	investor&	law\\
forces&	job&	leader&	large&	com&	win&	sales&	plan\\
bin&	pay&	vote&	sugar&	american&	won&	customer&	public\\
troop&	economy&	democratic&	serving&	look&	left&	price&	zzz\_washington\\
laden&	deal&	presidential&	butter&	help&	night&	investment&	member\\
student&	big&	zzz\_clinton&	chopped&	problem&	hit&	quarter&	system\\
zzz\_pakistan&	chief&	support&	pan&	called&	guy&	executives&	nation\\
product&	executive&	zzz\_congress&	fat&	zzz\_new\_york&	yard&	consumer&	case\\
zzz\_internet&	financial&	military&	bowl&	number&	played&	technology&	federal\\
profit&	start&	policy&	gram&	question&	ball&	share&	information\\
earning&	record&	court&	food&	ago&	playing&	prices&	power\\
shares&	manager&	security&	league&	told&	lead&	growth&	effort\\
	\bottomrulec
   \end{tabular}%
   } 
   \vspace{1em}
   
   \rowcolors{1}{white}{white}
   \begin{tabular}{lc}
	  \toprulec
 					 & Total Cum. Variance \\
	  \midrulec
        \textsc{TPower} &  $50.7686$\\
        \textsc{SpanSPCA} & $52.8117$ \\
        \textsc{SPCABiPart} & $\mathbf{54.8906}$ \\
        \bottomrulec
   \end{tabular}
   \caption{
  \textsc{ BagOfWords:NyTimes} dataset~\cite{Lichman:2013}.
   We run various SPCA algorithms for $k=8$ components (topics)
   and cardinality ${s=20}$ per component. 
   The table lists the words corresponding to each component (words corresponding to higher magnitude entries appear higher in the topic).
   Our algorithm was configured to use a rank-$4$ approximation of the input data.
   }
   \label{bow:nytimes:table-topics-3algos-k8-s20}
\end{table}   

%

\end{document}